\documentclass{article}

\usepackage{microtype}
\usepackage{graphicx}
\usepackage{booktabs} 
\usepackage{amsmath}
\usepackage{amssymb}
\usepackage{enumerate}
\usepackage{amsmath}
\usepackage{amsfonts}
\usepackage{amsthm}
\usepackage{dsfont}
\usepackage{subcaption}
\usepackage{hyperref}

\usepackage{xcolor}


\usepackage[accepted]{icml2020}

\def\bfa{{\boldsymbol a}}

\def\bfe{{\boldsymbol e}}

\def\bfu{{\boldsymbol u}}
\def\bfv{{\boldsymbol v}}
\def\bfw{{\boldsymbol w}}
\def\bfx{{\boldsymbol x}}
\def\bfy{{\boldsymbol y}}
\def\bfz{{\boldsymbol z}}

\def\bfA{{\boldsymbol A}}

\def\bfD{{\boldsymbol D}}

\def\bfI{{\boldsymbol I}}

\def\bfL{{\boldsymbol L}}
\def\bfM{{\boldsymbol M}}

\def\bfP{{\boldsymbol P}}

\def\bfS{{\boldsymbol S}}
\def\bfT{{\boldsymbol T}}

\def\bfV{{\boldsymbol V}}
\def\bfW{{\boldsymbol W}}
\def\bfX{{\boldsymbol X}}

\def\bfZ{{\boldsymbol Z}}

\newtheorem{theorem}{Theorem}
\newtheorem{lemma}{Lemma}
\newtheorem{definition}{Definition}

\begin{document}
\twocolumn[
\icmltitle{Fast Learning of Graph Neural Networks with Guaranteed Generalizability: One-hidden-layer Case}



\icmlsetsymbol{equal}{*}

\begin{icmlauthorlist}
	\icmlauthor{Shuai Zhang}{to}
	\icmlauthor{Meng Wang}{to}
	\icmlauthor{Sijia Liu}{goo}
	\icmlauthor{Pin-Yu Chen}{ed}
	\icmlauthor{Jinjun Xiong}{ed}
\end{icmlauthorlist}

\icmlaffiliation{to}{Dept. of Electrical, Computer, and Systems Engineering,   Rensselaer Polytechnic Institute,  NY, USA}
\icmlaffiliation{goo}{MIT-IBM Waston AI Lab, Cambridge, MA, USA}
\icmlaffiliation{ed}{IBM Thomas J. Watson Research Center, Yorktown Heights, NY, USA}

\icmlcorrespondingauthor{Shuai Zhang}{zhangs21@rpi.edu}
\icmlcorrespondingauthor{Meng Wang}{wangm7@rpi.edu}
\icmlcorrespondingauthor{Sijia Liu}{Sijia.Liu@ibm.com}
\icmlcorrespondingauthor{Pin-Yu Chen}{Pin-Yu.Chen@ibm.com}
\icmlcorrespondingauthor{Jinjun Xiong}{jinjun@us.ibm.com}

\icmlkeywords{Machine Learning, ICML}

\vskip 0.3in
]
\printAffiliationsAndNotice{}
\begin{abstract}
Although graph neural networks (GNNs) have made great progress recently on learning from graph-structured data in practice, their theoretical guarantee on generalizability  remains elusive in the literature. In this paper, we provide a theoretically-grounded generalizability analysis of GNNs with one hidden layer for both regression and binary classification problems. Under the assumption that there exists a ground-truth GNN model (with zero generalization error), the objective of GNN learning is to estimate the ground-truth GNN parameters from the training data. 
To achieve this objective, we propose a learning algorithm that is built on tensor initialization and accelerated gradient descent.
We then show that the proposed 
learning algorithm converges to the ground-truth GNN model for the regression problem, and to a model sufficiently close to the ground-truth for the binary classification problem. Moreover, for both cases, the convergence rate of the proposed learning algorithm is proven to be linear and faster than the vanilla gradient descent algorithm. We further explore the relationship between the sample complexity of GNNs and their underlying graph properties. Lastly, we provide numerical experiments to demonstrate the validity of our analysis and the effectiveness of the proposed learning algorithm for GNNs.

\end{abstract}	

\section{Introduction}

 Graph neural networks (GNNs)   \cite{GMS05,SGAHM08} have demonstrated great practical performance in learning with graph-structured data. 
Compared with traditional (feed-forward) neural networks, GNNs introduce an additional neighborhood aggregation layer, where the features of each node are aggregated with the features of  the neighboring nodes \cite{GSRVD17,XLTSTKKJ18}. GNNs have a better learning  performance in
  applications including physical reasoning \cite{BPLR16}, recommendation systems \cite{YHCEHL18},  biological analysis \cite{DMIBH15}, and compute vision \cite{MDSG06}. 
Many variations of GNNs, such as Gated Graph Neural Networks (GG-NNs) \cite{LTBZ16}, Graph Convolutional Networks (GCNs) \cite{KW17} and   others  \cite{HYL17, VCCRLB18}     have recently  been  developed to enhance the learning performance on graph-structured data.



Despite the numerical success, the theoretical understanding of the generalizability of the learned GNN models to the testing data is very limited. Some works \cite{XLTSTKKJ18,XHLJ19,WSZFYW19,MRFMW19} analyze the expressive power of GNNs but do not provide learning algorithms that are guaranteed to return the desired GNN model with proper parameters.  Only    few works  \cite{DHPSWX19,VZ19} explore the generalizabilty of GNNs, under the one-hidden-layer setting, as  even with one hidden layer  the models are  already    complex to analyze,  not to mention the multi-layer setting. Both works show that for regression problems, the generalization gap of the training error and the testing error decays with respect to the number of training samples at a sub-linear rate. The  analysis in   Ref.~\cite{DHPSWX19} analyzes GNNs through Graph Neural Tangent Kernels (GNTK)  
{which is an extension of Neural Tangent kernel (NTK) model \cite{JGH18,CB18, NS19,CG20}.   When over-parameterized, this line of works   shows sub-linear convergence to the global optima of the learning problem with assuming enough filters in the hidden layer \cite{JGH18,CB18}. }  
Ref.~\cite{VZ19} only applies to the case of one single filter in the hidden layer, and the activation function  needs to be smooth, excluding the popular ReLU activation function. 
Moreover, refs.~\cite{DHPSWX19,VZ19} do not consider classification and do not discuss {if a small training error and a small generalization error can be achieved simultaneously.}


 One recent line of research analyzes the generalizability of neural networks (NNs) from the perspective of model estimation \cite{BG17,DLTPS17,DLT17,FCL19,GLM17,SS17,ZSJB17,ZSD17}. These works assume   
the existence of a ground-truth NN model with some unknown parameters that maps the input features to the output labels for both training and testing samples. Then the learning objective is to estimate  the ground-truth model parameters from the training data, and this ground-truth model is guaranteed to have  a zero generalization                                             error on the testing data. The analyses are  focused on one-hidden-layer NNs, assuming the input features following the Gaussian   distribution  \cite{Sh18}.  
 If one-hidden-layer NNs only have one filter in the hidden layer, gradient descent (GD) methods can learn the ground-truth parameters with a high probability \cite{DLTPS17,DLT17,BG17}. When there are multiple filters in the hidden layer, the learning problem is much more challenging to solve because of the common spurious local minima \cite{SS17}. \cite{GLM17} revises the learning objective and shows the global convergence of GD   to the global optimum of the new learning problem. 
 The required number for training samples, referred to as the sample complexity in this paper, is a high-order polynomial function of the   model size. A few works  \cite{ZSJB17,ZSD17,FCL19} study a learning algorithm that initializes using the  tensor initialization method \cite{ZSJB17} and iterates using GD. This algorithm is proved to  converge  to the ground-truth model parameters with a zero generalization error for the one-hidden-layer NNs with multiple filters, and the sample complexity is shown to be linear in the model size. All these works only consider NNs rather than GNNs.

 \textbf{Contributions.} 
 This paper provides the first algorithmic design and theoretical analysis to learn a GNN model with a zero generalization error, assuming the existence of such a ground-truth model. 
  We study GNNs  in semi-supervised learning,  
 and the results apply to  both regression and binary classification problems. 
Different from NNs, each output label on the graph depends on multiple neighboring features in GNNs, and such dependence significantly complicates the analysis of the learning problem.  Our proposed algorithm uses the tensor initialization \cite{ZSJB17} and updates by accelerated gradient descent (AGD). We prove that with a sufficient number of training samples, our algorithm returns the ground-truth model with the zero generalization error for regression problems. For binary classification problems, our algorithm returns a model sufficiently close to the ground-truth model, and its distance to the ground-truth model decays to zero as the number of samples increases. Our algorithm converges linearly, with a rate that is proved to be faster than that of vanilla GD. We quantifies the dependence of the sample complexity on the model size and the underlying graph structural properties. The required number of samples is linear in the model size. It is also a polynomial function of the graph degree and the largest singular value of the normalized adjacency matrix. Such dependence of the sample complexity on graph parameters is exclusive to GNNs and does not exist in NNs. 

The rest of the paper is organized as follows.  Section 2 introduces the problem formulation. 
The algorithm is presented in Section 3,  and Section 4 summarizes the major theoretical results. Section 5 shows the numerical results, and Section 6 concludes the paper. All the proofs are   in the supplementary materials.

{\it Notation:} Vectors are bold lowercase, matrices and tensors are bold uppercase. Also, scalars are in normal font, and sets are in calligraphy and blackboard bold font. For instance,  $\bfZ$ is a matrix, and $\bfz$ is a vector. $z_i$ denotes the $i$-th entry of $\bfz$, and $Z_{ij}$ denotes the $(i,j)$-th entry of $\bfZ$. $\mathcal{Z}$ stands for a regular set. Special sets $\mathbb{N}$ (or $\mathbb{N}^+$), $\mathbb{Z}$ and $\mathbb{R}$ denote the sets of all natural numbers (or positive natural numbers), all integers and all real numbers, respectively.
Typically, [$Z$] stands for the set of $\{ 1, 2, \cdots, Z  \}$ for any number $\mathbb{N}^+$.
$\bfI$ and $\bfe_i$ denote the identity matrix and the $i$-th standard basis vector.
$\bfZ^T$ denotes the transpose of $\bfZ$, similarly for $\bfz^T$. 
$\|\bfz\|$ denotes the $\ell_2$-norm of a vector $\bfz$, and $\|\bfZ\|_2$ and $\|\bfZ\|_F$ denote the spectral norm and Frobenius norm of  matrix $\bfZ$, respectively.
We use $\sigma_{i}(\bfZ)$ to denote the $i$-th largest singular value of $\bfZ$.
Moreover, the outer product of a group of vectors $\bfz_i\in \mathbb{R}^{n_i}, i \in [l]$, is defined as $\bfT=\bfz_1\otimes \cdots\otimes \bfz_l \in \mathbb{R}^{n_1  \times \cdots\times n_l}$ with $T_{j_1,\cdots, j_l}= (\bfz_{1})_{j_1}\cdots(\bfz_{l})_{j_l}$. 
\section{Problem Formulation}\label{Section: pf_1}
Let $\mathcal{G}=\{ \mathcal{V}, \mathcal{E} \}$ denote an un-directed graph, where $\mathcal{V}$ is the set of nodes with size $|\mathcal{V}|=N$ and $\mathcal{E}$  is the set of edges. Let $\delta$ and $\delta_{\textrm{ave}}$ denote the maximum and average node degree  of $\mathcal{G}$, respectively.  Let $\boldsymbol{\tilde{A}}\in \{0,1\}^{N\times N}$ be the adjacency matrix of  $\mathcal{G}$ with added self-connections. Then,   $\tilde{A}_{i,j}=1$ if and only if  there exists an edge between node $v_i$ and node $v_j$, $i,j \in [N]$, and $\tilde{A}_{i,i}=1$ for all $i\in [N]$.  Let  $\bfD$ be the degree matrix with diagonal elements $D_{i,i} = \sum_{j}\tilde{A}_{i,j}$ and zero entries otherwise. $\bfA$     denotes  the normalized adjacency matrix with $\bfA= \bfD^{-1/2}\boldsymbol{\tilde{A}}\bfD^{-1/2}$, and $\sigma_1(\bfA)$ is the largest singular value of   $\bfA$.  

 Each node $v_n$ in $\mathcal{V}$  $(n=1,2,\cdots, N)$ corresponds to an input feature vector, denoted by $\bfx_n\in \mathbb{R}^{d}$, and a label $y_n\in\mathbb{R}$. $y_n$ depends on not only $\bfx_n$ but also all $\bfx_j$ where $v_j$ is a neighbor of $v_n$. Let $\bfX = \begin{bmatrix}
\bfx_1, \bfx_2, \cdots, \bfx_N
\end{bmatrix}^T\in \mathbb{R}^{N\times d}$ denote the feature matrix.  Following the analyses of NNs \cite{Sh18}, we assume $\bfx_n$'s are i.i.d. samples from the standard Gaussian distribution $\mathcal{N}(\boldsymbol{0}, \bfI_{d})$.    
 For GNNs,  we consider the \ typical  semi-supervised learning  problem setup.  Let $\Omega\subset [N]$ denote the set of node indices with known labels, and let  $\Omega^c$ be its complementary set.  
The objective of the GNN is to predict $y_i$ for every $i$ in $\Omega^c$. 

Suppose there exists a one-hidden-layer GNN that maps  node  features to labels, as shown in Figure \ref{Figure: GNN}. There are $K$ filters\footnote{We assume $K\leq d$   to simplify the representation of the analysis, while the result still holds for $K>d$ with   minor changes. }
 in    the hidden layer, and the weight matrix is denoted by 
	 $\bfW^* = \begin{bmatrix} 
	\bfw_1^*&\bfw_2^*&\cdots&\bfw_K^*
	\end{bmatrix}\in \mathbb{R}^{d\times K}$ . The hidden layer is followed by a pooling layer. Different from NNs, GNNs have an additional aggregation layer with $\bfA$ as the aggregation factor matrix \cite{KW17}.   For every node $v_n\in\mathcal{V}$, the input to the hidden layer is $\bfa_n^T\bfX$, where $\bfa_n^T$ denotes the $n$-th row of $\bfA$. When there is no edge in $\mathcal{V}$, $\bfA$  is reduced to the identify matrix, and a GNN model is reduced to an NN model.

	\begin{figure}[h]
		\centering
		\includegraphics[width=0.44\textwidth]{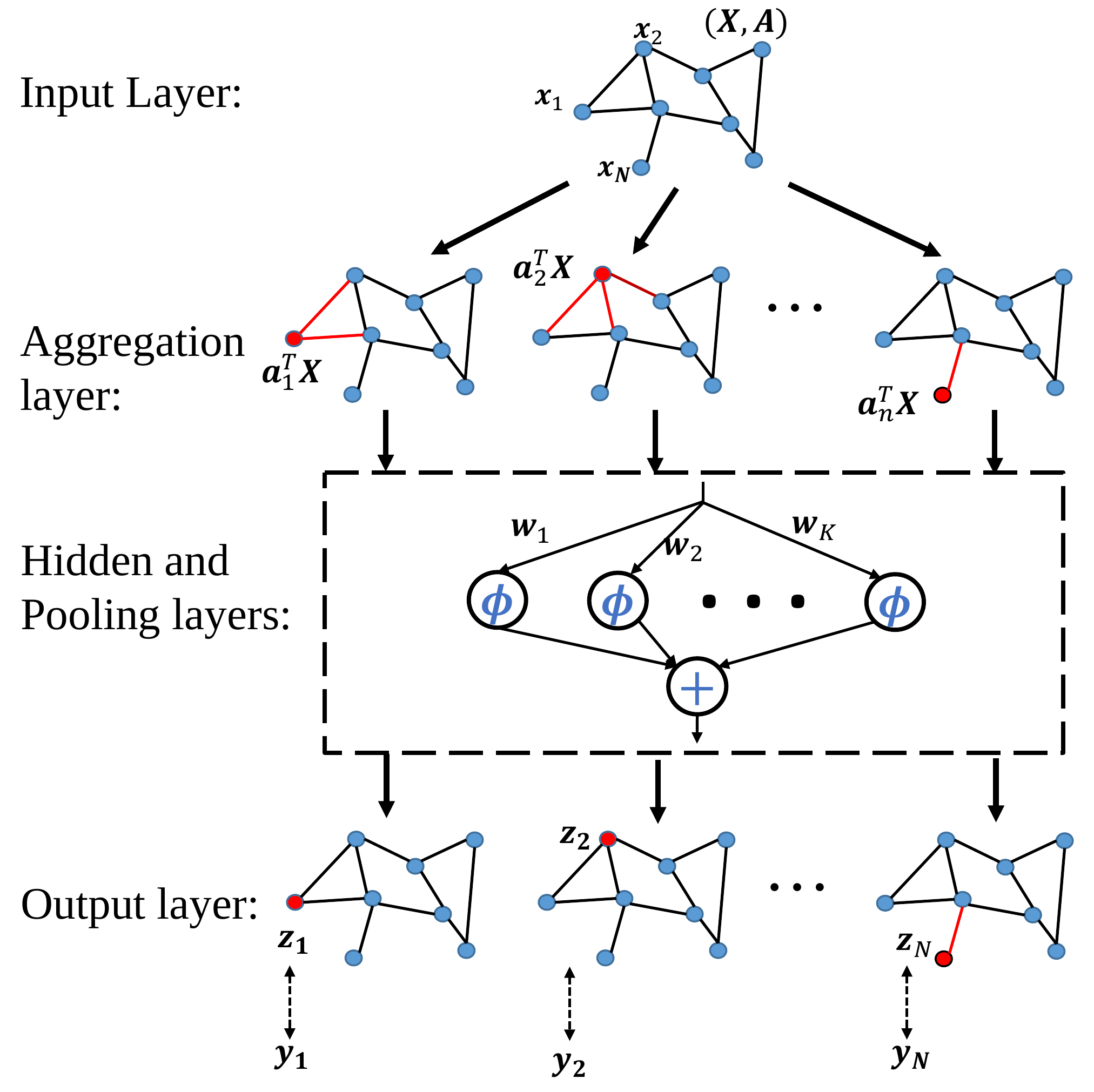}
		\caption{Structure of the graph neural network}
		\label{Figure: GNN}
	\end{figure}	
 
The output $z_n$ of the node $v_n$ of the GNN is 
	 	\begin{equation}\label{eqn: y_n}
	 \begin{gathered}
	z_n= g(\bfW^*;\bfa_n^T\bfX) = \frac{1}{K}\sum_{j=1}^{K}\phi(\bfa_n^T\bfX{\bfw_j^*}), \forall n \in[N],
	 \end{gathered}
	 \end{equation}
	 where $\phi(\cdot)$ is  the activation function. We consider both regression and binary classification in this paper. For regression,  $\phi(\cdot)$ is the ReLU function\footnote{Our result can be extended to the sigmoid activation function with minor changes.} 
	 $\phi(x)=\max\{x , 0  \}$, 
	 and  
	 $y_n = z_n$. 
	 For binary classification, we consider the sigmoid activation function where $\phi(x)=1/(1+e^{-x})$. Then $y_n$ is  a binary variable generated from $z_n$ by  $\text{Prob}\{y_n=1\}=z_n$, and $\text{Prob}\{y_n=0\}=1-z_n$.

Given $\bfX$, $\bfA$, and $y_i$ for all $i\in \Omega$, the learning objective is to  estimate $\bfW^*$, which  is assumed to have a zero generalization error. The  {training objective} is to minimize the empirical risk function,
	\begin{equation}\label{eqn: optimization}
	\min_{\bfW\in\mathbb{R}^{d\times K}}  \hat{f}_{\Omega}(\bfW):=\frac{1}{|\Omega|}\sum_{n\in \Omega} \ell(\bfW;\bfa_n^T\bfX),
	\end{equation}
	where $\ell$  is the loss function. 
For regression,	 we use the squared loss function
, and \eqref{eqn: optimization} is written as
	\begin{equation}\label{eqn:linear_regression}
	\min_{\bfW}: \quad \hat{f}_{\Omega}(\bfW)=\frac{1}{2|\Omega|}\sum_{n\in \Omega}\Big| y_n - g(\bfW;\bfa_n^T\bfX) \Big|^2.
	\end{equation}
For classification, we use the cross entropy loss function, and \eqref{eqn: optimization} is written as
	\begin{equation}\label{eqn: classification}
		\begin{split}
		\min_{\bfW}: \quad \hat{f}_{\Omega}(\bfW)
		=&\frac{1}{|\Omega|}\sum_{n\in \Omega} -y_n \log \big(g(\bfW;\bfa_n^T\bfX)\big)\\
		-(1&-y_n) \log \big(1-g(\bfW;\bfa_n^T\bfX)\big).
		\end{split}
	\end{equation}

	Both \eqref{eqn:linear_regression} and \eqref{eqn: classification} are nonconvex due to the nonlinear function $\phi$. 
 Moreover,   while 	 $\bfW^*$ is a global minimum of \eqref{eqn:linear_regression},   $\bfW^*$ is	 not necessarily  a global minimum of \eqref{eqn: classification}\footnote{$\bfW^*$ is a global minimum if replacing all $y_n$ with $z_n$ in \eqref{eqn: classification}, but $z_n$'s are unknown in practice. }. Furthermore, compared with NNs, the additional difficulty of analyzing the generalization performance of GNNs lies in the fact that each label $y_n$ is correlated with all the input features that are connected to node $v_n$, as shown in the risk functions in \eqref{eqn:linear_regression} and \eqref{eqn: classification}.

Note that our model with $K=1$ is equivalent to  the one-hidden-layer convolutional network (GCN)  \cite{KW17} for binary classification. 
To study the  multi-class classification problem,  the GCN model in  \cite{KW17} has $M$ nodes for $M$ classes in the second layer and employs the softmax activation function at the output. Here, our model has a pooling layer and uses the sigmoid function for binary classification.
Moreover,
 we consider both regression and binary classification problems   using the same model architecture with different activation functions. 
We consider one-hidden-layer networks following the state-of-art works in NNs \cite{DLTPS17,DLT17,BG17,ZSJB17,ZSD17,FCL19} and GNNs \cite{DHPSWX19,VZ19} because the theoretical analyses are extremely complex and still being developed for multiple hidden layers.

\section{Proposed Learning Algorithm}
 

  In what follows, we illustrate the algorithm used for solving problems \eqref{eqn:linear_regression} and \eqref{eqn: classification}, summarized in Algorithm \ref{Alg}. Algorithm \ref{Alg} has two   components: a) accelerated gradient descent and b) tensor initialization. We initialize  $\bfW$  using the tensor initialization method    \cite{ZSJB17} with minor modification for GNNs  and update iterates by the Heavy Ball method \cite{P87}.


	\textbf{Accelerated gradient descent}. Compared with the vanilla GD method,
  each iterate in the Heavy Ball method is updated along the combined directions of both the gradient and the moving direction of the previous iterates. 
 	 Specifically, one computes the difference of the  estimates in the previous two iterations, and the difference is scaled by a constant $\beta$. This additional  momentum term   is added to the gradient descent update. When $\beta$ is $0$, AGD reduces to GD.
	 
	During each iteration, a fresh subset 
	of data is applied to estimate the gradient.  The assumption of disjoint subsets   is  standard     to simplify the analysis \cite{ZSD17,ZSJB17} but  not necessary in numerical experiments. 
	\begin{algorithm}[h]
		\caption{Accelerated Gradient Descent Algorithm with Tensor Initialization}
		\label{Alg}
		\begin{algorithmic}[1]
			\STATE \textbf{Input:} $\bfX$, $\big\{y_n\big\}_{n\in\Omega}$, $\bfA$, 
			the  step size $\eta$, the momentum constant $\beta$, and the  error tolerance  $\varepsilon$;
			\STATE \textbf{Initialization:} Tensor   Initialization via Subroutine 1;
			\STATE {Partition} ${\Omega}$ into $T=\log(1/\varepsilon)$ disjoint subsets, denoted as $\{{\Omega}_t\}_{t=1}^T$;
			\FOR  {$t=1, 2, \cdots, T$}		
			\STATE $\bfW^{(t+1)}=\bfW^{(t)}-\eta\nabla \hat{f}_{{\Omega}_t}(\bfW^{(t)}) +\beta(\bfW^{(t)}-\bfW^{(t-1)})$
			\ENDFOR
		\end{algorithmic}
	\end{algorithm}
	
	
  	\textbf{Tensor initialization}.  The main idea of the tensor initialization method \cite{ZSJB17} is to utilize the homogeneous property of an activation function such as ReLU to estimate the magnitude and direction separately for each $\bfw_j^*$ with $j\in[K]$. 
   A non-homogeneous function can be approximated by piece-wise linear functions,  if the function is strictly monotone with lower-bounded derivatives \cite{FCL19}, like the sigmoid function. Our  initialization is similar to those in \cite{ZSJB17,FCL19} for NNs with some definitions {are changed} to handle the graph structure, and the initialization process is summarized in Subroutine 1.   
   
Specifically, following \cite{ZSJB17},  we define a special outer product, denoted by $\widetilde{\otimes}$, such that for any vector $\bfv\in \mathbb{R}^{d_1}$ and $\bfZ\in \mathbb{R}^{d_1\times d_2}$,   
 \begin{equation}
 	\bfv\widetilde{\otimes} \bfZ=\sum_{i=1}^{d_2}(\bfv\otimes \bfz_i\otimes \bfz_i +\bfz_i\otimes \bfv\otimes \bfz_i + \bfz_i\otimes \bfz_i\otimes \bfv),
 \end{equation} 
 where $\otimes$ is the outer product and $\bfz_i$ is the $i$-th column of $\bfZ$.
 Next, we define
 \footnote{$\mathbb{E}_{\bfX}$ stands for the expectation over the distribution of random variable $\bfX$.}
\begin{equation}\label{eqn: M_1}
\bfM_{1} = \mathbb{E}_{\bfX}\{y \bfx \} \in \mathbb{R}^{d},
\end{equation}
\begin{equation}\label{eqn: M_2}
\bfM_{2} = \mathbb{E}_{\bfX}\Big\{y\big[(\bfa_n^T\bfX)\otimes (\bfa_n^T\bfX)-\bfI\big]\Big\}\in \mathbb{R}^{d\times d},
\end{equation}
\begin{equation}\label{eqn: M_3}
\bfM_{3} = \mathbb{E}_{\bfX}\Big\{y\big[(\bfa_n^T\bfX)^{\otimes 3}- (\bfa_n^T\bfX)\widetilde{\otimes} \bfI  \big]\Big\}\in \mathbb{R}^{d\times d \times d},
\end{equation}
where $\bfz^{\otimes 3} := \bfz \otimes \bfz \otimes \bfz$. \textcolor{black}{The tensor $M_3$ is used to identify the directions of $\{w_j^*\}_{j=1}^K$. $M_1$ depends on both the magnitudes and directions of $\{w_j^*\}_{j=1}^K$. We will sequentially estimate the directions  and magnitudes of
$\{w_j^*\}_{j=1}^K$ from $M_3$ and $M_1$. The matrix $M_2$ is  used to identify
the subspace spanned by $w_j^*$. We will project to this subspace to reduce
the computational complexity of decomposing $M_3$.}
\floatname{algorithm}{Subroutine}
 \setcounter{algorithm}{0}
\begin{algorithm}[t]
	\caption{Tensor Initialization Method}\label{Alg: initia}
	\begin{algorithmic}[1]
		\STATE \textbf{Input:}  $\bfX$, $\big\{y_n\big\}_{n\in\Omega}$ and $\bfA$;\\
		\STATE Partition $\Omega$ into three disjoint subsets $\Omega_1$, $\Omega_2$, $\Omega_3$;\\
		\STATE Calculate $\widehat{\bfM}_{1}$, $\widehat{\bfM}_{2}$ {following \eqref{eqn: M_1}, \eqref{eqn: M_2}} using $\Omega_1$, $\Omega_2$, respectively;\\
		\STATE Estimate $\widehat{\bfV}$  by orthogonalizing the eigenvectors with respect to the $K$ largest eigenvalues of  $\widehat{\bfM}_{2}$;\\
		\STATE Calculate $\widehat{\bfM}_{3}(\widehat{\bfV},\widehat{\bfV},\widehat{\bfV})$ {using \eqref{eqn: M_3_v1}} through $\Omega_3$;\\
		\STATE Obtain $\{ \widehat{\bfu}_j \}_{j=1}^K$ via {tensor decomposition method \cite{KCL15}};\\
		\STATE Obtain $\widehat{\boldsymbol{\alpha}}$ by solving  optimization problem $\eqref{eqn: int_op}$;\\
		\STATE \textbf{Return:} $\bfw^{(0)}_j=\widehat{\alpha}_j\widehat{\bfV}\widehat{\bfu}_j$, {$j=1,...,K$.}
	\end{algorithmic}
\end{algorithm} 

\textcolor{black}{Specifically, the values of $\bfM_1$, $\bfM_2$ and $\bfM_3$ are all estimated through samples, and let $\widehat{\bfM}_{1}$, $\widehat{\bfM}_{2}$, $\widehat{\bfM}_{3}$ denote the corresponding estimates of these high-order momentum.}
Tensor decomposition method \cite{KCL15} provides the estimates of \textcolor{black}{the vectors $\bfw_j^*/\|\bfw_j^*\|_2$ 
from $\widehat{\bfM}_{3}$, and the estimates are denoted as $\widehat{\overline{\bfw}}_{j}^*$.}

However, the computation complexity of estimate through $\widehat{\bfM}_{3}$ depends on poly($d$). To reduce the computational complexity of tensor decomposition, $\widehat{\bfM}_{3}$ is in fact first projected to a lower-dimensional tensor \cite{ZSJB17} through a matrix $\widehat{\bfV}\in \mathbb{R}^{d\times K}$. \textcolor{black}{$\widehat{\bfV}$ is the estimation of matrix $\bfV$ and can be computed from the right singular vectors of $\widehat{\bfM}_{2}$.} 
\textcolor{black}{
The column vectors of ${\bfV}$ form a basis for the subspace spanned by $\{\bfw_{j}^* \}_{j=1}^{K}$, which indicates that $\bfV\bfV^T\bfw_j^* = \bfw_j^*$ for any $j\in[K]$. }
Then, from \eqref{eqn: M_3}, 
$\bfM_{3}(\widehat{\bfV}, \widehat{\bfV}, \widehat{\bfV})\in\mathbb{R}^{K\times K \times K}$  is defined as
 \begin{equation}\label{eqn: M_3_v1}
 \begin{split}
 	 &\bfM_{3}(\widehat{\bfV}, \widehat{\bfV}, \widehat{\bfV}) 
 	:= \mathbb{E}_{\bfX} \Big\{ y\big[(\bfa_n^T\bfX\widehat{\bfV})^{\otimes 3}
 	   - (\bfa_n^T\bfX\widehat{\bfV})\widetilde{\otimes} \bfI  \big]\Big\}.
\end{split}
\end{equation}
 
Similar to the case of $\widehat{\bfM}_{3}$, by applying the tensor decomposition method in $\widehat{\bfM}_{3}(\widehat{\bfV}, \widehat{\bfV}, \widehat{\bfV})$, one can obtain a series of normalized vectors, denoted as $\{\widehat{\bfu}_j \}_{j=1}^K \in \mathbb{R}^{K}$, which are the estimates of $\{\bfV^T\overline{\bfw}_j^*\}_{j=1}^K$.
Then, $\widehat{\bfV}\widehat{\bfu}_j$ is an estimate of $\overline{\bfw}_j^*$ since $\overline{\bfw}^*_j$ lies in the column space of $\bfV$ with $\bfV\bfV^T\overline{\bfw}_j^*=\overline{\bfw}_j^*$. 

From \cite{ZSJB17}, \eqref{eqn: M_1} can be written as 
\begin{equation}\label{eqn: M_1_2}
\bfM_1 = \sum_{j=1}^{K} \psi_1(\overline{\bfw}_{j}^*){\|\bfw_j^*\|_2{\bar{\bfw}}_j^*},
\end{equation}
where $\psi_1$ depends on the distribution of $\bfX$. 
Since the distribution of $\bfX$ is known, the values of $\psi(\widehat{\overline{\bfw}}^*_j)$ can be calculated exactly. Then, the magnitudes of $\bfw_j^*$'s are estimated through solving the following optimization problem:
\begin{equation}\label{eqn: int_op}
	\widehat{\boldsymbol{\alpha}}=\arg\min_{\boldsymbol{\alpha}\in\mathbb{R}^K}:\quad  \Big|\widehat{\bfM}_{1} - \sum_{j=1}^{K}\psi(\widehat{\overline{\bfw}}^*_j) \alpha_j \widehat{\overline{\bfw}}^*_{j}\Big|.
\end{equation}
 Thus,  $\bfW^{(0)}$ is given as 
 $\begin{bmatrix}
 \widehat{\alpha}_1\widehat{\overline{\bfw}}^*_{1}, &\cdots,& \widehat{\alpha}_K\widehat{\overline{\bfw}}^*_{K}
 \end{bmatrix}$.


	\section{Main Theoretical Results}
	
	
	Theorems 1 and 2 state our major results about the GNN model  for regression  and binary classification, respectively. Before formally presenting the results, we first summarize the key findings as follows. 
	
	\textbf{1. Zero generalization error of the learned model.} Algorithm ~\ref{Alg} can return $\bfW^*$ exactly for   regression (see \eqref{eqn: linear_convergence_lr})   and approximately for binary classification (see \eqref{eqn: linear_convergence2_cl}). 
	Specifically, since $\bfW^*$ is {often} not  a solution to \eqref{eqn: classification}, 
	Algorithm~\ref{Alg} returns a critical point $\widehat{\bfW}$ that is sufficiently close to $\bfW^*$, and the distance decreases with respect to the number of samples in the order of $\sqrt{1/|\Omega|}$. 
	Thus, with a sufficient number of samples, $\widehat{\bfW}$ {will be close to $\bfW^*$ and}  achieves a zero generalization error approximately for binary classification.  Algorithm~\ref{Alg} always returns $\bfW^*$ exactly for   regression, a zero \textcolor{black}{generalization} error is thus achieved.
	


\textbf{2. 	Fast linear convergence of  Algorithm~\ref{Alg}.}  Algorithm~\ref{Alg} is proved to converge linearly to $\bfW^*$ for regression and $\widehat{\bfW}$ for classification, as shown in \eqref{eqn: linear_convergence_lr} and \eqref{eqn:converge2}. That means the distance of the estimate during the iterations to $\bfW^*$ (or $\widehat{\bfW}$) decays exponentially. Moreover,  Algorithm~\ref{Alg} converges faster than the vanilla GD. The rate of convergence is $1-\Theta\big(\frac{1}{\sqrt{K}}\big)$
for regression
\footnote{$f(d)=O(g(d))$ means that if for some constant $C>0$,  $f(d)\leq Cg(d)$ holds when $d$ is sufficiently large.   $f(d)=\Theta(g(d))$ means that   for some constants $c>0$ and $C>0$,  $cg(d)\leq f(d)\leq Cg(d)$ holds when $d$ is sufficiently large.}
and $1-\Theta\big(\frac{1}{K}\big)$ for classification, where $K$ is the number of filters in the hidden layer. In comparison, the convergence rates of GD are $1-\Theta\big(\frac{1}{K}\big)$ and $1-\Theta\big(\frac{1}{K^2}\big)$, respectively. Note that a smaller value of the rate of convergence corresponds to faster convergence. 
 We remark that this is the first theoretical guarantee of  AGD methods for learning GNNs.

\textbf{3. Sample complexity analysis.} $\bfW^*$ can be estimated exactly for regression and approximately for classification, provided that the number of  samples is in the order of $(1+\delta^2) \textrm{poly}(\sigma_1(\bfA), K) d\log N\log(1/\varepsilon)$, as shown in \eqref{eqn: sample_complexity_lr} and \eqref{eqn: sample_complexity_cl}, where $\varepsilon$ is the desired estimation error tolerance. $\bfW^*$ has $Kd$ parameters,  where $K$ is the number of nodes in the hidden layer, and $d$ is the feature dimension.  Our sample complexity is order-wise optimal with respect to $d$  and  only logarithmic with respect to the total number of features $N$. 
{We further show that} the sample complexity is also positively   associated  with $\sigma_1(\bfA)$ and $\delta$.  That characterizes the relationship between the sample complexity and graph structural properties.  From Lemma \ref{Lemma: sigma_1}, we know that given $\delta$, $\sigma_1(\bfA)$ is positively correlated with the average node degree $\delta_{\text{ave}}$. Thus, the required number of samples increases when the maximum and average degrees of the graph increase. That coincides with the intuition that more edges in the graph corresponds to the stronger dependence of the labels on neighboring features, {thus requiring} more samples to learn these dependencies. Our sample complexity quantifies this intuition explicitly. 
 
Note that the graph structure affects this bound only through $\sigma_1(\bfA)$ and $\delta$. Different graph structures may require a similar number of samples to estimate $\bfW^*$, as long as they have similar $\sigma_1(\bfA)$ and $\delta$. We will verify this property on different graphs numerically in Figure~\ref{Figure: N_vs_graph}.
  
        \begin{lemma}\label{Lemma: sigma_1}
 	Give an un-directed graph $\mathcal{G}=\{\mathcal{V}, \mathcal{E} \}$ and the   normalized adjacency matrix $\bfA$ as defined in Section \ref{Section: pf_1}, 
 	the largest singular value $\sigma_1(\bfA)$ of $\bfA$ satisfies
		\begin{equation}
			\frac{ 1+\delta_{\textrm{ave}} }{1+ \delta_{\max} } \le \sigma_1(\bfA)\le 1, 
		\end{equation}
 		where $\delta_{\text{ave}}$ and $\delta$ are the average and maximum node degree, respectively. 
	\end{lemma}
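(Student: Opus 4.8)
The plan is to use the symmetry of $\bfA$ and reduce both inequalities to estimating a single Rayleigh quotient. Since $\boldsymbol{\tilde A}$ is symmetric and $\bfD$ is diagonal, $\bfA=\bfD^{-1/2}\boldsymbol{\tilde A}\bfD^{-1/2}$ is symmetric; hence its singular values are the absolute values of its eigenvalues and
\begin{equation*}
\sigma_1(\bfA)=\|\bfA\|_2=\max_{\bfx\ne\bfzero}\frac{|\bfx^T\bfA\bfx|}{\bfx^T\bfx}.
\end{equation*}
The upper bound will come from bounding this ratio uniformly over $\bfx$, and the lower bound from evaluating it at the single test vector $\bfx=\bfone$.

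For the upper bound, substitute $u_i:=x_i/\sqrt{D_{ii}}$, so that $\bfx^T\bfA\bfx=\sum_{i,j}\tilde A_{ij}\,x_ix_j/\sqrt{D_{ii}D_{jj}}=\sum_{i,j}\tilde A_{ij}u_iu_j$. Applying $|u_iu_j|\le\tfrac12(u_i^2+u_j^2)$ to each summand (legitimate since $\tilde A_{ij}\ge0$) and using the symmetry of $\boldsymbol{\tilde A}$ to merge the two resulting sums gives
\begin{equation*}
|\bfx^T\bfA\bfx|\le\sum_{i,j}\tilde A_{ij}u_i^2=\sum_i u_i^2\sum_j\tilde A_{ij}=\sum_i D_{ii}u_i^2=\sum_i x_i^2=\bfx^T\bfx,
\end{equation*}
where I used $\sum_j\tilde A_{ij}=D_{ii}$ by definition of the degree matrix. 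Hence $\sigma_1(\bfA)\le1$.

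For the lower bound, take $\bfx=\bfone$, so $\|\bfone\|^2=N$. Since each $D_{ii}=1+d_i\le1+\delta$, where $d_i$ is the degree of node $i$,
\begin{equation*}
\bfone^T\bfA\bfone=\sum_{i,j}\frac{\tilde A_{ij}}{\sqrt{D_{ii}D_{jj}}}\ \ge\ \frac{1}{1+\delta}\sum_{i,j}\tilde A_{ij}=\frac{N(1+\delta_{\textrm{ave}})}{1+\delta},
\end{equation*}
using $\sum_{i,j}\tilde A_{ij}=\sum_i D_{ii}=\sum_i(1+d_i)=N+N\delta_{\textrm{ave}}$. Dividing by $\|\bfone\|^2=N$ yields $\sigma_1(\bfA)\ge(1+\delta_{\textrm{ave}})/(1+\delta)$, which completes the proof.

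The argument is short and I do not expect a genuine obstacle; the only care needed is (i) invoking the symmetry of $\bfA$ to pass to the two-sided Rayleigh-quotient characterization of $\sigma_1$, and (ii) checking that the cross terms in the upper-bound step collapse exactly to $\sum_i D_{ii}u_i^2$ (this is where symmetry of $\boldsymbol{\tilde A}$ is used again). Incidentally, the choice $\bfx=\bfD^{1/2}\bfone$ makes the Rayleigh quotient equal to $1$ exactly, so in fact $\sigma_1(\bfA)=1$ for every nonempty graph; the lemma records the cruder but graph-parameter-explicit estimate obtained from $\bfx=\bfone$.
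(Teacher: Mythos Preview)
Your proof is correct and complete. Both bounds are established cleanly, and your closing remark that $\bfD^{1/2}\bfone$ is in fact an eigenvector of $\bfA$ with eigenvalue $1$ (hence $\sigma_1(\bfA)=1$ exactly) is a valid sharpening of the lemma.

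Your route differs from the paper's. The paper argues through the \emph{unnormalized} matrix $\boldsymbol{\tilde A}$: it first shows $1+\delta_{\textrm{ave}}\le\sigma_1(\boldsymbol{\tilde A})\le1+\delta$ (the lower bound by plugging $\bfone$ into the Rayleigh quotient of $\boldsymbol{\tilde A}$, the upper bound by an $\ell_\infty$/Perron--Frobenius style estimate on the top eigenvector), and then passes to $\bfA$ using $\boldsymbol{\tilde A}=\bfD^{1/2}\bfA\bfD^{1/2}$ together with $\|\bfD\|_2\le1+\delta$. That transfer yields the lower bound immediately via $\sigma_1(\boldsymbol{\tilde A})\le(1+\delta)\,\sigma_1(\bfA)$, but the upper bound $\sigma_1(\bfA)\le1$ does not follow from $\sigma_1(\boldsymbol{\tilde A})\le1+\delta$ by the same submultiplicativity (one would need control of $\|\bfD^{-1/2}\|_2$, which goes the wrong way). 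Your direct Rayleigh-quotient argument on $\bfA$ itself sidesteps this issue entirely and is the cleaner path; it also makes the equality $\sigma_1(\bfA)=1$ transparent.
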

 
 \subsection{Formal theoretical guarantees}\label{sec:theorem}
	To formally present the results, some parameters in the results are defined as follows. $\sigma_j(\bfW^*)$ ($j\in [N]$) is the $j$-th singular value of $\bfW^*$. $\kappa=\sigma_1(\bfW^*)/\sigma_K(\bfW^*)$ is the conditional number of $\bfW^*$. 
	$\gamma$ is defined as $\prod_{j=1}^K\sigma_j(\bfW^*)/\sigma_K(\bfW^*)$. For a fixed $\bfW^*$, both $\gamma$ and $\kappa$ can be viewed as constants and do not affect the order-wise analysis.

	\begin{theorem}\label{Thm: major_thm_lr}
		(Regression) Let $\{ \bfW^{(t)} \}_{t=1}^T$ be the sequence generated by  Algorithm \ref{Alg} to solve \eqref{eqn:linear_regression} with $\eta= {K}/{(8\sigma_1^2(\bfA))}$. 
		Suppose the number of samples satisfies 
		\begin{equation}\label{eqn: sample_complexity_lr}
			|\Omega|\ge C_1\varepsilon_0^{-2} \kappa^9\gamma^2 { (1+\delta^2)\sigma_1^4(\bfA)} K^8 d\log N \log(1/\varepsilon)
		\end{equation} 
		for some constants $C_1>0$ and $\varepsilon_0\in(0,\frac{1}{2})$. Then $\{ \bfW^{(t)} \}_{t=1}^{T}$ converges linearly to $\bfW^*$ with probability at least $\textcolor{black}{1-K^2T}\cdot N^{-10}$ as  
		\begin{equation}\label{eqn: linear_convergence_lr}
		\begin{gathered}
		\|	\W[t]-\bfW^*\|_2
		\le\nu(\beta)^t\|	\W[0]-\bfW^*\|_2, and\\
		\|	\W[T]-\bfW^*\|_2
		\le \varepsilon \|\bfW^* \|_2 
		\end{gathered}
		\end{equation}
		where $\nu(\beta)$ is the  rate of convergence that depends on $\beta$.
		Moreover, we have
		\begin{equation}\label{eqn: accelerated_rate_lr}
		\nu(\beta)< \nu(0) \text{\quad for some small nonzero } \beta.
		\end{equation} 
		Specifically, let $\beta^* = \Big(1-\sqrt{\frac{1-\varepsilon_0}{88\kappa^2 \gamma }}\Big)^2$, we have
		\begin{equation}\label{eqn:rate}
		\begin{gathered}
		 \nu(0)\ge 1-\frac{1-\varepsilon_0}{{88\kappa^2 \gamma K}},
		\nu(\beta^*)= 1-\frac{1-\varepsilon_0}{\sqrt{88\kappa^2 \gamma  K}}.	
		\end{gathered}
		\end{equation}
	\end{theorem}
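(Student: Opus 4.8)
The plan is to follow the three–stage template developed for one‑hidden‑layer NNs in \cite{ZSJB17,ZSD17,FCL19} — (i) show the empirical risk \eqref{eqn:linear_regression} is locally strongly convex and smooth in a neighborhood of the global minimizer $\bfW^{*}$, (ii) show the tensor initialization lands inside that neighborhood, (iii) run the Heavy‑Ball recursion there — while carrying the graph operator $\bfA$ through every estimate, which is exactly where the $\sigma_{1}(\bfA)$ and $\delta$ dependence in \eqref{eqn: sample_complexity_lr} and in the rates \eqref{eqn:rate} is born. For the local geometry I would first compute the population Hessian at $\bfW^{*}$: writing $\bfX^{T}\bfa_{n}=\|\bfa_{n}\|\,\bfg_{n}$ with $\bfg_{n}\sim\mathcal{N}(\bfzero,\bfI_{d})$ and using the scale‑invariance of $\phi'$ for ReLU, the factor $\|\bfa_{n}\|$ cleanly comes out, so $\mathbb{E}_{\bfX}[\nabla^{2}\hat f_{\Omega_{t}}(\bfW^{*})]=\bigl(\tfrac{1}{|\Omega_{t}|}\sum_{n\in\Omega_{t}}\|\bfa_{n}\|_{2}^{2}\bigr)\tfrac{1}{K^{2}}\bfH_{0}$, where $\bfH_{0}$ is the fixed half‑space Gaussian‑moment matrix from the NN analysis whose eigenvalues lie between a $\Theta(1/(\kappa^{2}\gamma))$ lower bound and a $\Theta(K)$ upper bound. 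Since $\|\bfa_{n}\|_{2}^{2}=(\bfA^{2})_{nn}\le\sigma_{1}^{2}(\bfA)$, the top eigenvalue of this Hessian is $\Theta(\sigma_{1}^{2}(\bfA)/K)$, which is precisely why $\eta=K/(8\sigma_{1}^{2}(\bfA))$ keeps $\eta\lambda_{\max}<1$. I would then (a) bound how the Hessian moves as $\bfW$ leaves $\bfW^{*}$ by controlling the Gaussian measure of the ReLU sign‑change regions, and (b) bound the statistical fluctuation $\|\nabla^{2}\hat f_{\Omega_{t}}(\bfW)-\mathbb{E}_{\bfX}[\cdot]\|_{2}$; the conclusion to aim for is that, on a high‑probability event, $m\bfI\preceq\nabla^{2}\hat f_{\Omega_{t}}(\bfW)\preceq M\bfI$ for all $\bfW$ within a small multiple of $\sigma_{K}(\bfW^{*})/\mathrm{poly}(K,\kappa)$ of $\bfW^{*}$, with $\eta m$ and $\eta M$ matching the constants in \eqref{eqn:rate}. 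The factors $\varepsilon_{0}^{-2}\kappa^{9}\gamma^{2}K^{8}$ in \eqref{eqn: sample_complexity_lr} are what it costs to push the fluctuation below a small multiple of $m$.

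The one genuinely new ingredient is the concentration step. Every estimate above, and every estimate in the initialization, reduces to concentrating a sum $\tfrac{1}{|\Omega|}\sum_{n\in\Omega}h(\bfa_{n}^{T}\bfX)$ for a low‑degree polynomial functional $h$ of $\bfa_{n}^{T}\bfX$. Unlike the NN case these summands are neither identically distributed ($\bfa_{n}^{T}\bfX\sim\mathcal{N}(\bfzero,\|\bfa_{n}\|_{2}^{2}\bfI_{d})$) nor independent: $\bfa_{n}^{T}\bfX$ and $\bfa_{m}^{T}\bfX$ are correlated precisely when $\bfa_{n}^{T}\bfa_{m}\neq 0$, i.e.\ when $v_{n}$ and $v_{m}$ lie within two hops, and a node has at most $\delta^{2}$ such partners. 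I would prove a Bernstein‑type bound for sums indexed by this two‑hop dependency graph, combined with the sub‑exponential (respectively sub‑Weibull, for the cubic tensor entering \eqref{eqn: M_3}) tails of the summands — whose Orlicz norms scale with powers of $\|\bfa_{n}\|_{2}^{2}\le\sigma_{1}^{2}(\bfA)$ — and an $\epsilon$‑net over the direction being tested (which produces the $d$ and $\log N$ factors at failure probability $N^{-10}$). This yields the single inequality: for each quantity needed, the estimation error is at most $\epsilon$ with probability at least $1-N^{-10}$ provided $|\Omega|\gtrsim\epsilon^{-2}(1+\delta^{2})\,\mathrm{poly}(\sigma_{1}(\bfA))\,d\log N$, which after specializing $\epsilon$ produces both the $(1+\delta^{2})\sigma_{1}^{4}(\bfA)$ graph factor and, via a union bound over the $K$ columns and the $T$ iterates, the stated $1-K^{2}TN^{-10}$ success probability.

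Given these two pieces, Step 2 of the template is a perturbation chase: with $\bfa_{n}^{T}\bfX$ Gaussian of known variance $\|\bfa_{n}\|_{2}^{2}$ the Hermite‑expansion identities of \cite{ZSJB17} still hold, only with explicit $\|\bfa_{n}\|$‑dependent coefficients, so \eqref{eqn: M_1}--\eqref{eqn: M_3} keep the form $\bfM_{1}=\sum_{j}\psi_{1}(\overline{\bfw}_{j}^{*})\|\bfw_{j}^{*}\|_{2}\overline{\bfw}_{j}^{*}$, etc.; then the concentration bound on $\widehat{\bfM}_{2}$ plus Wedin/Davis–Kahan controls the subspace estimate $\widehat{\bfV}$, robustness of the tensor decomposition of \cite{KCL15} applied to $\widehat{\bfM}_{3}(\widehat{\bfV},\widehat{\bfV},\widehat{\bfV})$ controls $\{\widehat{\bfu}_{j}\}$ and hence the recovered directions $\widehat{\bfV}\widehat{\bfu}_{j}$, and well‑conditioning of the linear system \eqref{eqn: int_op} controls the magnitudes $\widehat{\alpha}_{j}$, giving $\|\bfW^{(0)}-\bfW^{*}\|_{2}\le\varepsilon_{0}\sigma_{K}(\bfW^{*})$ inside the basin of Step 1. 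For Step 3, inside the basin set $\bfe_{t}=\mathrm{vec}(\bfW^{(t)}-\bfW^{*})$; by the fundamental theorem of calculus $\nabla\hat f_{\Omega_{t}}(\bfW^{(t)})=\bfH_{t}\bfe_{t}$ for an averaged Hessian with $m\bfI\preceq\bfH_{t}\preceq M\bfI$, so the update is the linear recursion $[\bfe_{t+1};\bfe_{t}]=\bfP(\bfH_{t})\,[\bfe_{t};\bfe_{t-1}]$ with $\bfP(\bfH)$ the $2\times 2$ block companion matrix having top‑left block $(1+\beta)\bfI-\eta\bfH$, top‑right $-\beta\bfI$, bottom‑left $\bfI$, bottom‑right $\bfzero$ (plus an $O(\|\bfe_{t}\|_{2}^{2})$ term absorbed by taking the basin radius small). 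Since fresh samples $\Omega_{t}$ are used at each step, the $\bfH_{t}$ differ and cannot be simultaneously diagonalized; instead I would exhibit one positive‑definite Lyapunov matrix $\bfQ$ with $\bfP(\bfH)^{T}\bfQ\,\bfP(\bfH)\preceq\nu(\beta)^{2}\bfQ$ for every $\bfH$ with $m\bfI\preceq\bfH\preceq M\bfI$, using that in a fixed eigenbasis each $\bfP(\bfH)$ is block‑diagonal in $2\times 2$ blocks indexed by $\lambda\in[m,M]$ with spectral radius $\sqrt\beta$ in the under‑damped range; because $\eta\lambda\in(0,1]$, choosing $\beta^{*}$ as in \eqref{eqn:rate} places every block in that range, giving $\nu(\beta^{*})=1-\sqrt{\eta m}$, strictly below the $\beta=0$ rate $\nu(0)=1-\eta m$, which is \eqref{eqn: accelerated_rate_lr}--\eqref{eqn:rate}. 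Iterating the contraction $\|\bfe_{t}\|_{2}\lesssim\nu(\beta)^{t}\|\bfe_{0}\|_{2}$ for $T=\log(1/\varepsilon)$ steps and union‑bounding the Step‑1 event over the $T$ disjoint subsets gives \eqref{eqn: linear_convergence_lr}.

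The main obstacle I expect is the concentration step: extracting the correct $(1+\delta^{2})$ and $\sigma_{1}(\bfA)$ powers from a Bernstein‑type inequality for sums of correlated, heavy‑tailed polynomial functionals over the graph's two‑hop dependency structure, uniformly over an $\epsilon$‑net. Steps 1, 2 (perturbation) and 3 (Heavy Ball) are essentially careful re‑runs of the NN arguments with $\|\bfa_{n}\|$ bookkeeping; it is only through this concentration bound that the graph enters, fixing the graph‑dependent part of the sample complexity \eqref{eqn: sample_complexity_lr} and, via the resulting $m,M$, the constants in the rates \eqref{eqn:rate}.
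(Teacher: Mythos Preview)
Your three-stage template, the concentration machinery (Bernstein over the two-hop dependency graph; the paper uses a Janson-type MGF bound, Lemma~\ref{Lemma: partial_independent_var}, plus Chernoff and an $\epsilon$-net to the same effect), and the initialization analysis all match the paper's route.

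The one substantive departure is in how you linearize the iteration. You propose to bound the \emph{empirical} Hessian $\nabla^{2}\hat f_{\Omega_{t}}(\bfW)$ uniformly over the ball and then apply the fundamental theorem of calculus to $\nabla\hat f_{\Omega_{t}}$. With ReLU, $\hat f_{\Omega_{t}}$ is only piecewise quadratic and its Hessian is discontinuous in $\bfW$: it jumps whenever some indicator $\phi'(\bfa_n^{T}\bfX\bfw_j)$ flips sign. A uniform lower bound over a ball therefore does not follow from pointwise concentration plus a standard covering/Lipschitz argument, and making that step rigorous for ReLU is precisely the difficulty your plan does not address. The paper avoids it entirely by working with the \emph{population} Hessian $\nabla^{2}f_{\Omega_t}$, which is smooth in $\bfW$: Lemma~\ref{Lemma: local_convexity} bounds it on the ball (via Lemma~\ref{Lemma: lemma 5} at $\bfW^{*}$ and the drift Lemma~\ref{Lemma: lemma 6}), the intermediate value theorem is applied to the \emph{population} gradient to produce the companion matrix $\bfL(\beta)$ in \eqref{eqn: major_thm_key_eqn}, and the remaining additive term $\eta\big(\nabla f_{\Omega_t}(\bfW^{(t)})-\nabla\hat f_{\Omega_t}(\bfW^{(t)})\big)$ is controlled by Lemma~\ref{Lemma: sampling_approximation_error} only at the single iterate $\bfW^{(t)}$. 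Because fresh samples $\Omega_t$ are drawn, $\bfW^{(t)}$ is independent of $\Omega_t$, so pointwise concentration suffices and no uniform-over-$\bfW$ Hessian concentration is ever needed. This population-Hessian-plus-gradient-deviation decomposition is the paper's device for ReLU and is the piece you should swap in for your Step~1/Step~3 linearization.

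As a minor note, the paper analyzes $\bfL(\beta)$ by direct eigenvalue computation of the $2\times 2$ scalar blocks (equations \eqref{eqn: Heavy_ball_root}--\eqref{eqn:heavy_ball_beta}) rather than via a Lyapunov matrix; your Lyapunov construction is arguably more careful for a non-normal companion matrix, but it is not the paper's route.
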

	\begin{theorem}\label{Thm: major_thm_cl}
	(Classification)	Let $\{ \bfW^{(t)} \}_{t=1}^T$ be the sequence generated by Algorithm \ref{Alg} to solve \eqref{eqn: classification} with $\eta= {1}/{(2\sigma_1^2(\bfA))}$. 
		Suppose the number of samples satisfies 
		\begin{equation}\label{eqn: sample_complexity_cl}
		|\Omega|\ge C_2\varepsilon_0^{-2}(1+\delta^2)\kappa^8\gamma^2 \sigma_1^4(\bfA)K^8d\log N \log(1/\varepsilon)
		\end{equation} 
		for some positive constants $C_2$ and $\varepsilon_0\in(0,1)$. Then, let $\widehat{\bfW}$ be the nearest critical point of  \eqref{eqn: classification} to $\bfW^*$, we have that $\{ \bfW^{(t)} \}_{t=1}^{T}$ converges linearly to $\widehat{\bfW}$ with probability at least \textcolor{black}{$1-K^2T\cdot N^{-10}$} as  
		\begin{equation}\label{eqn:converge2}
		\begin{gathered}
		\|\bfW^{(t)} -\widehat{\bfW} \|_2 \le \nu(\beta)^t\|\bfW^{(0)} -\widehat{\bfW}\|_2, \textrm{ and }\\
		\|\bfW^{(T)} -\widehat{\bfW} \|_2 \le \varepsilon\|\bfW^{(0)} -\widehat{\bfW}\|_2.
		\end{gathered}
		\end{equation}
		The distance between $\widehat{\bfW}$ and $\bfW^*$ is bounded by
		\begin{equation} \label{eqn: linear_convergence2_cl}
		\| \widehat{\bfW} -\bfW^*  \|_2 \le C_3(1-\varepsilon_0)^{-1}\kappa^2 \gamma K \sqrt{\frac{(1+\delta^2)d\log N}{|\Omega|}},
		\end{equation}
		where $\nu(\beta)$ is the rate of convergence   that depends on $\beta$, and $C_3$ is some positive constant.
		Moreover, we have
		\begin{equation}\label{eqn: accelerated_rate_cl}
		\nu(\beta)< \nu(0) \text{\quad for some small nonzero } \beta,
		\end{equation} 
		Specifically, let $\beta^* = \Big(1-\sqrt{\frac{1-\varepsilon_0}{11\kappa^2\gamma K^2}}\Big)^2$, we have
		\begin{equation}\label{eqn: accelerated_rate1_cl}
		\begin{gathered}
		\nu(0)= 1-\frac{1-\varepsilon_0}{11\kappa^2\gamma K^2},
		\nu(\beta^*)= 1-\sqrt{\frac{1-\varepsilon_0}{11\kappa^2\gamma K^2}}.	
		\end{gathered}
		\end{equation}
	\end{theorem}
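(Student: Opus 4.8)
I would follow the same three-part template as the proof of Theorem~\ref{Thm: major_thm_lr} --- tensor-method initialization landing inside a basin of local strong convexity, followed by an accelerated local search --- while handling two features new to the classification setting. First, $\bfW^*$ is a critical point of the \emph{population} cross-entropy risk (it minimizes an expected KL divergence) but generally not of the empirical risk $\hat{f}_\Omega$, so the iterates converge to a nearby critical point $\widehat{\bfW}$ that must be constructed and whose distance to $\bfW^*$ must be bounded by the magnitude of $\nabla\hat{f}_\Omega(\bfW^*)$. Second, the summands of $\hat{f}_\Omega$ indexed by $n\in\Omega$ are statistically dependent: overlapping neighborhoods make $y_n,y_m$ and the effective inputs $\bfa_n^T\bfX,\bfa_m^T\bfX$ correlated, so the i.i.d.\ concentration used for NNs no longer applies and the bounded-degree structure of $\mathcal{G}$ must be exploited --- this is the step I expect to be the main obstacle.

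\textbf{Step 1: local curvature.} First I would analyze the population risk $f(\bfW)=\mathbb{E}_{\bfX,y}[\ell(\bfW;\bfa_n^T\bfX)]$; using $\mathbb{E}[y_n\mid\bfX]=g(\bfW^*;\bfa_n^T\bfX)$ it equals an expected KL divergence, with $\bfW^*$ its global minimizer. The goal is to show that on the ball $\mathcal{B}=\{\bfW:\|\bfW-\bfW^*\|_2\le r\}$ with $r=\Theta\big(\sigma_K(\bfW^*)/\mathrm{poly}(\kappa,\gamma,K)\big)$ the Hessian obeys $\nabla^2 f(\bfW)\succeq\Theta\big(\sigma_1^2(\bfA)/(\kappa^2\gamma K^2)\big)\bfI$ and $\nabla^2 f(\bfW)\preceq\Theta\big(\sigma_1^2(\bfA)\big)\bfI$. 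Since $\bfa_n^T\bfX\sim\mathcal{N}(\bfzero,\|\bfa_n\|_2^2\bfI_d)$ with $\|\bfa_n\|_2\le\sigma_1(\bfA)$, this is a Gaussian-moment computation paralleling the NN analyses \cite{ZSJB17,FCL19,ZSD17}: the strictly positive, bounded derivative of the sigmoid gives the lower bound, and $\kappa,\gamma$ together with the $1/K$-averaging in the model produce the $1/(\kappa^2\gamma K^2)$ scaling of the minimum eigenvalue. With $\eta=1/(2\sigma_1^2(\bfA))$ this already pins the plain-GD rate at $\nu(0)=1-\Theta(1/(\kappa^2\gamma K^2))$, matching \eqref{eqn: accelerated_rate1_cl}.

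\textbf{Step 2: concentration and the critical point $\widehat{\bfW}$.} Next I would transfer the curvature bound to each fresh empirical Hessian $\nabla^2\hat{f}_{\Omega_t}$ and bound $\|\nabla\hat{f}_\Omega(\bfW^*)\|_2$, despite the dependence among summands. Because every row of $\bfA$ has at most $\delta+1$ nonzeros, the dependency graph on $\Omega$ has bounded degree, so I would either split $\Omega$ into $O(\delta^2)$ conditionally-independent groups and apply a matrix Bernstein bound within each, or write the relevant sums as low-degree polynomials of the Gaussian matrix $\bfX$ and use Hanson--Wright / Gaussian polynomial concentration. Propagating the variance proxy yields the $(1+\delta^2)$ factor (from neighborhood overlap) and the $\sigma_1^4(\bfA)$ factor (the second moment of $\bfa_n^T\bfX$ is at most $\sigma_1^2(\bfA)$, appearing squared in the Hessian), so that once $|\Omega|$ meets \eqref{eqn: sample_complexity_cl} the Hessian of $\hat{f}_{\Omega_t}$ is positive definite on $\mathcal{B}$ with condition number $\Theta(\kappa^2\gamma K^2)$ and $\|\nabla\hat{f}_\Omega(\bfW^*)\|_2=\widetilde{O}\big(\sigma_1^2(\bfA)\sqrt{(1+\delta^2)d\log N/|\Omega|}\big)$ with probability at least $1-N^{-10}$. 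A standard contraction / Newton argument then gives a unique critical point $\widehat{\bfW}\in\mathcal{B}$ of $\hat{f}_\Omega$ with $\|\widehat{\bfW}-\bfW^*\|_2\le\|\nabla\hat{f}_\Omega(\bfW^*)\|_2/\lambda_{\min}(\nabla^2\hat{f}_\Omega)$, which is precisely \eqref{eqn: linear_convergence2_cl}; moreover $\|\widehat{\bfW}-\bfW^*\|_2\ll r$, so the Step-1 curvature bounds also hold on a ball of comparable radius around $\widehat{\bfW}$.

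\textbf{Step 3: initialization and accelerated iteration.} The analysis of Subroutine~\ref{Alg: initia} is structurally the same as for NNs \cite{ZSJB17,FCL19} --- the sigmoid is treated via the strictly-monotone-with-lower-bounded-derivative device of \cite{FCL19}, $\bfx_n$ is replaced throughout by $\bfa_n^T\bfX$, and the Step-2 dependent-sample bounds replace the i.i.d.\ ones --- giving $\bfW^{(0)}\in\mathcal{B}$ (say with radius $r/2$) and accounting for the $d\log N$ and graph factors in \eqref{eqn: sample_complexity_cl}. For the iteration I would stack $\bfW^{(t)}-\widehat{\bfW}$ and $\bfW^{(t-1)}-\widehat{\bfW}$ and write the Heavy Ball update as a linear recursion with block transition matrix $\bfB_t=\begin{bmatrix}(1+\beta)\bfI-\eta\bfH_t & -\beta\bfI\\ \bfI & \bfzero\end{bmatrix}$ plus an $O(\|\bfW^{(t)}-\widehat{\bfW}\|_2^2)$ remainder, where $\bfH_t$ is the Hessian of $\hat{f}_{\Omega_t}$ averaged along the current segment; bounding the spectral radius of $\bfB_t$ uniformly over admissible $\bfH_t$ gives $\nu(\beta)$. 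On a quadratic with spectrum in $[\mu,L]$ this radius equals $1-\eta\mu$ at $\beta=0$ and drops to its minimum $1-\sqrt{\eta\mu}$ at $\beta^*=(1-\sqrt{\eta\mu})^2$, so substituting $\eta\mu=\Theta(1/(\kappa^2\gamma K^2))$ reproduces \eqref{eqn: accelerated_rate1_cl}; the remainder is absorbed because the iterates never leave $\mathcal{B}$, and iterating $T=\log(1/\varepsilon)$ times with a union bound over the $T$ batches and the $O(K^2)$ spectral events gives the failure probability $1-K^2T\cdot N^{-10}$ and the contraction \eqref{eqn:converge2}. The hard part is Step~2: obtaining the \emph{correct} polynomial dependence on the graph --- $(1+\delta^2)$ and $\sigma_1^4(\bfA)$, rather than a larger power of $\delta$ or a spurious factor of $N$ --- requires simultaneously using the bounded degree of $\bfA$ to limit the dependency and the bound $\|\bfA\|_2=\sigma_1(\bfA)$ to control the Gaussian variances, after which Lemma~\ref{Lemma: sigma_1} converts these into the stated dependence on $\delta$ and $\delta_{\textrm{ave}}$.
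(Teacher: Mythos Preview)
Your plan is essentially the paper's own proof. The three-step template (population curvature $\to$ empirical curvature via concentration $\to$ Heavy Ball spectral radius), the construction of $\widehat{\bfW}$ together with the bound $\|\widehat{\bfW}-\bfW^*\|_2\lesssim \|\nabla\hat{f}(\bfW^*)\|_2/\lambda_{\min}$, and the computation of $\nu(0)$ and $\nu(\beta^*)$ from the condition number of the Hessian all match the paper line by line. Two small remarks on where your write-up diverges.

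First, you do not need the ``$O(\|\bfW^{(t)}-\widehat{\bfW}\|_2^2)$ remainder'' in Step~3. Because the sigmoid makes $\hat{f}_{\Omega_t}$ smooth, the paper applies the vector mean-value theorem directly to the \emph{empirical} gradient, $\nabla\hat{f}_{\Omega_t}(\bfW^{(t)})-\nabla\hat{f}_{\Omega_t}(\widehat{\bfW})=\nabla^2\hat{f}_{\Omega_t}(\widehat{\bfW}^{(t)})(\bfW^{(t)}-\widehat{\bfW})$, which yields an \emph{exact} linear recursion with transition matrix $\bfP(\beta)$; there is no Taylor remainder to absorb. This is also why, unlike the regression proof, the classification analysis never needs a separate gradient-concentration term inside the convergence-rate argument; concentration enters only through the Hessian bounds (your Step~2) and through $\|\nabla\hat{f}(\bfW^*)\|_2$ (for \eqref{eqn: linear_convergence2_cl}).

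Second, your suggested concentration tools (partition $\Omega$ into $O(\delta^2)$ independent groups and run matrix Bernstein, or Gaussian-polynomial / Hanson--Wright) would work, but the paper instead proves a Janson-style fractional-cover bound on the moment generating function of a sum of $d_{\mathcal X}$-locally-dependent sub-exponential variables (their Lemma~\ref{Lemma: partial_independent_var}) and then applies Chernoff. That device immediately gives the $(1+\delta^2)$ factor with no extra $\log$ losses and is reused verbatim for the tensor-initialization moment estimates; it is perhaps a cleaner way to get exactly the stated polynomial dependence on $\delta$ and $\sigma_1(\bfA)$ than a group-splitting argument.
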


\subsection{Comparison with existing works}
 
Only \cite{VZ19,DHPSWX19} analyze the generalization error of one-hidden-layer GNNs in regression problems, while there is no existing work about the generalization error in classification problems. \cite{VZ19,DHPSWX19} show that the difference between the risks in the testing data and the training data
decreases in the order of $1/\sqrt{|\Omega|}$ as the sample size increases.  The GNN model in \cite{VZ19} only has one filter in the hidden layer, i.e., $K=1$, and the loss function is required to be a smooth function, excluding ReLU. Ref.~\cite{DHPSWX19} only considers infinitely wide GNNs.  In contrast, $\bfW^*$  returned by Algorithm\ref{Alg} can achieve zero risks for both training data and testing data in regression problems. Our results apply to an arbitrary number of filters and the ReLU activation function. 
 Moreover, this paper is the first work that characterizes the \textcolor{black}{generalization} error of GNNs for binary classification.

When $\delta$ is zero, our model reduces to one-hidden-layer NNs, and the corresponding sample complexity is $O\Big( \textrm{poly}(K) d\log N\log(1/\varepsilon)\Big)$. Our results are at least comparable to, if not better than, the state-of-art theoretical guarantees \textcolor{black}{that from the prespective of model estimation} for NNs. 
For example, \cite{ZSJB17} considers   one-hidden-layer NNs for regression  and proves the linear convergence of their algorithm to the ground-truth model parameters. The sample complexity of ~\cite{ZSJB17} is also linear in  $d$, but the activation function must be smooth. \cite{ZYWG18} considers one-hidden-layer 
NNs with the ReLU  activation function for regression, but the algorithm cannot converge to the ground-truth parameters exactly but up to a statistical error.  Our result in Theorem \ref{Thm: major_thm_lr} applies to the nonsmooth ReLU function and can recover $\bfW^*$ exactly.
   \cite{FCL19} considers   one-hidden-layer NNs for classification and proves linear convergence of their algorithm to a critical point sufficiently close to   $\bfW^*$ with the distance bounded by   $O(\sqrt{1/|\Omega|})$. The convergence rate in \cite{FCL19} is $1-\Theta({1}/{K^2})$, while Algorithm~\ref{Alg} has a faster convergence   rate of $1-\Theta({1}/{K})$.
\section{Numerical Results}
We verify our results on synthetic graph-structured data.
We consider four types of graph structures as shown in Figure ~\ref{Figure: graph structure}: (a) a connected-cycle graph {having} each node connecting to its $\delta$ closet neighbors;   (b) a two-dimensional grid {having} each node connecting to its nearest neighbors in axis-aligned directions;  (c) a random $\delta$-regular graph {having} each node connecting to $\delta$ other nodes randomly;   (d) a random graph with bounded degree   {having} each node degree selected from $0$ with probability $1-p$ and $\delta$ with probability $p$ for some $p\in[0,1]$. 
\begin{figure}[h]
    \centering
    \begin{subfigure}[h]{0.11\textwidth}
        \centering
        \includegraphics[width=1.\textwidth]{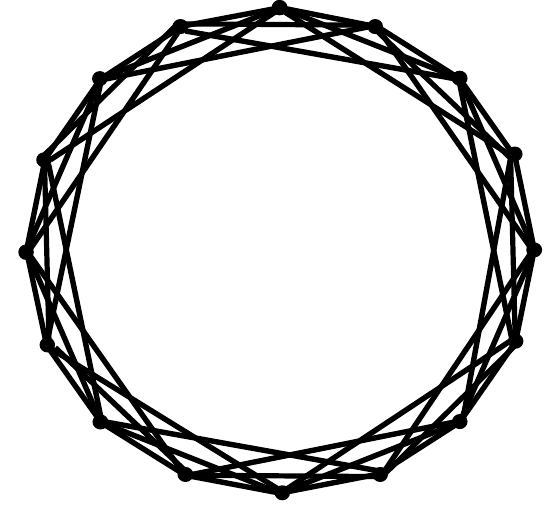}
        \caption{}
    \end{subfigure}%
    ~ 
    \begin{subfigure}[h]{0.11\textwidth}
        \centering
        \includegraphics[width=0.98\textwidth]{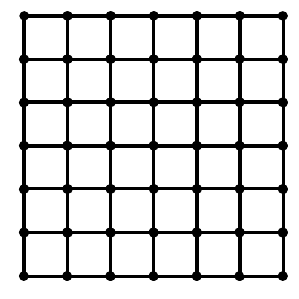}
        \caption{}
    \end{subfigure}
    ~
    \begin{subfigure}[h]{0.11\textwidth}
        \centering
        \includegraphics[width=1.\textwidth]{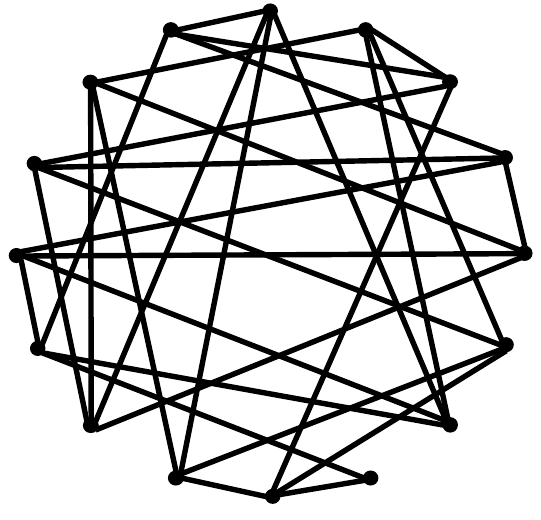}
        \caption{}
    \end{subfigure}%
    ~ 
    \begin{subfigure}[h]{0.11\textwidth}
        \centering
        \includegraphics[width=1.\textwidth]{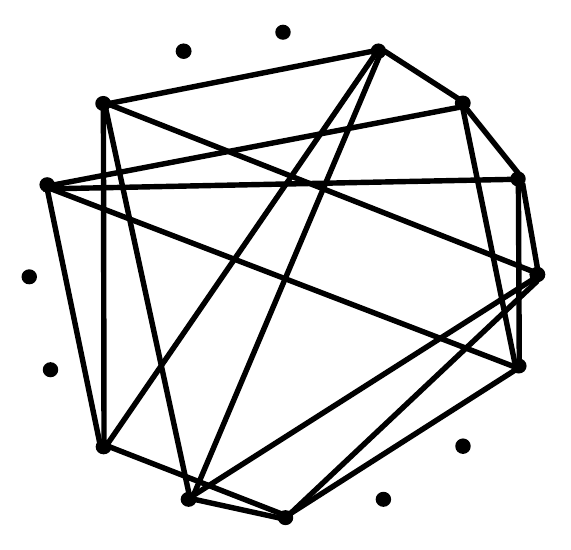}
        \caption{}
    \end{subfigure}
    \caption{Different graph structures: (a)  a connected-cycle graph, (b) a two-dimensional grid, (c) a random regular graph, (d) a random graph with a bounded degree.}
    \label{Figure: graph structure}
\end{figure}
The feature vectors $\{\bfx_n\}_{n=1}^{N}$ are randomly generated from the standard Gaussian distribution $\mathcal{N}(0 , \bfI_{d\times d})$. 
Each entry of  $\bfW^*$ is generated from   $\mathcal{N}(0, 5^2)$ independently. $\{z_n\}_{n=1}^N$ are computed based on \eqref{eqn: y_n}. 
The labels $\{y_n\}_{n=1}^N$ are generated by $y_n=z_n$ and  $\text{Prob}\{y_n=1\}=z_n$ for regression and classification problems, respectively.

During each iteration of Algorithm~\ref{Alg}, we use the whole training data  to calculate the gradient.
The initialization is randomly selected from $\big\{ \bfW^{(0)}\big| \| \bfW^{(0)} - \bfW^* \|_F/\|\bfW^* \|_F <0.5 \big\}$ to reduce the computation. As shown in \cite{FCL19,ZYWG18}, the
random initialization and the tensor initialization have very similar numerical performance. We consider the learning algorithm to be successful in estimation if the relative error,  defined as $\|\bfW^{(t)}-\bfW^*\|_F/\|\bfW^*\|_F$,  is less than $10^{-3}$, where $\bfW^{(t)}$ is the weight matrix returned by Algorithm ~\ref{Alg} when it terminates.


\subsection{Convergence rate }
We first verify the linear convergence of Algorithm~\ref{Alg}, 
 as shown in \eqref{eqn: linear_convergence_lr} and \eqref{eqn:converge2}.
\textcolor{black}{Figure \ref{Figure: convergence_k} (a) and (b) show the convergence rate of Algorithm \ref{Alg} when varying the number of nodes in the hidden layer $K$. The dimension $d$ of the feature vectors   is chosen as $10$, and the sample size $|\Omega|$ is chosen as $2000$. We consider the connected-cycle graph in Figure~\ref{Figure: graph structure} (a) with  $\delta = 4$. All cases   converge to $\bfW^*$ with the exponential decay. Moreover, from Figure \ref{Figure: convergence_k}, we can also see that the  rate of convergence is almost a linear function of  $1/\sqrt{K}$. That verifies our theoretical result of the convergence rate of $1-O(1/\sqrt{K})$ in \eqref{eqn:rate}.}
\begin{figure}[h]
    \centering
    \begin{subfigure}[h]{0.242\textwidth}
        \centering
        \includegraphics[width=1.0\textwidth]{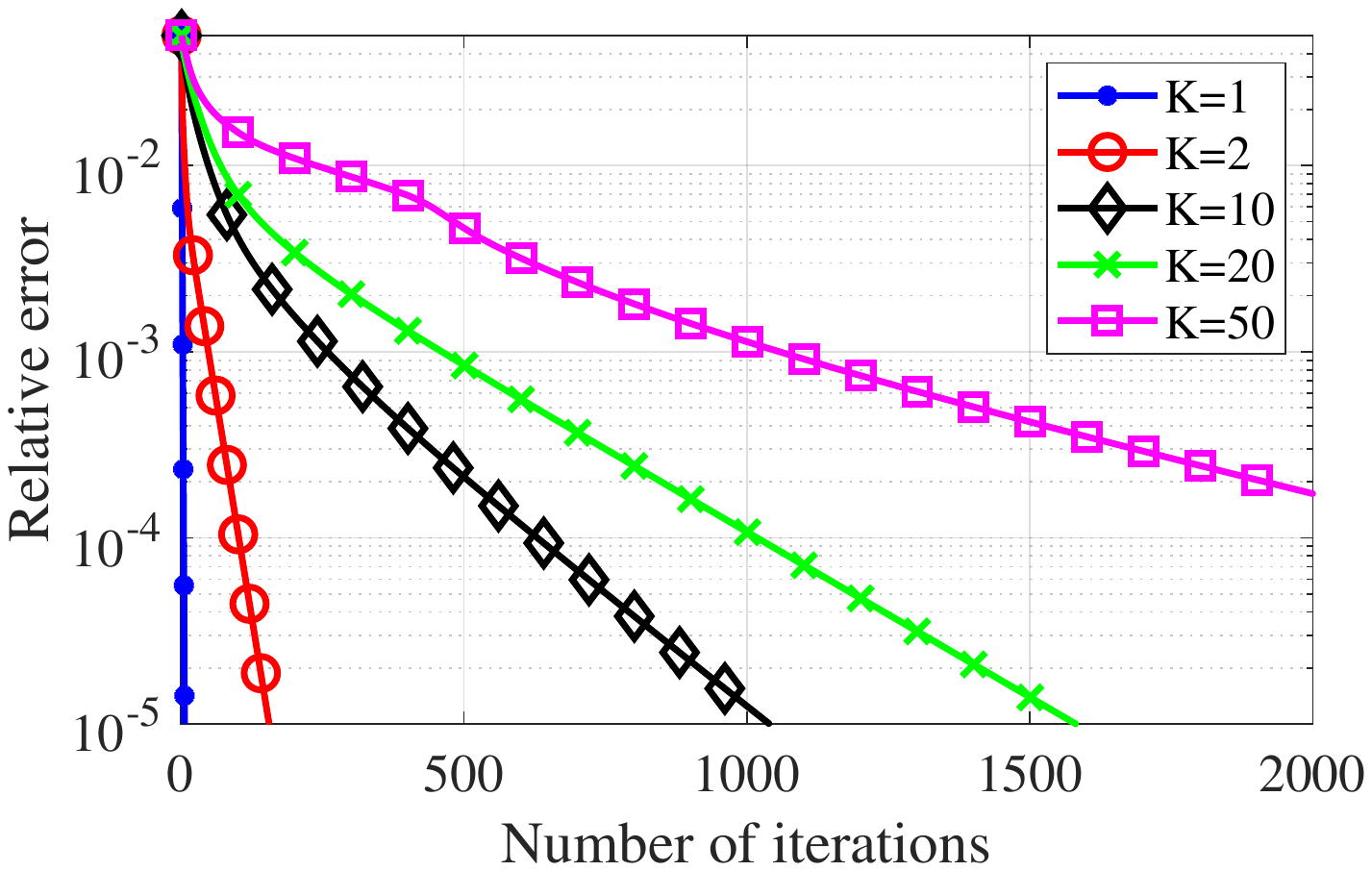}
        \caption{}
    \end{subfigure}%
    ~ 
    \begin{subfigure}[h]{0.242\textwidth}
        \centering
        \includegraphics[width=1.0\textwidth]{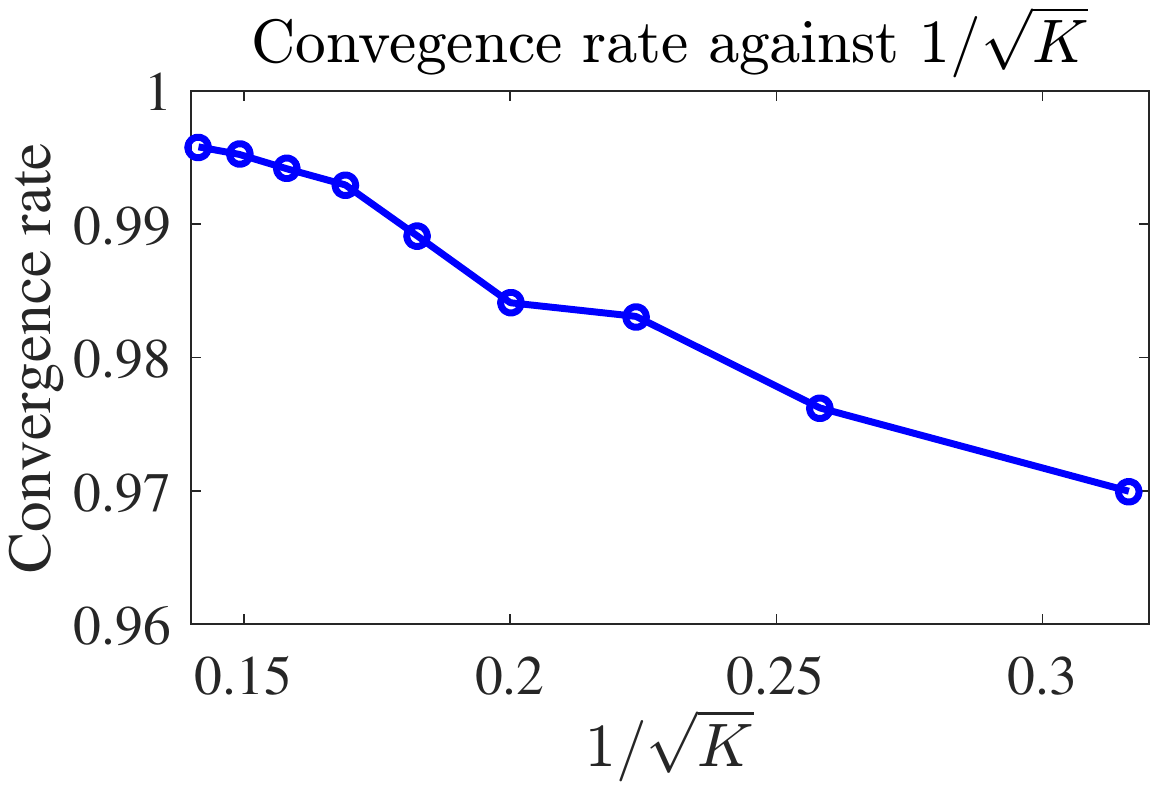}
        \caption{}
    \end{subfigure}
     	\caption{Convergence rate with different $K$ for a connected-circle graph}
 	\label{Figure: convergence_k}
\end{figure}

Figure~\ref{fig: N_vs_err} compares the  rates of convergence of AGD and GD in regression problems. We consider a connected-cycle graph with  $\delta=4$. The number of samples  $|\Omega|=500$,  $d=10$, and $K=5$.   Starting with the same initialization, we show the smallest number of iterations needed to reach a certain estimation error, and the results are averaged over $100$ independent trials. Both AGD and GD converge linearly. AGD requires a smaller number of the iterations than GD  to achieve the same relative error.

\begin{figure}[h]
	\centering
	\includegraphics[width=0.9\linewidth]{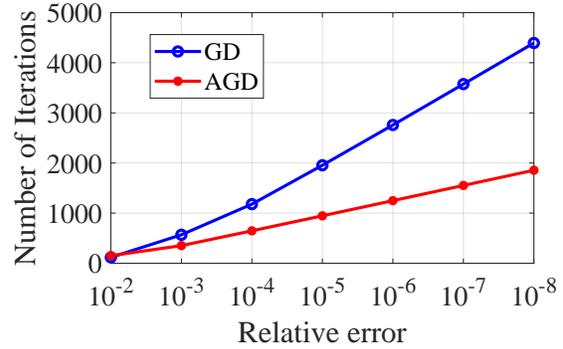}
	\caption{Convergence rates of AGD and GD}
	\label{fig: N_vs_err}
\end{figure}

\subsection{Sample complexity}
We next study the influence of   $d$, $\delta$, $\delta_{\text{ave}}$,  {and} different graph structures  on the  estimation performance 
of Algorithm \ref{Alg}. These relationships are summarized in the sample complexity analyses in \eqref{eqn: sample_complexity_lr} and \eqref{eqn: sample_complexity_cl} of section \ref{sec:theorem}. We have similar numerical results for both regression and classification,  and here we only present the regression case.


Figures \ref{Figure: Phrase transition of number of samples} (a) and (b)  show  the successful estimation rates  when the degree of graph  $\delta$ and the feature dimension  $d$ changes. We consider the connected-cycle graph in Figure~\ref{Figure: graph structure} (a), and  
$K$ is kept as 5. $d$ is  $40$ in Figure \ref{Figure: Phrase transition of number of samples} (a),  and  $\delta$ is   $4$ in Figure \ref{Figure: Phrase transition of number of samples} (b).
The results are averaged over $100$ independent trials. White block means all trials are successful while black block means all trials fail. We can see that the required number of samples for successful estimation increases as $d$ and $\delta$ increases. 
\begin{figure}[H]
    \centering
    \begin{subfigure}[h]{0.23\textwidth}
        \centering
        \includegraphics[width=0.96\textwidth]{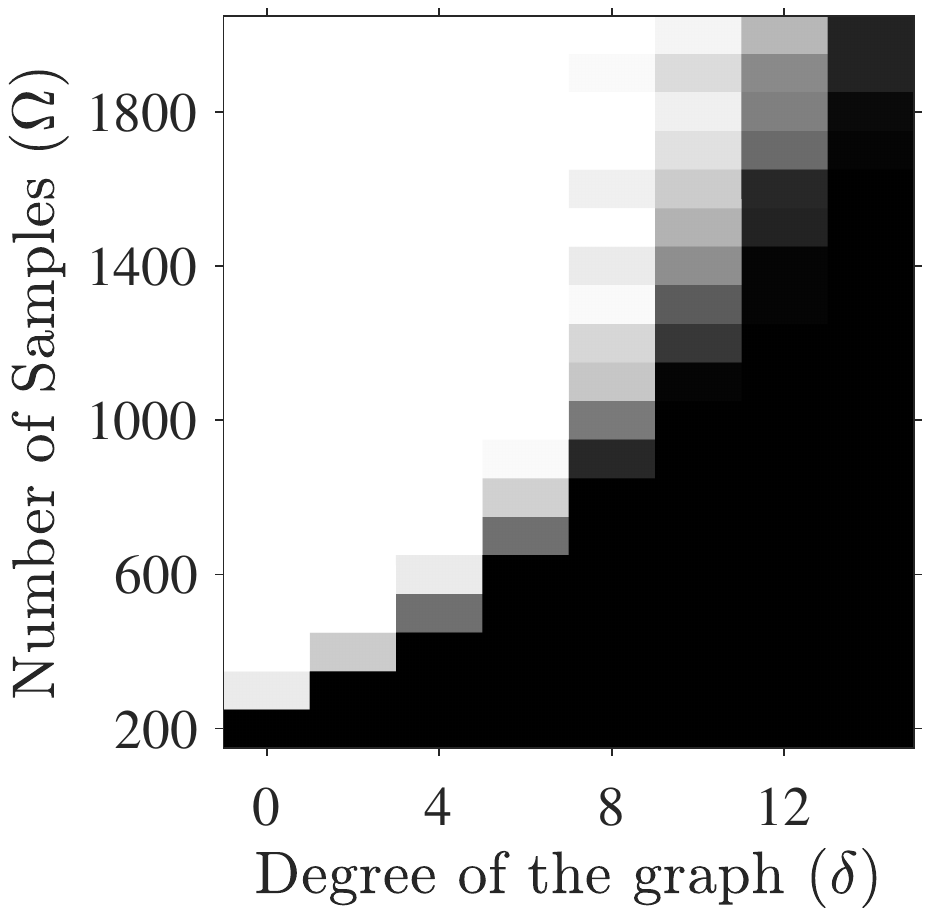}
        \caption{}
    \end{subfigure}%
    ~ 
    \begin{subfigure}[h]{0.23\textwidth}
        \centering
        \includegraphics[width=1.0\textwidth]{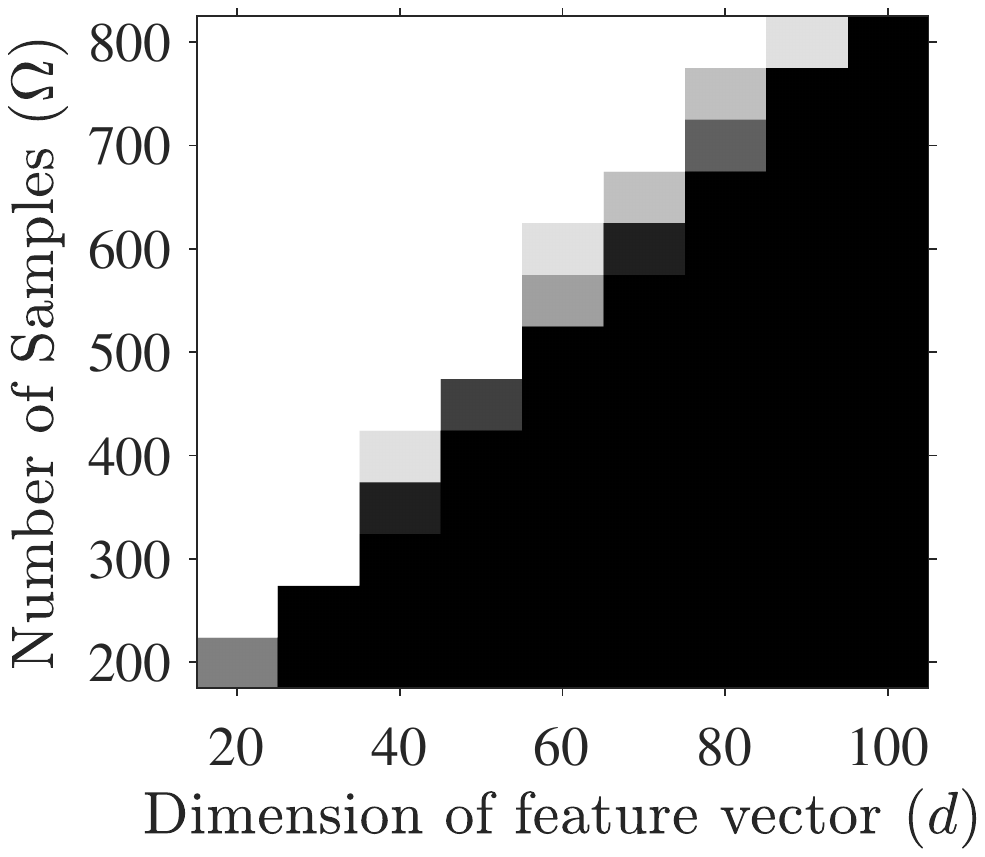}
        \caption{}
    \end{subfigure}
    \caption{Successful estimation rate for varying the required number of samples,  $\delta$, and $d$ in a connected-circle graphs }
    \label{Figure: Phrase transition of number of samples}
\end{figure}

Figure \ref{Figure: sigma} shows the success rate against the sample size $|\Omega|$ for the random graph in Figure~\ref{Figure: graph structure}(d) with different average node degrees.  We vary $p$ to change the average node degree  $\delta_{\text{ave}}$.  $K$ and $d$ are fixed as $5$ and $40$, respectively. The successful rate is calculated based on $100$ independent trials.    
We can see that more samples are needed for successful recovery for a larger $\delta_{\text{ave}}$ when the maximum degree $\delta$ is fixed. 
\begin{figure}[H]
	\centering
	\includegraphics[width=0.37\textwidth]{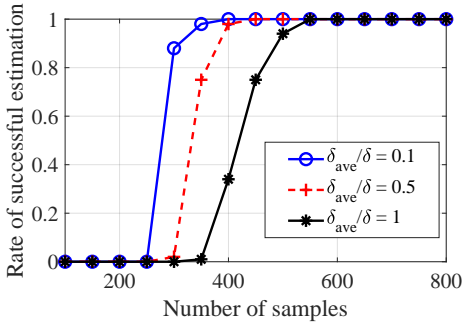}
	\caption{The success rate against the number of  samples for different $\delta_{\text{ave}}/ \delta$}
	\label{Figure: sigma}
\end{figure}

Figure \ref{Figure: N_vs_graph} shows the success rate against $|\Omega|$ for three different graph structures, including a connected cycle, a two-dimensional grid,  and  a random  regular graph  in Figure \ref{Figure: graph structure} (a), (b), and (c).  The maximum degrees of these graphs are all fixed with $\delta=4$. The average degrees of the connected-circle and the random $\delta$-regular graphs are also $\delta_{\text{ave}}=4$.  $\delta_{\text{ave}}$ is very close to $4$ for the two-dimensional grid when the graph size is large enough, because only the boundary nodes have smaller degrees, and the percentage of boundary nodes decays as the graph size increases. Then from Lemma \ref{Lemma: sigma_1}, we have $\sigma_1(\bfA)$ is $1$ for all these graphs. Although these graphs have different structures, the required numbers of samples to estimate $\bfW^*$ accurately are  the same, because both $\delta$  and $\sigma_1(\bfA)$ are the same. One can verify this property from Figure~\ref{Figure: N_vs_graph} where all three curves almost coincide.

\begin{figure}[H]
	\centering
	\includegraphics[width=0.37\textwidth]{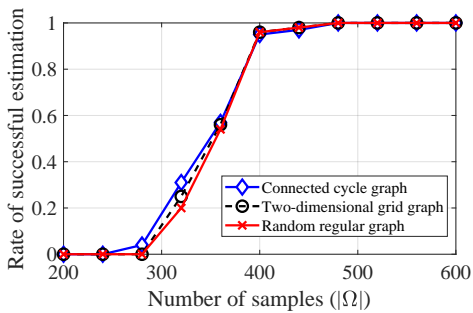}
	\caption{The success rate with respect to sample complexity for various graph structures} 
	\label{Figure: N_vs_graph}
\end{figure}

\subsection{Accuracy in learning $\bfW^*$}
We study the learning accuracy of $\bfW^*$, characterized in 
  \eqref{eqn: linear_convergence_lr} for regression and \eqref{eqn: linear_convergence2_cl}  for classification. 
For   regression problems, we simulate the general cases when the labels are noisy, i.e., $y_n = z_n+\xi_n$.
The noise $\{\xi_n \}_{n=1}^N$ are i.i.d. from $\mathcal{N}(0, \sigma^2)$,
and the noise level is measured by $\sigma/E_z$, where $E_z$ is the average energy of the noiseless labels $\{z_n\}_{n=1}^{N}$, calculated as $E_z = \sqrt{\frac{1}{N}\sum_{n=1}^{N}|z_n|^2}$.
The number of hidden nodes $K$ is   5, and the dimension of each feature   $d$ is  as $60$. We consider a connected-circle graph with $\delta=2$.  Figure \ref{Figure: noise_case} shows the performance of Algorithm \ref{Alg} in the noisy case. We can see that when the number of samples exceeds $Kd=300$, which is the degree of freedom of $\bfW^*$, the relative error decreases dramatically. Also, as $N$ increases, the relative error converges to the noise level. When there is no noise, the estimation of $\bfW^*$ is accurate.

\begin{figure}[h]
	\centering
	\includegraphics[width=0.37\textwidth]{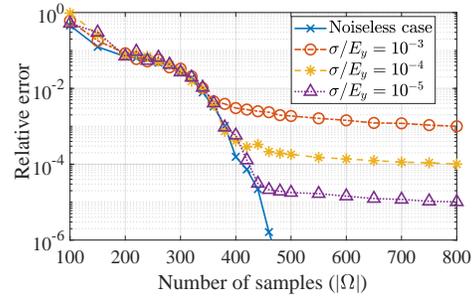}
	\caption{Learning accuracy of Algorithm~\ref{Alg} with noisy measurements for regression}
	\label{Figure: noise_case}
\end{figure}

For binary classification problems, Algorithm~\ref{Alg} returns  the nearest critical point $\widehat{\bfW}$ to $\bfW^*$. 
We show  the distance between the returned model and the ground-truth model $\bfW^*$   against the number of samples
in Figure \ref{Figure: N_error_cl}. 
We consider  a connected-cycle graph with the degree $\delta=2$. 
$K=3$ and $d=20$. 
The relative error $\|\widehat{\bfW}-\bfW^*\|_F/\|\bfW^*\|_F$ is averaged over  $100$ independent trials. 
We can see that the  distance between the returned model and the ground-truth model indeed decreases as the number of samples increases.

\begin{figure}[H]
	\centering
	\includegraphics[width=0.37\textwidth]{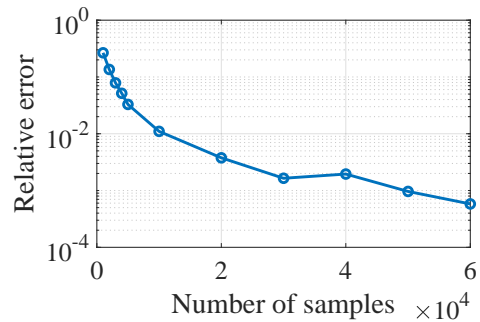}
	\caption{Distance between the returned model by Algorithm\ref{Alg} and the ground-truth model for binary  classification}
	\label{Figure: N_error_cl}
\end{figure}

\section{Conclusion}

Despite the practical success of graph neural networks in learning graph-structured data, the theoretical guarantee of the generalizability of graph neural networks is still elusive. Assuming the existence of a ground-truth model, this paper shows theoretically, for the first time,
learning a one-hidden-layer graph neural network  with a generation error that is zero for  regression or approximately zero for binary classification. With the tensor initialization, we prove that the accelerated gradient descent method converges to the ground-truth model  exactly for regression or approximately for binary classification at a linear rate. We also characterize the required number of training samples   as a function of the feature dimension, the model size, and the   graph structural properties. One future direction is to extend the analysis to multi-hidden-layer neural networks. 
 
 \section{Acknowledge }
 This work was supported by AFOSR FA9550-20-1-0122 and the Rensselaer-IBM AI Research Collaboration (http://airc.rpi.edu), part of the IBM AI Horizons Network (http://ibm.biz/AIHorizons).

\bibliography{ref, reference, MengWangPub}
\bibliographystyle{icml2020}

\newpage
\appendix
\onecolumn
\section{Proof of Theorem \ref{Thm: major_thm_lr}}
In this section, before presenting the proof of Theorem \ref{Thm: major_thm_lr}, we start with defining some useful notations.
Recall that in \eqref{eqn:linear_regression}, the empirical risk function for linear regression problem is defined as 
	\begin{equation}\label{eqn_app: linear_regression}
	\min_{\bfW}: \quad \hat{f}_{\Omega}(\bfW)=\frac{1}{2|\Omega|}\sum_{n\in \Omega}\Big| y_n - g(\bfW;\bfa_n^T\bfX) \Big|^2.
	\end{equation}
Population risk function, which is the expectation of the empirical risk function, is defined as
\begin{equation}\label{eqn_app: linear_regression_exp}
	\min_{\bfW}: \quad {f}_{\Omega}(\bfW)=\mathbb{E}_{\bfX}\frac{1}{2|\Omega|}\sum_{n\in \Omega}\Big| y_n - g(\bfW;\bfa_n^T\bfX) \Big|^2.
	\end{equation}
Then, the road-map of the proof can be summarized in the following three steps. 

First, we show the Hessian matrix of the population risk function $f_{\Omega_t}$ is positive-definite at ground-truth parameters $\bfW^*$ and then characterize the local convexity region of $f_{\Omega_t}$ near $\bfW^*$, which is summarized in Lemma \ref{Lemma: local_convexity}. 

Second, 
$\hat{f}_{\Omega_t}$ is non-smooth because of ReLU activation, but $f_{\Omega_t}$ is smooth.
Hence, we characterize the  gradient descent term as $\nabla \hat{f}_{\Omega_t}(\bfW^{(t)}) \simeq \langle \nabla^2f_{\Omega_t}(\widehat{\bfW}^{(t)}), \bfW^{(t)}-\bfW^* \rangle + \big(\hat{f}_{\Omega_t}({\bfW}^{(t)}) - f_{\Omega_t}({\bfW}^{(t)})\big)$. During this step, we need to apply concentration theorem to bound $\nabla\hat{f}_{\Omega_t}$  to its expectation $\nabla f_{\Omega_t}$ , which is summarized in Lemma \ref{Lemma: sampling_approximation_error}.

Third, we take the momentum term of $\beta(\bfW^{(t)}-\bfW^{(t-1)})$ into consideration and obtain the following recursive rule:
\begin{equation}\label{eqn: iteeee}
\begin{bmatrix}
\W[t+1]-\bfW^*\\
\W[t]-\bfW^*
\end{bmatrix}\\
=\bfL(\beta)
\begin{bmatrix}
\W[t]-\bfW^*\\
\W[t-1]-\bfW^*
\end{bmatrix}.
\end{equation}
Then, we know iterates $\bfW^{(t)}$ converge to the ground-truth with a linear rate which is the largest singlar value of matrix $\bfL(\beta)$. Recall that  AGD reduces to GD with $\beta=0$, so our analysis applies to GD method as well. We are able to show the convergence rate of AGD is faster than GD by proving the largest singluar value of $\bfL(\beta)$ is smaller than $\bfL(0)$ for some $\beta>0$.  Lemma \ref{Thm: initialization} provides the estimation error of $\bfW^{(0)}$ and sample complexity to guarantee $\|\bfL(\beta)\|_2$ is less than $1$ for $t=0$.

\begin{lemma}\label{Lemma: local_convexity}
	Let $f_{\Omega_t}$ be the population risk function in \eqref{eqn_app: linear_regression_exp} for regression problems, then for any $\bfW$ that satisfies
	\begin{equation}\label{eqn: initial_point_lr}
	\|\bfW^* -\bfW\|_2 \le \frac{\varepsilon_0 \sigma_K}{44\kappa^2\gamma K^2},
	\end{equation}
	the second-order derivative of $f_{\Omega_t}$ is bounded as 
	\begin{equation}
	\frac{(1-\varepsilon_0)\sigma_1^2(\bfA)}{11\kappa^2\gamma K^2}\bfI \preceq \nabla^2f_{\Omega_t}(\bfW)\preceq \frac{4\sigma_1^2(\bfA)}{K} \bfI.
	\end{equation}
\end{lemma}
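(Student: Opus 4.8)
The idea is to strip away the graph structure, reducing the population risk $f_{\Omega_t}$ to that of an ordinary one‑hidden‑layer network, then bound the resulting Hessian exactly as one does for NNs while carrying along a scalar graph factor.

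\textbf{Step 1: reduction.} First I would note that for each node $n$, $\bfa_n^{T}\bfX$ is a $d$‑dimensional Gaussian row with covariance $\|\bfa_n\|_2^{2}\bfI_d$, so jointly over the $K$ filters $\bfa_n^{T}\bfX\bfw_j\stackrel{d}{=}\|\bfa_n\|_2\,\bfg^{T}\bfw_j$ for a \emph{single} $\bfg\sim\mathcal N(\bfzero,\bfI_d)$; by positive homogeneity of $\phi$ we may pull $\|\bfa_n\|_2$ out of $\phi$. Since $y_n=g(\bfW^{*};\bfa_n^{T}\bfX)$ uses the same $\bfg$, the $n$‑th summand of \eqref{eqn_app: linear_regression_exp} equals $\|\bfa_n\|_2^{2}\,h(\bfW)$, where $h(\bfW):=\tfrac12\,\mathbb E_{\bfg}\big(\tfrac1K\sum_{j}\phi(\bfg^{T}\bfw_j^{*})-\tfrac1K\sum_{j}\phi(\bfg^{T}\bfw_j)\big)^{2}$ is precisely the population risk of the associated one‑hidden‑layer network with standard Gaussian input. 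Hence $\nabla^{2}f_{\Omega_t}(\bfW)=\bar c_t\,\nabla^{2}h(\bfW)$ with $\bar c_t:=\tfrac1{|\Omega_t|}\sum_{n\in\Omega_t}\|\bfa_n\|_2^{2}\le\max_n\|\bfa_n\|_2^{2}\le\sigma_1^{2}(\bfA)$, the matching‑order lower bound on $\bar c_t$ (the one graph‑specific estimate) coming from the normalized‑adjacency and degree structure. It then remains to sandwich $\nabla^{2}h(\bfW)$ over the prescribed ball.

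\textbf{Step 2: Hessian at $\bfW^{*}$, then perturb.} At $\bfW^{*}$ the residual vanishes, so $\nabla^{2}h(\bfW^{*})=\mathbb E_{\bfg}[\nabla_{\bfW}\bar g(\bfg)\,\nabla_{\bfW}\bar g(\bfg)^{T}]\succeq\bfzero$ with $\bar g(\bfg)=\tfrac1K\sum_j\phi(\bfg^{T}\bfw_j)$; its $(j,l)$ block is $\tfrac1{K^{2}}\mathbb E[\mathds 1\{\bfg^{T}\bfw_j^{*}>0\}\,\mathds 1\{\bfg^{T}\bfw_l^{*}>0\}\,\bfg\bfg^{T}]$, which is exactly the matrix whose spectrum is controlled in \cite{ZSJB17}: its eigenvalues depend only on the directions $\bfw_j^{*}/\|\bfw_j^{*}\|_2$ and their pairwise angles and lie in $[\Theta(1/(\kappa^{2}\gamma K^{2})),\,\Theta(1/K)]$, the lower edge encoding the nondegeneracy of $\bfW^{*}$ through $\kappa,\gamma$. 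I would transcribe that computation (it applies verbatim, $h$ being of the same functional form), then bound $\|\nabla^{2}h(\bfW)-\nabla^{2}h(\bfW^{*})\|_2$ for $\|\bfW-\bfW^{*}\|_2\le\varepsilon_0\sigma_K/(44\kappa^{2}\gamma K^{2})$ via two pieces: (i) the change in the Gauss–Newton matrix, which enters only through the activation indicators and is controlled by the $\bfg$‑measure of the slab where $\mathrm{sign}(\bfg^{T}\bfw_j)\neq\mathrm{sign}(\bfg^{T}\bfw_j^{*})$, which is $O(\|\bfw_j-\bfw_j^{*}\|_2/\|\bfw_j^{*}\|_2)$; and (ii) the residual term $\mathbb E[(\bar g-\bar g|_{\bfW^{*}})\nabla^{2}_{\bfW}\bar g]$, which is zero at $\bfW^{*}$ and $O(\|\bfW-\bfW^{*}\|_2/\sigma_K)$ nearby after the Gaussian expectation absorbs the ReLU kink. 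Both are of order $\tfrac1K\cdot\tfrac{\|\bfW-\bfW^{*}\|_2}{\sigma_K}$ (up to the square‑root factor usual for ReLU), so the chosen radius keeps the perturbation of $\nabla^{2}h$ below $\tfrac{\varepsilon_0}{11\kappa^{2}\gamma K^{2}}$; multiplying back by $\bar c_t$ and using $\bar c_t\le\sigma_1^{2}(\bfA)$ (resp.\ its lower bound) yields the upper bound $\tfrac{4\sigma_1^{2}(\bfA)}{K}\bfI$ and the lower bound $\tfrac{(1-\varepsilon_0)\sigma_1^{2}(\bfA)}{11\kappa^{2}\gamma K^{2}}\bfI$, the gaps $11$ vs.\ $22$ and $4$ vs.\ $2$ absorbing the constants.

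\textbf{Main obstacle.} The delicate part is the perturbation estimate in Step 2: since $\phi$ is ReLU, $\nabla^{2}h$ is continuous but not Lipschitz, so the change in the expected Gauss–Newton matrix cannot be bounded by a naive derivative and must instead be estimated through the measure of the sign‑flip region, and this has to be combined with the near‑minimizer residual bound while keeping the dependence on $K,\kappa,\gamma$ and on the radius $\varepsilon_0\sigma_K/(44\kappa^{2}\gamma K^{2})$ explicit. By contrast, Step 1 — the only genuinely GNN‑specific ingredient — merely produces the scalar prefactor $\bar c_t$, and the $\bfW^{*}$‑spectrum is a direct import of the known NN bound.
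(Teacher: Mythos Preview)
Your proposal is correct and follows the same two-step scheme as the paper: bound $\nabla^{2}f_{\Omega_t}(\bfW^{*})$ via the \cite{ZSJB17} spectrum estimate (the paper's Lemma~\ref{Lemma: lemma 5}), then control $\|\nabla^{2}f_{\Omega_t}(\bfW)-\nabla^{2}f_{\Omega_t}(\bfW^{*})\|_{2}$ through the Gaussian measure of the sign-flip region (the paper's Lemma~\ref{Lemma: lemma 6}), combining by the triangle inequality. Your Step~1 factorization $\nabla^{2}f_{\Omega_t}=\bar c_{t}\,\nabla^{2}h$ is simply an explicit version of what the paper does term by term via $\bfa_n^{T}\bfX\sim\mathcal N(\bfzero,\|\bfa_n\|_2^{2}\bfI)$; one minor slip is that the perturbation of $\nabla^{2}h$ is $O(\|\bfW-\bfW^{*}\|_{2}/\sigma_{K})$ without your extra $1/K$ (the $1/K^{2}$ per block cancels against the $K^{2}$ blocks in the sum), which is precisely why the radius in \eqref{eqn: initial_point_lr} carries $44=4\cdot 11$, and the paper in fact omits your residual piece~(ii) altogether, taking the population Hessian to be the Gauss--Newton matrix only.
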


\begin{lemma}\label{Lemma: sampling_approximation_error}
	Let $\hat{f}_{\Omega_t}$ and $f_{\Omega_t}$ be the  empirical and population risk functions in \eqref{eqn_app: linear_regression} and \eqref{eqn_app: linear_regression_exp} for regression problems, respectively. 
	Then, for any fixed point $\bfW$ satisfies \eqref{eqn: initial_point_lr}, we have \footnote{We use $f(d) \gtrsim(\lesssim) g(d)$ to denote there exists some positive constant $C$ such that $f(d)\ge(\le) C\cdot g(d)$ when $d$ is sufficiently large.} 
	\begin{equation}\label{eqn: lemma3}
	\left\|\nabla f_{\Omega_t}(\bfW)-\nabla \hat{f}_{\Omega_t}(\bfW)\right\|_2
	\lesssim \sigma_1^2(\bfA)\sqrt{\frac{(1+\delta^2)d\log N}{|\Omega_t|}}\|\bfW-\bfW^* \|_2,
	\end{equation}
	with probability at least $1-K^2\cdot N^{-10}$.
\end{lemma}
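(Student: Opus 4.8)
\textbf{Proof plan for Lemma \ref{Lemma: sampling_approximation_error}.} The plan is to write $\nabla\hat f_{\Omega_t}(\bfW)-\nabla f_{\Omega_t}(\bfW)$ as an empirical average over the nodes of $\Omega_t$ minus its mean, to turn the spectral norm into scalar concentration by a net argument, and then to apply a Bernstein-type bound — the one genuinely GNN-specific twist being that the node summands are \emph{not} independent, which is repaired by a graph-colouring argument. First I would compute the gradient: since $y_n=z_n=g(\bfW^*;\bfa_n^T\bfX)$ is noiseless for regression, the $j$-th column of $\nabla\hat f_{\Omega_t}(\bfW)$ equals $\frac1{K|\Omega_t|}\sum_{n\in\Omega_t}\big(g(\bfW;\bfa_n^T\bfX)-g(\bfW^*;\bfa_n^T\bfX)\big)\phi'(\bfa_n^T\bfX\bfw_j)\,\bfX^T\bfa_n$, and $\nabla f_{\Omega_t}(\bfW)=\mathbb E_{\bfX}\nabla\hat f_{\Omega_t}(\bfW)$. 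Writing $\boldsymbol\xi_n(\bfW)\in\mathbb R^{d\times K}$ for the per-node summand, it suffices to bound $\big\|\frac1{|\Omega_t|}\sum_{n\in\Omega_t}(\boldsymbol\xi_n-\mathbb E\boldsymbol\xi_n)\big\|_2$; taking $\tfrac14$-nets $\mathcal N_d\subset S^{d-1}$, $\mathcal N_K\subset S^{K-1}$ (of sizes $e^{O(d)}$ and $e^{O(K)}$, so since $K\le d$ the net costs only an $e^{O(d)}$ factor) and using $\|M\|_2\le2\max_{\bfa\in\mathcal N_d,\bfb\in\mathcal N_K}\bfa^TM\bfb$ reduces the task to scalar tail bounds for $\frac1{|\Omega_t|}\sum_n\bfa^T(\boldsymbol\xi_n-\mathbb E\boldsymbol\xi_n)\bfb$; this $e^{O(d)}$ union bound, together with the target failure probability $N^{-10}$, is what puts $d\log N$ inside the square root.

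Second I would linearize the ReLU as in \cite{ZSJB17,FCL19}. Using that $\phi'$ is the $\{0,1\}$-indicator, $g(\bfW;\bfa_n^T\bfX)-g(\bfW^*;\bfa_n^T\bfX)=\frac1K\sum_l\phi'(\bfa_n^T\bfX\bfw_l^*)\,\bfa_n^T\bfX(\bfw_l-\bfw_l^*)+r_n$, where $r_n$ is supported on the event that $\bfa_n^T\bfX\bfw_l$ and $\bfa_n^T\bfX\bfw_l^*$ have opposite signs for some $l$ and is bounded there by $|\bfa_n^T\bfX(\bfw_l-\bfw_l^*)|$. Because $\bfa_n^T\bfX\sim\mathcal N(\bfzero,\|\bfa_n\|^2\bfI)$ with $\|\bfa_n\|\le1$ (which follows from $\bfA=\bfD^{-1/2}\widetilde\bfA\bfD^{-1/2}$ and $D_{mm}\ge1$), that event has probability $\lesssim\|\bfW-\bfW^*\|_2/\sigma_K$, so under \eqref{eqn: initial_point_lr} the contribution of $\{r_n\}$ is of strictly smaller order than the claimed bound. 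What remains is the bilinear main term $\frac1{K^2|\Omega_t|}\sum_n\sum_l\phi'_{n,l}\phi'_{n,j}(\bfX^T\bfa_n)(\bfa_n^T\bfX)(\bfw_l-\bfw_l^*)$, which is linear in $\bfW-\bfW^*$; its per-node building block involves the rank-one operator $(\bfX^T\bfa_n)(\bfX^T\bfa_n)^T$, and since $\sum_{n\in\Omega_t}\bfa_n\bfa_n^T\preceq\bfA^2$ has operator norm $\le\sigma_1^2(\bfA)$ — the same scale that appears in the Hessian estimates of Lemma \ref{Lemma: local_convexity} — the deviation comes out proportional to $\sigma_1^2(\bfA)\|\bfW-\bfW^*\|_2$, i.e.\ the statistical error is measured relative to the natural problem scale.

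Third, the step specific to graphs: handling the dependence among the $\boldsymbol\xi_n$'s. Each $\boldsymbol\xi_n$ depends on $\bfX$ only through the $(\le\delta+1)$-element column set $\mathrm{supp}(\bfa_n)=\{m:\widetilde A_{nm}=1\}$, so forming the auxiliary graph $\mathcal H$ on $\Omega_t$ with $n\sim n'$ iff $\mathrm{supp}(\bfa_n)\cap\mathrm{supp}(\bfa_{n'})\neq\emptyset$, one checks that $\mathcal H$ has maximum degree at most $(\delta+1)^2-1=O(\delta^2)$ and hence is greedily colourable with $\lesssim1+\delta^2$ colours, with the summands mutually independent inside each colour class $\mathcal C$. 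A scalar Bernstein inequality for sub-exponential variables (the $\bfa^T\boldsymbol\xi_n\bfb$ are products of two sub-Gaussians times a bounded indicator, with $\psi_1$-norm $\lesssim\sigma_1^2(\bfA)\|\bfW-\bfW^*\|_2$) gives, for each $\mathcal C$, a deviation $\lesssim\sigma_1^2(\bfA)\|\bfW-\bfW^*\|_2\sqrt{|\mathcal C|(d+\log N)}/|\Omega_t|$ with probability $1-N^{-12}$; summing over the $\lesssim1+\delta^2$ classes with the balanced split $|\mathcal C|\asymp|\Omega_t|/(1+\delta^2)$ produces the stated $\sigma_1^2(\bfA)\sqrt{(1+\delta^2)d\log N/|\Omega_t|}\,\|\bfW-\bfW^*\|_2$, and a union bound over the colour classes, the $K$ columns, the $K^2$ filter pairs $(j,l)$ of the bilinear expansion, and the $e^{O(d)}$ net leaves failure probability $\le K^2N^{-10}$.

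\textbf{Main obstacle.} The genuinely new difficulty, absent from the NN analyses, is the cross-node dependence: a single feature vector $\bfx_m$ feeds the loss terms of every neighbour of $m$, so $\{\boldsymbol\xi_n\}_{n\in\Omega_t}$ is not an independent family and plain Bernstein does not apply. Quantifying this through the colouring of $\mathcal H$ is exactly what creates the $1+\delta^2$ factor, and getting the exponent of $\delta$ sharp — rather than a cruder power, or a dependence on $N$ — is the delicate point. A secondary bookkeeping burden is that this colouring must compose with the ReLU linearization: the remainders $r_n$ and the sign-flip boundary events have to be controlled under the same conditioning that makes the colour classes independent, so that the three reductions (net on the sphere, linearization, colouring) fit together without circularity.
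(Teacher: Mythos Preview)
Your plan is correct and would prove the lemma, but it diverges from the paper's argument in two places, one of which is an unnecessary detour.

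\textbf{The ReLU linearization is not needed.} The paper never splits $g(\bfW;\cdot)-g(\bfW^*;\cdot)$ into a linear part plus a sign-flip remainder. It simply uses that $\phi$ is $1$-Lipschitz and $\phi'\in[0,1]$ to bound the per-node scalar directly: for the summand $\bfz_n(k,j)=\bfX^T\bfa_n\,\phi'(\bfa_n^T\bfX\bfw_k)\big(\phi(\bfa_n^T\bfX\bfw_j^*)-\phi(\bfa_n^T\bfX\bfw_j)\big)$ and unit $\boldsymbol\alpha$, Cauchy--Schwarz plus the inequality $|\phi(a)-\phi(b)|\le|a-b|$ give $\big(\mathbb{E}|\boldsymbol\alpha^T\bfz_n|^p\big)^{1/p}\le 2p\,\|\bfa_n\|_2^2\,\|\bfw_j-\bfw_j^*\|_2\le 2p\,\sigma_1^2(\bfA)\|\bfW-\bfW^*\|_2$, so $\boldsymbol\alpha^T\bfz_n$ is sub-exponential with $\psi_1$-norm $\lesssim\sigma_1^2(\bfA)\|\bfW-\bfW^*\|_2$ in one line. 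Your second step therefore over-engineers the argument; moreover, your claim that the remainder $r_n$ is ``strictly smaller order'' is shaky for \emph{concentration} (the indicator of the sign-flip event does not shrink the $\psi_1$-norm, only low moments), and in any case is unnecessary since the un-linearized term already has the right $\psi_1$-norm.

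\textbf{Handling the dependence.} Here you and the paper use genuinely different but equivalent tools. The paper does not colour the dependency graph $\mathcal H$; instead it proves a direct MGF bound for partly dependent sums (Lemma \ref{Lemma: partial_independent_var}, following Janson's fractional-cover argument): if each summand depends on at most $d_{\mathcal X}$ others then $\mathbb{E}\exp\big(s\sum_n x_n\big)\le\exp(C\,d_{\mathcal X}\,|\Omega_t|\,s^2)$, and a Chernoff step yields the deviation with the $(1+\delta^2)$ factor in one shot. Your colouring route also works, but note that greedy colouring does \emph{not} give the ``balanced split $|\mathcal C|\asymp|\Omega_t|/(1+\delta^2)$'' you assume; you recover the bound anyway by Cauchy--Schwarz over the $\lesssim 1+\delta^2$ classes, $\sum_{\mathcal C}\sqrt{|\mathcal C|}\le\sqrt{(1+\delta^2)|\Omega_t|}$. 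The MGF route is a bit slicker because it sidesteps this bookkeeping entirely.
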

\begin{lemma}\label{Thm: initialization}
	Assume the number of samples $|\Omega_t| \gtrsim \kappa^3(1+\delta^2)\sigma_1^4(\bfA)Kd\log^4 N$, the tensor initialization method via Subroutine 1 outputs $\bfW^{(0)}$ such that
	\begin{equation}\label{eqn: ini}
	\|\bfW^{(0)} -\bfW^* \|_2\lesssim \kappa^6 \sigma_1^2(\bfA)\sqrt{\frac{K^4(1+\delta^2)d\log N}{|\Omega_t|}}\|\bfW^*\|_2
	\end{equation}
	with probability at least $1-N^{-10}$.
\end{lemma}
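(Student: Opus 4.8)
The plan is to follow the tensor-initialization analysis of \cite{ZSJB17,FCL19}, carefully tracking how the graph aggregation vector $\bfa_n$ enters each moment. The starting observation is that for a fixed node $v_n$, the vector $\bfa_n^T\bfX\in\mathbb{R}^{d}$ is a Gaussian vector with covariance $\|\bfa_n\|_2^2\,\bfI_d$, since the rows of $\bfX$ are i.i.d.\ $\mathcal{N}(\bfzero,\bfI_d)$. Because $\|\bfa_n\|_2^2 = \sum_j A_{n,j}^2 \le 1$ (and is $\Theta(1)$), the population moments $\bfM_1,\bfM_2,\bfM_3$ defined in \eqref{eqn: M_1}--\eqref{eqn: M_3} have exactly the same algebraic decomposition as in the NN case (e.g.\ $\bfM_1=\sum_j \psi_1(\bar\bfw_j^*)\|\bfw_j^*\|_2\bar\bfw_j^*$ as in \eqref{eqn: M_1_2}, and $\bfM_2$, $\bfM_3$ are rank-$K$ symmetric tensors spanned by $\{\bar\bfw_j^*\}$), up to the known scalar factors $\|\bfa_n\|_2^2$ that I can absorb. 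Hence the exact tensor decomposition of the population $\bfM_3(\bfV,\bfV,\bfV)$ recovers $\{\bfV^T\bar\bfw_j^*\}$ and the linear system \eqref{eqn: int_op} recovers the magnitudes, so the only thing to prove is the perturbation bound: the empirical estimates $\widehat{\bfM}_1,\widehat{\bfM}_2,\widehat{\bfM}_3(\widehat\bfV,\widehat\bfV,\widehat\bfV)$ concentrate around their expectations, and then propagate this through (i) Wedin/Davis--Kahan for the subspace $\widehat\bfV$, (ii) the robustness guarantee of the tensor decomposition method \cite{KCL15}, and (iii) the conditioning of the magnitude linear system.

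The key quantitative step, and the place where the graph parameters $\delta$ and $\sigma_1(\bfA)$ surface, is the concentration of the empirical moments. I would write $\widehat{\bfM}_2 = \frac{1}{|\Omega_2|}\sum_{n\in\Omega_2} y_n\big[(\bfa_n^T\bfX)\otimes(\bfa_n^T\bfX)-\bfI\big]$ and similarly for $\widehat\bfM_1,\widehat\bfM_3$, and bound the deviation from the mean. The subtlety absent in NNs is that the summands are \emph{not independent}: $y_n$ and $y_m$ share the feature rows $\bfx_k$ whenever $v_n$ and $v_m$ have a common neighbor, and $(\bfa_n^T\bfX)$ and $(\bfa_m^T\bfX)$ are correlated through the overlapping support of $\bfa_n,\bfa_m$. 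I would handle this by a matrix-Bernstein (or truncated-moment) argument where the variance proxy carries a factor accounting for the number of nodes within distance two in the graph, which is $O(1+\delta^2)$; combined with the fact that each $\bfa_n^T\bfX$ has sub-Gaussian norm $\Theta(\|\bfa_n\|_2)=O(\sigma_1(\bfA))$ contributing the $\sigma_1^4(\bfA)$ (for the degree-two moment $\bfM_2$, and $\sigma_1^2(\bfA)$ after projecting $\bfM_3$ onto the $K$-dimensional subspace), this yields deviations of order $\sigma_1^2(\bfA)\sqrt{(1+\delta^2)d\log N/|\Omega_2|}$ (with a $\log^4 N$ slack from truncation for the third-order tensor). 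The stated $\kappa$ and $K$ powers then come from: the eigen-gap of $\bfM_2$ is $\Omega(\sigma_K^2(\bfW^*)/\textrm{poly}(K))$ so Wedin costs a $\kappa^{O(1)}$ factor; the tensor-decomposition error of \cite{KCL15} scales with $1/(\text{smallest "eigenvalue"})$, again a $\kappa$ and $K$ power; and the magnitude system \eqref{eqn: int_op} has condition number controlled by $\kappa$.

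Concretely, the ordered steps are: (1) establish the exact population decompositions of $\bfM_1,\bfM_2,\bfM_3$ with the $\|\bfa_n\|_2$ scalars made explicit, reducing to the known NN formulas; (2) prove $\|\widehat\bfM_2-\bfM_2\|_2$, $\|\widehat\bfM_1-\bfM_1\|_2$, and $\|\widehat\bfM_3(\widehat\bfV,\widehat\bfV,\widehat\bfV)-\bfM_3(\widehat\bfV,\widehat\bfV,\widehat\bfV)\|$ are each $\lesssim \sigma_1^2(\bfA)\sqrt{(1+\delta^2)d\log N/|\Omega_t|}\cdot\|\bfW^*\|_2\cdot\textrm{poly}(\kappa,K)$, via a matrix/tensor Bernstein inequality that accounts for the two-hop dependency structure of the graph — \emph{this is the main obstacle}, since the standard i.i.d.\ concentration machinery does not apply directly and one must quantify how graph correlations inflate the variance; (3) convert the $\widehat\bfM_2$ bound into $\|\widehat\bfV\widehat\bfV^T-\bfV\bfV^T\|_2$ via Davis--Kahan, using the spectral gap of $\bfM_2$; (4) invoke the robustness of the tensor decomposition algorithm \cite{KCL15} to get $\|\widehat\bfu_j - \bfV^T\bar\bfw_j^*\|_2$ small, hence $\|\widehat\bfV\widehat\bfu_j - \bar\bfw_j^*\|_2$ small after accounting for step (3); (5) bound the error in $\widehat\boldsymbol\alpha$ from \eqref{eqn: int_op} by perturbation of a well-conditioned linear system; (6) assemble $\|\bfW^{(0)}-\bfW^*\|_2 \le \max_j\|\widehat\alpha_j\widehat\bfV\widehat\bfu_j - \alpha_j\bar\bfw_j^*\|_2\cdot\sqrt{K}$ (or the appropriate norm conversion) and collect all the $\kappa,K$ factors to match \eqref{eqn: ini}; the sample-complexity hypothesis $|\Omega_t|\gtrsim \kappa^3(1+\delta^2)\sigma_1^4(\bfA)Kd\log^4 N$ is exactly what makes every intermediate perturbation small enough ($<1$, or below the relevant spectral gap) for these steps to be valid. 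The failure probability $N^{-10}$ is a union bound over the $O(K^2)$ tensor-decomposition events and the $O(1)$ concentration events, with the polynomial exponent tuned through the $\log N$ factors.
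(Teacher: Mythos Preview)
Your proposal is correct and follows essentially the same route as the paper: decompose $\|\bfw_j^*-\widehat\alpha_j\widehat\bfV\widehat\bfu_j\|_2$ into a subspace error $I_1$, a tensor-decomposition error $I_2$, and a magnitude error $I_3$; control each via, respectively, a Davis--Kahan/Wedin argument on $\widehat\bfM_2$, the robustness guarantee of \cite{KCL15} on $\widehat\bfM_3(\widehat\bfV,\widehat\bfV,\widehat\bfV)$, and a perturbation bound on the linear system \eqref{eqn: int_op}; and then feed in concentration bounds for the three empirical moments, with the graph dependency contributing the $(1+\delta^2)$ factor and $\|\bfa_n\|_2\le\sigma_1(\bfA)$ contributing the $\sigma_1^2(\bfA)$.

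The only notable technical difference is in how the dependent concentration (your step~(2)) is carried out. You propose a matrix/tensor Bernstein with an inflated variance proxy; the paper instead reuses its Lemma~\ref{Lemma: partial_independent_var} (a Janson-style MGF bound for partly dependent sub-exponential variables) together with a scalar Chernoff argument over an $\varepsilon$-net, exactly as in the proof of Lemma~\ref{Lemma: sampling_approximation_error}. Both approaches exploit that each summand depends on at most $1+\delta^2$ others and yield the same $\sigma_1^2(\bfA)\sqrt{(1+\delta^2)d\log N/|\Omega|}$ deviation; the paper's route has the advantage of recycling a single lemma already proved for the gradient concentration, while a genuine matrix-Bernstein-under-dependence would require more care (e.g.\ an explicit block-decoupling or a dependent-variant of Tropp's inequality) that you would need to supply.
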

The proofs of Lemmas \ref{Lemma: local_convexity} and \ref{Lemma: sampling_approximation_error} are included in Appendix \ref{sec: Local_convexity} and \ref{sec: Lemma of sample approximation}, respectively,  while the proof of Lemma \ref{Thm: initialization} can be found in Appendix \ref{Sec: Proof of initialization}. With these three preliminary lemmas on hand, the proof of Theorem \ref{Thm: major_thm_lr} is formally summarized in the following contents. 
\begin{proof}[Proof of Theorem \ref{Thm: major_thm_lr}]
	The update rule of $\bfW^{(t)}$ is
	\begin{equation}
	\begin{split}
	\bfW^{(t+1)}
	=&\bfW^{(t)}-\eta\nabla \hat{f}_{\Omega_t}(\bfW^{(t)})+\beta(\W[t]-\W[t-1])\\
	=&\bfW^{(t)}-\eta\nabla {f}_{\Omega_t}(\bfW^{(t)})+\beta(\W[t]-\W[t-1])+\eta(\nabla{f}_{\Omega_t}(\bfW^{(t)})-\nabla\hat{f}_{\Omega_t}(\bfW^{(t)})).
	\end{split}
	\end{equation}
	Since $\nabla^2_{\Omega_t}$ is a smooth function, by the intermediate value theorem, we have
	\begin{equation}
	\begin{split}
	\bfW^{(t+1)}
	=\bfW^{(t)}&- \eta\nabla^2{f}_{\Omega_t}(\widehat{\bfW}^{(t)})(\bfW^{(t)}-\bfW^*)\\
	&+\beta(\W[t]-\W[t-1])\\
	&+\eta\big(\nabla{f}_{\Omega_t}(\bfW^{(t)})-\nabla\hat{f}_{\Omega_t}(\bfW^{(t)})\big),
	\end{split}
	\end{equation}	
	where $\widehat{\bfW}^{(t)}$ lies in the convex hull of $\bfW^{(t)}$ and $\bfW^*$.
	
	Next, we have
	\begin{equation}\label{eqn: major_thm_key_eqn}
	\begin{split}
	\begin{bmatrix}
	\W[t+1]-\bfW^*\\
	\W[t]-\bfW^*
	\end{bmatrix}
	=&\begin{bmatrix}
	\bfI-\eta\nabla^2{f}_{\Omega_t}(\widehat{\bfW}^{(t)})+\beta\bfI &\beta\bfI\\
	\bfI& 0\\
	\end{bmatrix}
	\begin{bmatrix}
	\W[t]-\bfW^*\\
	\W[t-1]-\bfW^*
	\end{bmatrix}
	+\eta
	\begin{bmatrix}
	\nabla{f}_{\Omega_t}(\bfW^{(t)})-\nabla\hat{f}_{\Omega_t}(\bfW^{(t)})\\
	0
	\end{bmatrix}.
	\end{split}
	\end{equation}
	Let $\bfL(\beta)=\begin{bmatrix}
	\bfI-\eta\nabla^2{f}_{\Omega_t}(\widehat{\bfW}^{(t)})+\beta\bfI &\beta\bfI\\
	\bfI& 0\\
	\end{bmatrix}$, so we have
	\begin{equation*}
	\begin{split}
	\left\|\begin{bmatrix}
	\W[t+1]-\bfW^*\\
	\W[t]-\bfW^*
	\end{bmatrix}
	\right\|_2
	=&
	\left\|\bfL(\beta)
	\right\|_2
	\left\|
	\begin{bmatrix}
	\W[t]-\bfW^*\\
	\W[t-1]-\bfW^*
	\end{bmatrix}
	\right\|_2
	+\eta
	\left\|
	\begin{bmatrix}
	\nabla{f}_{\Omega_t}(\bfW^{(t)})-\nabla\hat{f}_{\Omega_t}(\bfW^{(t)})\\
	0
	\end{bmatrix}
	\right\|_2.
	\end{split}
	\end{equation*}
	From Lemma \ref{Lemma: sampling_approximation_error}, we know that 
	\begin{equation}\label{eqn: convergence2}
	\begin{split}
	\eta\left\|\nabla{f}_{\Omega_t}(\bfW^{(t)})-\nabla\hat{f}_{\Omega_t}(\bfW^{(t)})\right\|_2
	\lesssim \eta \sigma_1^2(\bfA)\sqrt{\frac{(1+\delta^2)d\log N}{|\Omega_t|}}\|\bfW-\bfW^* \|_2.
	\end{split}
	\end{equation}
	Then, we have
	\begin{equation}\label{eqn: convergence}
	\begin{split}
	\|\W[t+1]-\bfW^* \|_2 
	\lesssim& \bigg(\| \bfL(\beta) \|_2+\eta\sigma_1^2(\bfA)\sqrt{\frac{(1+\delta^2)d\log N}{|\Omega_t|}}\bigg)\|\W[t]-\bfW^* \|_2\\
	:\eqsim&\nu(\beta)\|\W[t]-\bfW^* \|_2.
	\end{split}
	\end{equation}

	Let $\nabla^2f(\widehat{\bfW}^{(t)})=\bfS\mathbf{\Lambda}\bfS^{T}$ be the eigen-decomposition of $\nabla^2f(\widehat{\bfW}^{(t)})$. Then, we define
	\begin{equation}\label{eqn:Heavy_ball_eigen}
	\begin{split}
	\widetilde{\bfL}(\beta)
	: =&
	\begin{bmatrix}
	\bfS^T&\bf0\\
	\bf0&\bfS^T
	\end{bmatrix}
	\bfL(\beta)
	\begin{bmatrix}
	\bfS&\bf0\\
	\bf0&\bfS
	\end{bmatrix}
	=\begin{bmatrix}
	\bfI-\eta\bf\Lambda+\beta\bfI &\beta\bfI\\
	\bfI& 0\\
	\end{bmatrix}.
	\end{split}
	\end{equation}
	Since 
	$\begin{bmatrix}
	\bfS&\bf0\\
	\bf0&\bfS
	\end{bmatrix}\begin{bmatrix}
	\bfS^T&\bf0\\
	\bf0&\bfS^T
	\end{bmatrix}
	=\begin{bmatrix}
	\bfI&\bf0\\
	\bf0&\bfI
	\end{bmatrix}$, we know $\bfL(\beta)$ and $\widetilde{\bfL}(\beta)$
	share the same eigenvalues. 	Let $\lambda_i$ be the $i$-th eigenvalue of $\nabla^2f_{\Omega_t}(\widehat{\bfW}^{(t)})$, then the corresponding $i$-th eigenvalue of $\bfL(\beta)$, denoted by $\delta_i(\beta)$, satisfies 
	\begin{equation}\label{eqn:Heavy_ball_quaratic}
	\delta_i^2-(1-\eta \lambda_i+\beta)\delta_i+\beta=0.
	\end{equation}
	Then, we have 
	\begin{equation}\label{eqn: Heavy_ball_root}
	\delta_i(\beta)=\frac{(1-\eta \lambda_i+\beta)+\sqrt{(1-\eta \lambda_i+\beta)^2-4\beta}}{2},
	\end{equation}
	and
	\begin{equation}\label{eqn:heavy_ball_beta}
	\begin{split}
	|\delta_i(\beta)|
	=\begin{cases}
	\sqrt{\beta}, \qquad \text{if}\quad  \beta\ge \big(1-\sqrt{\eta\lambda_i}\big)^2,\\
	\frac{1}{2} \left| { (1-\eta \lambda_i+\beta)+\sqrt{(1-\eta \lambda_i+\beta)^2-4\beta}}\right| , \text{otherwise}.
	\end{cases}
	\end{split}
	\end{equation}
	Note that the other root of \eqref{eqn:Heavy_ball_quaratic} is abandoned because the root in \eqref{eqn: Heavy_ball_root} is always no less than the other root with $|1-\eta \lambda_i|<1$.   
	By simple calculations, we have 
	\begin{equation}\label{eqn: Heavy_ball_result}
	\delta_i(0)>\delta_i(\beta),\quad \text{for}\quad  \forall \beta\in\big(0, (1-{\eta \lambda_i})^2\big).
	\end{equation}
	Moreover, $\delta_i$ achieves the minimum $\delta_{i}^*=|1-\sqrt{\eta\lambda_i}|$ when $\beta= \big(1-\sqrt{\eta\lambda_i}\big)^2$.
	
	Let us first assume $\bfW^{(t)}$ satisfies \eqref{eqn: initial_point_lr}, then
	from Lemma \ref{Lemma: local_convexity}, we know that 
	$$0<\frac{(1-\varepsilon_0)\sigma_1^2(\bfA)}{11\kappa^2\gamma K^2}\le\lambda_i\le\frac{4\sigma_1^2(\bfA)}{K}.$$
	Let $\gamma_1=\frac{(1-\varepsilon_0)\sigma_1^2(\bfA)}{11\kappa^2\gamma K^2}$ and $\gamma_2=\frac{4\sigma_1^2(\bfA)}{K}$.
	If we choose $\beta$ such that
	\begin{equation}
	\beta^*=\max \big\{(1-\sqrt{\eta\gamma_1})^2, (1-\sqrt{\eta\gamma_2})^2 \big\},
	\end{equation} then we have $\beta\ge (1-\sqrt{\eta \lambda_i})^2$ and $\delta_i=\max\big\{|1-\sqrt{\eta\gamma_1}|, |1-\sqrt{\eta\gamma_2}|  \big\}$ for any $i$. 
	
	Let $\eta= \frac{1}{{2\gamma_2}}$, then $\beta^*$ equals to  $\Big(1-\sqrt{\frac{\gamma_1}{2\gamma_2}}\Big)^2$. 
	Then, for any $\varepsilon_0\in (0, {1}/{2})$, we have
	\begin{equation}\label{eqn: con1}
	\begin{split}
	\|\bfL(\beta^*) \|_2
	=\max_i \delta_{i}(\beta^*)
	= 1-\sqrt{\frac{\gamma_1}{2\gamma_2}}
	=1-\sqrt{\frac{1-\varepsilon_0}{88\kappa^2 \gamma K}}
	\le 1-\frac{1-(3/4)\cdot\varepsilon_0}{\sqrt{88\kappa^2 \gamma K}}.
	\end{split}
	\end{equation}
	Then, let 
	\begin{equation}\label{eqn: con2}
	\eta \sigma_1^2(\bfA)\sqrt{\frac{(1+\delta^2)d\log N}{|\Omega_t|}}\lesssim \frac{\varepsilon_0}{4\sqrt{88\kappa^2 \gamma K}},
	\end{equation}
	we need $|\Omega_t|\gtrsim\varepsilon_0^{-2}\kappa^2\gamma M(1+\delta^2)\sigma_1^2(\bfA)K^3d \log N$.		
	Combining \eqref{eqn: con1} and \eqref{eqn: con2}, we have 
	\begin{equation}
	\nu(\beta^*)\le 1-\frac{1-\varepsilon_0}{\sqrt{88\kappa^2 \gamma K}}.
	\end{equation}
	Let $\beta = 0$, we have 
	\begin{equation*}
	\begin{gathered}
	\nu(0) \ge \|\bfA(0) \|_2 = 1-\frac{1-\varepsilon_0}{{88\kappa^2 \gamma K}},	\\
	\nu(0) \lesssim \|\bfA(0) \|_2 + \eta \sigma_1^2(\bfA)\sqrt{\frac{(1+\delta^2)d\log N}{|\Omega_t|}} \le 1-\frac{1-2\varepsilon_0}{{88\kappa^2 \gamma K}}
	\end{gathered}
	\end{equation*}
	if $|\Omega_t|\gtrsim\varepsilon_0^{-2}\kappa^2\gamma M(1+\delta^2)\sigma_1^2(\bfA)K^3d \log N$.
	
	Hence, with $\eta =\frac{1}{2 \gamma_2}$ and $\beta = \big(1-\frac{\gamma_1}{2\gamma_2} \big)^2$, we have
	\begin{equation}\label{eqn: induction}
	\begin{split}
	\|	\W[t+1]-\bfW^*\|_2
	\le\Big(1-\frac{1-\varepsilon_0}{\sqrt{88\kappa^2 \gamma K}}\Big)\|	\W[t]-\bfW^*\|_2,
	\end{split}
	\end{equation}
	provided $\W[t]$ satisfies \eqref{eqn: initial_point_lr}, and 
	\begin{equation}\label{eqn: N_3}
	|\Omega_t|\gtrsim\varepsilon_0^{-2}\kappa^2\gamma (1+\delta^2)\sigma_1^4(\bfA)K^3d \log N.
	\end{equation}
	Then, we can start mathematical induction of \eqref{eqn: induction} over $t$.
	
	\textbf{Base case}: According to Lemma \ref{Thm: initialization}, we know that \eqref{eqn: initial_point_lr} holds for $\bfW^{(0)}$ if 
		\begin{equation}\label{eqn: N_1}
		|\Omega_t|\gtrsim \varepsilon_0^{-2}\kappa^{9}\gamma^2 (1+\delta^2)\sigma_1^4(\bfA) K^8d \log N.
		\end{equation}
		According to  Theorem \ref{Thm: major_thm_lr}, it is clear that the number of samples $|\Omega_t|$  satisfies \eqref{eqn: N_1}, then  \eqref{eqn: initial_point_lr} indeed holds for $t=0$. Since \eqref{eqn: initial_point_lr} holds for $t=0$ and $|\Omega_t|$ in Theorem \ref{Thm: major_thm_lr} satisfies \eqref{eqn: N_3} as well, we have \eqref{eqn: induction} holds for $t=0$.   
	
	\textbf{Induction step}:
		Assuming \eqref{eqn: induction} holds for $\W[t]$, we need to show that \eqref{eqn: induction} holds for $\W[t+1]$. That is to say, we need $|\Omega_t|$ satisfies \eqref{eqn: N_3}, which holds naturally from Theorem \ref{Thm: major_thm_lr}. 

		Therefore, when $|\Omega_t|\gtrsim \varepsilon_0^{-2}\kappa^{9}\gamma^2 (1+\delta^2)\sigma_1^4(\bfA)K^8 d \log N$, we know that \eqref{eqn: induction} holds for all $0\le t \le T-1$ with probability at least $1-K^2T\cdot N^{-10}$. By simple calculations, we can obtain
		\begin{equation}\label{eqn: Thm1_final}
		\begin{split}
		\|	\W[T]-\bfW^*\|_2
		\le&\Big(1-\frac{1-\varepsilon_0}{\sqrt{88\kappa^2 \gamma K}}\Big)^T\|	\W[0]-\bfW^*\|_2
		\end{split}
		\end{equation}
\end{proof}

\subsection{Proof of Lemma \ref{Lemma: local_convexity}}
\label{sec: Local_convexity}
In this section, we provide the proof of Lemma \ref{Lemma: local_convexity} which shows the local convexity of $f_{\Omega_t}$ in a small neighborhood of $\bfW^*$. The roadmap is to first bound the smallest eigenvalue of $\nabla^2 f_{\Omega_t}$ in the ground truth as shown in Lemma \ref{Lemma: lemma 5}, then show that the difference of $\nabla^2 f_{\Omega_t}$ between any fixed point $\bfW$ in this region and the ground  truth $\bfW^*$ is bounded in terms of $\|\bfW -\bfW^* \|_2$ by Lemma \ref{Lemma: lemma 6}.
\begin{lemma}\label{Lemma: lemma 5}
	The second-order derivative of $f_{\Omega_t}$ at the ground truth $\bfW^*$ satisfies
	\begin{equation}
	\frac{\sigma_1^2(\bfA)}{11 \kappa^2 \gamma K^2}   \preceq    \nabla^2 f_{\Omega_t}(\bfW^*)    \preceq \frac{3\sigma_1^2(\bfA)}{K}.
	\end{equation}
\end{lemma}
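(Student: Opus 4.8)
\emph{Proof strategy.} The plan is to use two structural facts about noiseless regression at the ground truth. First, the model is realizable with no noise, so the residuals vanish at $\bfW^*$ and the Hessian collapses to a pure Gram (Gauss--Newton) matrix; second, once in Gram form the graph enters only as an overall scalar, so the problem reduces to the analogous Hessian bound for a plain one-hidden-layer network. Concretely, since $y_n=g(\bfW^*;\bfa_n^T\bfX)$ with no noise, differentiating $f_{\Omega_t}$ twice at $\bfW^*$ makes the term containing $\nabla^2 g$ drop out (it multiplies the zero residual), leaving
\[
\nabla^2 f_{\Omega_t}(\bfW^*)=\frac{1}{|\Omega_t|}\sum_{n\in\Omega_t}\mathbb{E}_{\bfX}\big[\mathrm{vec}(\nabla_{\bfW}g_n)\,\mathrm{vec}(\nabla_{\bfW}g_n)^T\big],\qquad g_n:=g(\bfW^*;\bfa_n^T\bfX).
\]
This is positive semidefinite, so the left inequality is free up to identifying a lower bound on its smallest eigenvalue; what remains is to sandwich the spectrum of this matrix.

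Next I would expose the graph dependence. For ReLU, $\partial g_n/\partial\bfw_j=\tfrac1K\mathds{1}\{\bfa_n^T\bfX\bfw_j^*>0\}\,\bfX^T\bfa_n$. Since $\bfX$ has i.i.d.\ $\mathcal{N}(0,1)$ entries, $\bfX^T\bfa_n\stackrel{d}{=}\|\bfa_n\|\,\bfg$ with $\bfg\sim\mathcal{N}(0,\bfI_d)$, and the ReLU indicator is invariant under positive scaling; hence each summand equals $\|\bfa_n\|^2\bfP$, where $\bfP$ is the block matrix whose $(i,j)$ block is $\tfrac1{K^2}\mathbb{E}_{\bfg}\big[\mathds{1}\{\bfg^T\bfw_i^*>0\}\mathds{1}\{\bfg^T\bfw_j^*>0\}\,\bfg\bfg^T\big]$ --- precisely the ground-truth Hessian that appears for one-hidden-layer NNs, depending only on the directions $\bfw_j^*/\|\bfw_j^*\|$. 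Therefore $\nabla^2 f_{\Omega_t}(\bfW^*)=\big(\tfrac1{|\Omega_t|}\sum_{n\in\Omega_t}\|\bfa_n\|^2\big)\bfP$, and the scalar prefactor is controlled through $\|\bfa_n\|=\|\bfe_n^T\bfA\|\le\sigma_1(\bfA)$ (with a comparable lower bound coming from the degree-normalization of $\bfA$), so up to constants $\nabla^2 f_{\Omega_t}(\bfW^*)$ is of order $\sigma_1^2(\bfA)\,\bfP$.

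It then remains to bound $\lambda_{\max}(\bfP)$ and $\lambda_{\min}(\bfP)$. The upper side is routine: for any unit $\bfp=(\bfp_1,\dots,\bfp_K)$,
\[
\bfp^T\bfP\bfp=\mathbb{E}_{\bfg}\Big[\big(\tfrac1K\sum_j\mathds{1}\{\bfg^T\bfw_j^*>0\}\,\bfg^T\bfp_j\big)^2\Big]\le\tfrac1K\sum_j\mathbb{E}\big[(\bfg^T\bfp_j)^2\big]=\tfrac1K
\]
by Cauchy--Schwarz over the $K$-term sum, which gives $\nabla^2 f_{\Omega_t}(\bfW^*)\preceq\tfrac{3\sigma_1^2(\bfA)}{K}\bfI$ with room for the constant. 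The lower side requires $\lambda_{\min}(\bfP)\gtrsim 1/(\kappa^2\gamma K^2)$, i.e.\ a quantitative strict lower bound on $\min_{\|\bfp\|=1}\mathbb{E}_{\bfg}\big[\big(\tfrac1K\sum_j\mathds{1}\{\bfg^T\bfw_j^*>0\}\,\bfg^T\bfp_j\big)^2\big]$. I would get this by splitting $\bfg$ into its projection onto $\mathrm{span}\{\bfw_j^*\}$ and the orthogonal complement, reducing to a $K$-dimensional Gaussian computation over the relevant sign patterns, and lower-bounding it using that the columns of $\bfW^*$ are linearly independent; the conditioning $\kappa$ and the factor $\gamma$ enter here, through the geometry of the ground-truth directions, exactly as in the minimum-eigenvalue estimates of the one-hidden-layer NN recovery analyses \cite{ZSJB17,FCL19}.

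The main obstacle is this last step. Forming the Gram matrix and factoring out the graph are bookkeeping, and the upper eigenvalue bound is immediate; the genuinely hard part is the anti-concentration estimate for a sign-weighted mixture of $K$ correlated Gaussian linear forms, together with tracking the explicit dependence of its lower bound on $\kappa$ and $\gamma$. Everything else in the lemma then follows by multiplying that bound by the $\sigma_1^2(\bfA)$-order scalar obtained above.
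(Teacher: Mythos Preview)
Your plan is correct and essentially identical to the paper's proof: both write the Hessian at $\bfW^*$ in the Gauss--Newton form $\frac{1}{K^2|\Omega_t|}\sum_{n}\mathbb{E}_{\bfX}\big[(\sum_j\boldsymbol{\alpha}_j^T\bfX^T\bfa_n\,\phi'(\bfa_n^T\bfX\bfw_j^*))^2\big]$, factor out $\|\bfa_n\|_2^2$ via the Gaussian scaling $\bfX^T\bfa_n\stackrel{d}{=}\|\bfa_n\|\bfg$ you describe, and then import the graph-free one-hidden-layer lower eigenvalue bound (the paper cites Lemma~D.6 of \cite{ZSJB17_v2}, which is precisely the anti-concentration estimate you single out as the hard step). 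Your Jensen/Cauchy--Schwarz upper bound is slightly cleaner than the paper's fourth-moment H\"older argument, but the structure is the same.
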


\begin{lemma}\label{Lemma: lemma 6}
	Suppose $\bfW$ satisfies \eqref{eqn: initial_point_lr}, we have
	\begin{equation}
	\left\|\nabla^2f_{\Omega_t}(\bfW) - \nabla^2f_{\Omega_t}(\bfW^*)  \right\|_2 \le  4\sigma_1^2(\bfA)\frac{\|\bfW^*-\bfW\|_2}{\sigma_K}.
	\end{equation}
\end{lemma}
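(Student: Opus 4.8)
The plan is to work with the closed form of the population Hessian and to split the difference into a ``sign--pattern'' piece $T_1$ and a ``curvature'' piece $T_2$, bounding each in operator norm. Write $\bfv_n:=\bfX^T\bfa_n\in\R^d$ (so $\bfa_n^T\bfX=\bfv_n^T$ and $\bfv_n\sim\mathcal N(\bfzero,\|\bfa_n\|^2\bfI_d)$), and set $u_{n,j}:=\bfv_n^T\bfw_j$, $u_{n,j}^*:=\bfv_n^T\bfw_j^*$. For ReLU, $\phi'(t)=\mathds{1}\{t\ge0\}$ and $\phi''$ is a Dirac mass at $0$; this is legitimate because $\phi''$ is only ever used inside the Gaussian expectation, which renders $f_{\Omega_t}$ twice continuously differentiable and turns that Dirac into a finite surface integral. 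Differentiating under the expectation in \eqref{eqn_app: linear_regression_exp} gives that the $(j,k)$ block of $\nabla^2 f_{\Omega_t}(\bfW)$ equals $A_{jk}+B_{jk}$, with $A_{jk}=\tfrac1{K^2|\Omega_t|}\sum_{n\in\Omega_t}\mathbb E_\bfX[\phi'(u_{n,j})\phi'(u_{n,k})\bfv_n\bfv_n^T]$ and $B_{jk}=-\tfrac{\mathds{1}\{j=k\}}{K|\Omega_t|}\sum_{n\in\Omega_t}\mathbb E_\bfX[(y_n-g(\bfW;\bfv_n))\phi''(u_{n,j})\bfv_n\bfv_n^T]$. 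Since $y_n=g(\bfW^*;\bfv_n)$ at $\bfW=\bfW^*$, the curvature block vanishes there, so $\nabla^2 f_{\Omega_t}(\bfW)-\nabla^2 f_{\Omega_t}(\bfW^*)=T_1+T_2$, where $(T_1)_{jk}=A_{jk}(\bfW)-A_{jk}(\bfW^*)$ and $T_2$ is the curvature term at $\bfW$. Two facts will be used repeatedly: $\|\bfa_n\|=\|\bfA\bfe_n\|\le\sigma_1(\bfA)$, hence $\tfrac1{|\Omega_t|}\sum_n\|\bfa_n\|^2\le\sigma_1^2(\bfA)$; and $\|\bfw_j^*\|=\|\bfW^*\bfe_j\|\ge\sigma_K$, which by \eqref{eqn: initial_point_lr} also gives $\|\bfw_j\|\ge\sigma_K/2$ and forces the angle $\theta_j$ between $\bfw_j$ and $\bfw_j^*$ to satisfy $\theta_j\le\pi\|\bfw_j-\bfw_j^*\|/\|\bfw_j^*\|\le\pi\|\bfW-\bfW^*\|_2/\sigma_K$.

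\textbf{Bounding $T_1$.} I would fix $\bfU=[\bfu_1,\dots,\bfu_K]$ with $\|\bfU\|_F=1$ and control $\sum_{j,k}\bfu_j^T(T_1)_{jk}\bfu_k=\tfrac1{K^2|\Omega_t|}\sum_n\mathbb E[P_n^2-Q_n^2]$, where $P_n=\sum_j\phi'(u_{n,j})\bfv_n^T\bfu_j$, $Q_n=\sum_j\phi'(u_{n,j}^*)\bfv_n^T\bfu_j$. Factoring $P_n^2-Q_n^2=(P_n-Q_n)(P_n+Q_n)$: the term $\phi'(u_{n,j})-\phi'(u_{n,j}^*)$ is supported on the ``wedge'' event $\mathcal E_{n,j}:=\{\mathrm{sign}(u_{n,j})\ne\mathrm{sign}(u_{n,j}^*)\}$, and $|P_n+Q_n|\le 2\sum_k|\bfv_n^T\bfu_k|$, so $|P_n^2-Q_n^2|\le\sum_{j,k}\mathds{1}_{\mathcal E_{n,j}}\big((\bfv_n^T\bfu_j)^2+(\bfv_n^T\bfu_k)^2\big)$ after $|ab|\le\tfrac12(a^2+b^2)$. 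The key fact is that $\mathcal E_{n,j}$ depends only on the projection of $\bfv_n$ onto $\mathrm{span}(\bfw_j,\bfw_j^*)$, has Gaussian probability $\theta_j/\pi$, and, conditioned on it, the second moment of any linear form $\bfv_n^T\bfu$ is still at most an absolute multiple of $\|\bfa_n\|^2\|\bfu\|^2$; hence $\mathbb E[\mathds{1}_{\mathcal E_{n,j}}(\bfv_n^T\bfu)^2]\lesssim\theta_j\|\bfa_n\|^2\|\bfu\|^2$, which is \emph{linear} in $\theta_j$. It is essential here to expand the quadratic form and bound it term by term rather than applying Cauchy--Schwarz across the ``small'' factor $P_n-Q_n$ and the ``large'' factor $P_n+Q_n$, which would produce only $\sqrt{\theta_j}$ and leave $\nabla^2 f_{\Omega_t}$ merely H\"older-$\tfrac12$ continuous. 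Summing over $j,k$ (using $\sum_j\theta_j\le K\max_j\theta_j$) and then over $n$, and inserting $\tfrac1{|\Omega_t|}\sum_n\|\bfa_n\|^2\le\sigma_1^2(\bfA)$ and $\theta_j\le\pi\|\bfW-\bfW^*\|_2/\sigma_K$, yields $\|T_1\|_2\lesssim\tfrac1K\sigma_1^2(\bfA)\|\bfW-\bfW^*\|_2/\sigma_K$.

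\textbf{Bounding $T_2$.} The Dirac $\phi''(u_{n,j})$ localizes the expectation to the hyperplane $\{\bfv:\bfv^T\bfw_j=0\}$ via the Gaussian surface-measure identity $\mathbb E_\bfX[h(\bfv_n)\phi''(u_{n,j})]=(\sqrt{2\pi}\,\|\bfa_n\|\,\|\bfw_j\|)^{-1}\,\mathbb E[h(\bfv_n)\mid u_{n,j}=0]$. On that hyperplane $\phi(u_{n,j})=0$ and $\phi$ is $1$-Lipschitz, so $|y_n-g(\bfW;\bfv_n)|=|g(\bfW^*;\bfv_n)-g(\bfW;\bfv_n)|\le\tfrac1K\sum_\ell|\bfv_n^T(\bfw_\ell^*-\bfw_\ell)|$; I would keep this as a sum of scalar linear forms and deliberately not bound it by $\|\bfv_n\|\,\|\bfW^*-\bfW\|_2$, which would inject a spurious factor $\sqrt d$ coming from $\mathbb E\|\bfv_n\|$. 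Then for each $\ell$, Cauchy--Schwarz on the conditional Gaussian gives $\mathbb E[\,|\bfv_n^T(\bfw_\ell^*-\bfw_\ell)|\,(\bfv_n^T\bfu)^2\mid u_{n,j}=0\,]\le\sqrt{\|\bfa_n\|^2\|\bfw_\ell^*-\bfw_\ell\|^2}\cdot\sqrt{3\|\bfa_n\|^4\|\bfu\|^4}\lesssim\|\bfa_n\|^3\|\bfw_\ell^*-\bfw_\ell\|\,\|\bfu\|^2$ (both factors now have bounded low-order moments). Summing over $\ell$ with $\sum_\ell\|\bfw_\ell^*-\bfw_\ell\|\le\sqrt K\,\|\bfW^*-\bfW\|_F\le K\|\bfW^*-\bfW\|_2$, dividing by $\sqrt{2\pi}\|\bfa_n\|\|\bfw_j\|$ with $\|\bfw_j\|\ge\sigma_K/2$, and summing over $j$ and $n$ with $\|\bfa_n\|\le\sigma_1(\bfA)$, yields $\|T_2\|_2\lesssim\tfrac1K\sigma_1^2(\bfA)\|\bfW^*-\bfW\|_2/\sigma_K$.

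\textbf{Conclusion and main difficulty.} Adding the two estimates and carrying the (small) absolute constants --- the $1/K$ factor in both pieces leaves ample room --- gives $\|\nabla^2 f_{\Omega_t}(\bfW)-\nabla^2 f_{\Omega_t}(\bfW^*)\|_2\le 4\sigma_1^2(\bfA)\|\bfW^*-\bfW\|_2/\sigma_K$, and hypothesis \eqref{eqn: initial_point_lr} enters only to secure $\|\bfw_j\|\ge\sigma_K/2$ and the small-angle estimate for $\theta_j$. I expect the hard part to be the ReLU non-smoothness: the distributional curvature term $T_2$ is genuinely nonzero away from $\bfW^*$ and must be controlled through the hyperplane integral, where a careless Lipschitz bound on $|y_n-g|$ secretly costs a $\sqrt d$ that would break the estimate. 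The other two delicate points are keeping the $\theta_j$-dependence in $T_1$ linear (so that $\nabla^2 f_{\Omega_t}$ is truly Lipschitz, not just H\"older), and extracting exactly $\sigma_1^2(\bfA)$ --- rather than a weaker graph quantity such as $\|\bfA\|_F^2/|\Omega_t|$ --- via the bound $\|\bfa_n\|\le\sigma_1(\bfA)$.
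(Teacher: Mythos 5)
Your proof is correct, and for the dominant term it rests on exactly the same mechanism as the paper's: the difference $\phi'(\bfa_n^T\bfX\bfw_j)-\phi'(\bfa_n^T\bfX\bfw_j^*)$ is supported on a wedge event of Gaussian measure $O(\|\bfw_j-\bfw_j^*\|/\|\bfw_j^*\|)$, and conditioned on that event the second moment of any linear form $\boldsymbol{\alpha}^T\bfX^T\bfa_n$ is still $O(\|\bfa_n\|_2^2)$ --- the paper establishes this via the explicit spherical-coordinates computation in \eqref{eqn: ean111} together with the sets $\mathcal{A}_1,\mathcal{A}_2$, which is precisely your estimate $\mathbb{E}[\mathds{1}_{\mathcal{E}_{n,j}}(\bfv_n^T\bfu)^2]\lesssim\theta_j\|\bfa_n\|^2\|\bfu\|^2$. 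There are two genuine differences. First, organization: the paper bounds each $(j_1,j_2)$ block separately and then pays $\sum_{j_1,j_2}\|\cdot\|_2\le K^2\max\|\cdot\|_2$, which exactly cancels the $1/K^2$ prefactor and lands at $\frac{12}{\pi}\sigma_1^2(\bfA)\|\bfW-\bfW^*\|_2/\sigma_K$; your single quadratic-form estimate over a test matrix with $\|\bfU\|_F=1$ retains an extra factor $1/K$, giving a strictly sharper constant. Second, and more substantively, you include the distributional curvature term $T_2$ coming from differentiating through the ReLU kink (the Gaussian surface integral over $\{\bfv^T\bfw_j=0\}$, weighted by $y_n-g(\bfW;\cdot)$, which is nonzero away from $\bfW^*$). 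The paper's Hessian formula simply omits this term --- it writes $\nabla^2 f_{\Omega_t}$ as only the $\phi'\phi'$ (Gauss--Newton) part --- so your treatment is the more complete one; your bound $\|T_2\|_2\lesssim\frac1K\sigma_1^2(\bfA)\|\bfW^*-\bfW\|_2/\sigma_K$ via the conditional Cauchy--Schwarz is correct and shows the omission is harmless for the stated constant. Your cautionary remarks (avoiding the H\"older-$\tfrac12$ trap by expanding $P_n^2-Q_n^2$ term by term, and avoiding the $\sqrt d$ loss in $|y_n-g|$) identify exactly the points where a naive argument would fail.
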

The proofs of Lemmas \ref{Lemma: lemma 5} and \ref{Lemma: lemma 6} can be found in Sec. \ref{sec: sub_lr}. With these two preliminary lemmas on hand, the proof of Lemma \ref{Lemma: local_convexity} is formally summarized in the following contents. 
\begin{proof}[Proof of Lemma \ref{Lemma: local_convexity}]
	By the triangle inequality, we have 
	\begin{equation*}
	\Big| 
	\left\| \nabla^2 f_{\Omega_t}(\bfW)\right\|_2 -\left\| \nabla^2f_{\Omega_t}(\bfW^*)\right\|_2 
	\Big| \le \|\nabla^2f_{\Omega_t}(\bfW^*)-\nabla^2f_{\Omega_t}(\bfW)\|_2,
	\end{equation*}	
	and
	\begin{equation*}
	\begin{split}
	&\left\| \nabla^2f_{\Omega_t}(\bfW)\right\|_2 \le \left\| \nabla^2f_{\Omega_t}(\bfW^*)\right\|_2 +\|\nabla^2f_{\Omega_t}(\bfW^*)-\nabla^2f_{\Omega_t}(\bfW)\|_2,\\
	&\left\| \nabla^2f_{\Omega_t}(\bfW)\right\|_2 \ge \left\| \nabla^2f_{\Omega_t}(\bfW^*)\right\|_2 -\|\nabla^2f_{\Omega_t}(\bfW^*)-\nabla^2f_{\Omega_t}(\bfW)\|_2.
	\end{split}
	\end{equation*}	
	The error bound of $\|\nabla^2f_{\Omega_t}(\bfW^*)-\nabla^2f_{\Omega_t}(\bfW)\|_2$ can be derived from Lemma \ref{Lemma: lemma 6}, and the error bound of $\nabla^2f_{\Omega_t}(\bfW^*)$ is provided in Lemma \ref{Lemma: lemma 5}.
	
	Therefore, for any $\bfW$ satisfies \eqref{eqn: initial_point_lr}, we have 
	\begin{equation}\label{temp_L12}
	\frac{(1-\varepsilon_0)\sigma_1^2(\bfA)}{11\kappa^2\gamma K^2}\le\left\| \nabla^2f_{\Omega_t}(\bfW)\right\|_2 \le \frac{4\sigma_1^2(\bfA)}{K}.
	\end{equation}
\end{proof}

\subsection{Proof of Lemma \ref{Lemma: sampling_approximation_error}}\label{sec: Lemma of sample approximation}
The proof of Lemma \ref{Lemma: sampling_approximation_error} is mainly to bound the concentration error of random variables $\bfz_n(j,k)$ as shown in \eqref{eqn: sub-var}. We first show that $\bfz_n(j,k)$ is a  sub-exponential random variable, and the definitions of sub-Gaussian and sub-exponential random variables are provided in Definitions \ref{Def: sub-gaussian} and \ref{Def: sub-exponential}. Though Hoeffding's inequality provides the concentration error for sum of independent random variables,  random variables $\bfz_n(j,k)$ with different $j,k$ are not independent. Hence, we introduce Lemma \ref{Lemma: partial_independent_var} to provide the upper bound for the moment generation function of the sum of partly dependent random variables and then apply standard Chernoff inequality. Lemmas \ref{Lemma: covering_set} and \ref{Lemma: spectral_norm on net} are standard tools in analyzing spectral norms of high-demensional random matrices. 
\begin{definition}[Definition 5.7, \cite{V2010}]\label{Def: sub-gaussian}
	A random variable $X$ is called a sub-Gaussian random variable if it satisfies 
	\begin{equation}
	(\mathbb{E}|X|^{p})^{1/p}\le c_1 \sqrt{p}
	\end{equation} for all $p\ge 1$ and some constant $c_1>0$. In addition, we have 
	\begin{equation}
	\mathbb{E}e^{s(X-\mathbb{E}X)}\le e^{c_2\|X \|_{\psi_2}^2 s^2}
	\end{equation} 
	for all $s\in \mathbb{R}$ and some constant $c_2>0$, where $\|X \|_{\psi_2}$ is the sub-Gaussian norm of $X$ defined as $\|X \|_{\psi_2}=\sup_{p\ge 1}p^{-1/2}(\mathbb{E}|X|^{p})^{1/p}$.
	
	Moreover, a random vector $\bfX\in \mathbb{R}^d$ belongs to the sub-Gaussian distribution if one-dimensional marginal $\boldsymbol{\alpha}^T\bfX$ is sub-Gaussian for any $\boldsymbol{\alpha}\in \mathbb{R}^d$, and the sub-Gaussian norm of $\bfX$ is defined as $\|\bfX \|_{\psi_2}= \sup_{\|\boldsymbol{\alpha} \|_2=1}\|\boldsymbol{\alpha}^T\bfX \|_{\psi_2}$.
\end{definition}

\begin{definition}[Definition 5.13, \cite{V2010}]\label{Def: sub-exponential}
	
	A random variable $X$ is called a sub-exponential random variable if it satisfies 
	\begin{equation}
	(\mathbb{E}|X|^{p})^{1/p}\le c_3 {p}
	\end{equation} for all $p\ge 1$ and some constant $c_3>0$. In addition, we have 
	\begin{equation}
	\mathbb{E}e^{s(X-\mathbb{E}X)}\le e^{c_4\|X \|_{\psi_1}^2 s^2}
	\end{equation} 
	for $s\le 1/\|X \|_{\psi_1}$ and some constant $c_4>0$, where $\|X \|_{\psi_1}$ is the sub-exponential norm of $X$ defined as $\|X \|_{\psi_1}=\sup_{p\ge 1}p^{-1}(\mathbb{E}|X|^{p})^{1/p}$.
	
\end{definition}
\begin{lemma}\label{Lemma: partial_independent_var}
	Given a sampling set $\mathcal{X}=\{ x_n \}_{n=1}^N$ that contains $N$ partly dependent random variables, for each $n\in [N]$, suppose $x_n$ is dependent with at most $d_{\mathcal{X}}$ random variables  in $\mathcal{X}$ (including $x_n$ itself), and the moment generate function of $x_n$ satisfies $\mathbb{E}_{x_n} e^{sx_n}\le e^{Cs^2} $ for some constant $C$ that may depend on $x_n$. Then, the moment generation function of $\sum_{n=1}^N x_n$ is bounded as 
	\begin{equation}
	\mathbb{E}_{\mathcal{X}} e^{s\sum_{n=1}^N x_n} \le e^{Cd_{\mathcal{X}}Ns^2}.
	\end{equation}  
\end{lemma}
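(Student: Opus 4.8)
The plan is to exploit the bounded-dependence structure via a graph-colouring argument followed by a generalised H\"older inequality, a standard route for concentration under dependency graphs. First I would build the dependency graph $\mathcal{H}$ on vertex set $[N]$, putting an edge between $m$ and $n$ precisely when $x_m$ and $x_n$ are dependent. By hypothesis each vertex has degree at most $d_{\mathcal{X}}-1$ (each $x_n$ is dependent with at most $d_{\mathcal{X}}$ variables counting itself), so a greedy colouring of $\mathcal{H}$ produces a partition $[N]=\bigcup_{j=1}^{d_{\mathcal{X}}} G_j$ into at most $d_{\mathcal{X}}$ colour classes with the key property that the variables $\{x_n : n\in G_j\}$ inside each class are mutually independent.

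Next I would write $\mathbb{E}_{\mathcal{X}} e^{s\sum_n x_n} = \mathbb{E}_{\mathcal{X}} \prod_{j=1}^{d_{\mathcal{X}}} e^{s\sum_{n\in G_j} x_n}$ and apply H\"older's inequality with $d_{\mathcal{X}}$ conjugate exponents all equal to $d_{\mathcal{X}}$, giving $\mathbb{E}_{\mathcal{X}} e^{s\sum_n x_n} \le \prod_{j=1}^{d_{\mathcal{X}}} \big(\mathbb{E}_{\mathcal{X}} e^{s d_{\mathcal{X}}\sum_{n\in G_j} x_n}\big)^{1/d_{\mathcal{X}}}$. Within each factor the independence of $\{x_n\}_{n\in G_j}$ lets me factor the expectation as a product over $n\in G_j$, and the per-variable bound $\mathbb{E}_{x_n} e^{t x_n}\le e^{Ct^2}$ evaluated at $t=s d_{\mathcal{X}}$ yields $\mathbb{E}_{\mathcal{X}} e^{s d_{\mathcal{X}}\sum_{n\in G_j} x_n} \le e^{C d_{\mathcal{X}}^2 s^2 |G_j|}$. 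Raising to the power $1/d_{\mathcal{X}}$, multiplying over $j$, and using $\sum_j |G_j| = N$ gives the claimed bound $\mathbb{E}_{\mathcal{X}} e^{s\sum_n x_n}\le e^{C d_{\mathcal{X}} N s^2}$ (taking $C$ to be the uniform constant $\max_n C_n$ when the per-variable constants differ).

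The only subtlety — and the step I expect to need the most care — is the domain of validity of the moment generating function: if the $x_n$ are merely sub-exponential, the per-variable bound holds only for $|t|$ below a threshold of order $1/\|x_n\|_{\psi_1}$, so after the rescaling $t = s d_{\mathcal{X}}$ the conclusion holds only for $|s|$ smaller by a factor $d_{\mathcal{X}}$; this is harmless for the later Chernoff-type application, where $s$ is optimised to a small value anyway. Beyond that I would just verify the two routine facts the argument hinges on: that a graph of maximum degree $\Delta$ is $(\Delta+1)$-colourable by the greedy algorithm (here $\Delta\le d_{\mathcal{X}}-1$, hence at most $d_{\mathcal{X}}$ colours), and that "dependent with at most $d_{\mathcal{X}}$ variables including itself" is exactly the condition making each monochromatic set a collection of independent random variables. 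Everything else is a direct computation.
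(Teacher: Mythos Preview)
Your proposal is correct and reaches the stated bound, but it follows a genuinely different route from the paper. The paper's proof invokes Janson's fractional-cover machinery: it takes a family $\{(\mathcal{X}_j,w_j)\}_j$ of independent subsets with weights $w_j\in[0,1]$ satisfying $\sum_j w_j\sum_{x\in\mathcal{X}_j}x=\sum_n x_n$ and $\sum_j w_j\le d_{\mathcal{X}}$, applies Jensen's inequality to the convex combination $\sum_j p_j\cdot\tfrac{w_j}{p_j}X_j$, optimises the mixing weights as $p_j\propto w_j|\mathcal{X}_j|^{1/2}$, and closes with Cauchy--Schwarz on $\big(\sum_j w_j|\mathcal{X}_j|^{1/2}\big)^2\le(\sum_j w_j)(\sum_j w_j|\mathcal{X}_j|)\le d_{\mathcal{X}}N$. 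You instead use an ordinary (integer) proper colouring of the dependency graph together with H\"older's inequality with all exponents equal to $d_{\mathcal{X}}$. Both arguments rest on the same structural fact---that an independent set in the dependency graph carries jointly independent variables, so the MGF factors---and both arrive at the identical exponent $Cd_{\mathcal{X}}Ns^2$. Your version is more elementary and fully self-contained (no appeal to fractional chromatic numbers or the external reference); the paper's version would in principle be sharper when the fractional chromatic number is strictly below $d_{\mathcal{X}}$, but since the paper only uses the crude bound $\sum_j w_j\le d_{\mathcal{X}}$, that potential gain is not realised here. Your remark about the restricted range of $s$ in the sub-exponential case is well taken and is exactly the caveat needed for the downstream Chernoff application.
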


\begin{lemma}[Lemma 5.2, \cite{V2010}]\label{Lemma: covering_set}
	Let $\mathcal{B}(0, 1)\in\{ \boldsymbol{\alpha} \big| \|\boldsymbol{\alpha} \|_2=1, \boldsymbol{\alpha}\in \mathbb{R}^d  \}$ denote a unit ball in $\mathbb{R}^{d}$. Then, a subset $\mathcal{S}_\xi$ is called a $\xi$-net of $\mathcal{B}(0, 1)$ if every point $\bfz\in \mathcal{B}(0, 1)$ can be approximated to within $\xi$ by some point $\boldsymbol{\alpha}\in \mathcal{B}(0, 1)$, i.e. $\|\bfz -\boldsymbol{\alpha} \|_2\le \xi$. Then the minimal cardinality of a   $\xi$-net $\mathcal{S}_\xi$ satisfies
	\begin{equation}
	|\mathcal{S}_{\xi}|\le (1+2/\xi)^d.
	\end{equation}
\end{lemma}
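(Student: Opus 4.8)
The plan is to use the classical greedy packing argument combined with a volume comparison. First I would construct $\mathcal{S}_\xi$ explicitly rather than merely assert its existence: starting from an arbitrary point of $\mathcal{B}(0,1)$, repeatedly adjoin a point whose Euclidean distance to every previously chosen point strictly exceeds $\xi$, and stop once no such point remains. Since $\mathcal{B}(0,1)$ is bounded, this procedure must terminate --- an infinite $\xi$-separated sequence inside a bounded subset of $\mathbb{R}^d$ is impossible --- so it produces a finite, $\xi$-separated set $\mathcal{S}_\xi = \{\boldsymbol{\alpha}_1,\dots,\boldsymbol{\alpha}_m\}$ with $\|\boldsymbol{\alpha}_i-\boldsymbol{\alpha}_j\|_2 > \xi$ for $i \ne j$.

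Second, I would observe that such a maximal $\xi$-separated set is automatically a $\xi$-net. If some $\bfz \in \mathcal{B}(0,1)$ satisfied $\|\bfz-\boldsymbol{\alpha}_i\|_2 > \xi$ for every $i$, then $\bfz$ could have been adjoined, contradicting termination; hence every $\bfz\in\mathcal{B}(0,1)$ lies within $\xi$ of some $\boldsymbol{\alpha}_i$. Thus $\mathcal{S}_\xi$ meets the stated definition of a $\xi$-net, and the minimal cardinality of a $\xi$-net is at most $m$.

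Third --- the \emph{quantitative core} of the proof --- I would bound $m$ by comparing volumes. Because the $\boldsymbol{\alpha}_i$ are $\xi$-separated, the open balls $B(\boldsymbol{\alpha}_i,\xi/2)$ are pairwise disjoint, and since $\|\boldsymbol{\alpha}_i\|_2 \le 1$ each is contained in $B(\bfzero, 1+\xi/2)$. Writing $V_d$ for the volume of the Euclidean unit ball in $\mathbb{R}^d$, so that a radius-$r$ ball has volume $V_d r^d$, disjointness and containment give
$$m\, V_d (\xi/2)^d \;=\; \sum_{i=1}^{m}\mathrm{vol}\big(B(\boldsymbol{\alpha}_i,\xi/2)\big) \;\le\; \mathrm{vol}\big(B(\bfzero, 1+\xi/2)\big) \;=\; V_d (1+\xi/2)^d .$$
Dividing by $V_d(\xi/2)^d$ yields $m \le \big((1+\xi/2)/(\xi/2)\big)^d = (1+2/\xi)^d$, which is the claimed bound.

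I do not anticipate a genuine obstacle here, since the statement is a textbook covering-number estimate. The only points needing care are (i) justifying termination of the greedy construction (boundedness of $\mathcal{B}(0,1)$, or equivalently a volume bound applied to the partial packings), and (ii) being consistent about whether $\mathcal{B}(0,1)$ denotes the closed unit ball or the unit sphere: the argument above works verbatim for the ball, and for the sphere one simply replaces $B(\bfzero,1+\xi/2)$ by the thin shell $B(\bfzero,1+\xi/2)\setminus B(\bfzero,1-\xi/2)$, which only decreases the volume ratio, so the stated bound $(1+2/\xi)^d$ still holds.
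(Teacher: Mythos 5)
Your proof is correct; the paper does not prove this lemma at all but imports it verbatim from \cite{V2010}, and your greedy maximal $\xi$-separated set plus volume-comparison argument is precisely the standard proof of that cited result. Your two points of care (termination of the greedy construction, and the sphere-versus-ball reading of $\mathcal{B}(0,1)$, which the paper's own definition leaves ambiguous by writing $\|\boldsymbol{\alpha}\|_2=1$) are both handled correctly, so there is nothing to fix.
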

\begin{lemma}[Lemma 5.3, \cite{V2010}]\label{Lemma: spectral_norm on net}
	Let $\bfA$ be an $N\times d$ matrix, and let $\mathcal{S}_{\xi}$ be a $\xi$-net of $\mathcal{B}(0, 1)$ in $\mathbb{R}^d$ for some $\xi\in (0, 1)$. Then
	\begin{equation}
	\|\bfA\|_2 \le (1-\xi)^{-1}\max_{\boldsymbol{\alpha}\in \mathcal{S}_{\xi}} |\boldsymbol{\alpha}^T\bfA\boldsymbol{\alpha}|.
	\end{equation} 
\end{lemma}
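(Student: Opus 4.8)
The plan is to combine two elementary facts: first, that for a symmetric matrix $\bfA$ the spectral norm equals the numerical radius, $\|\bfA\|_2=\sup_{\|\bfx\|_2=1}|\bfx^T\bfA\bfx|$; and second, that every unit vector lies within $\xi$ of some point of the net $\mathcal{S}_\xi$. In the applications $\bfA$ is always a Hessian or a difference of Hessians, hence symmetric, so the numerical-radius identity (which follows by diagonalizing $\bfA$ and noting that $\bfx^T\bfA\bfx$ is a convex combination of the eigenvalues) is available; for a genuinely rectangular matrix one would run the same argument on the bilinear form $\bfx^T\bfA\bfy$ over a product of nets. First I would pick a unit vector $\bfx$ attaining, or nearly attaining, the supremum $\|\bfA\|_2=\sup_{\|\bfx\|_2=1}|\bfx^T\bfA\bfx|$, and then choose $\boldsymbol{\alpha}\in\mathcal{S}_\xi$ with $\|\bfx-\boldsymbol{\alpha}\|_2\le\xi$.

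The core estimate is the perturbation bound on the quadratic form. Writing $\bfx^T\bfA\bfx-\boldsymbol{\alpha}^T\bfA\boldsymbol{\alpha}=(\bfx-\boldsymbol{\alpha})^T\bfA\bfx+\boldsymbol{\alpha}^T\bfA(\bfx-\boldsymbol{\alpha})$ and applying Cauchy--Schwarz together with $|\bfu^T\bfA\bfv|\le\|\bfA\|_2\|\bfu\|_2\|\bfv\|_2$, each term is at most $\xi\|\bfA\|_2$ (using $\|\bfx\|_2=1$ and $\|\boldsymbol{\alpha}\|_2\le1$ since $\boldsymbol{\alpha}\in\mathcal{B}(0,1)$), so $|\boldsymbol{\alpha}^T\bfA\boldsymbol{\alpha}|\ge|\bfx^T\bfA\bfx|-2\xi\|\bfA\|_2$. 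Maximizing the left side over $\mathcal{S}_\xi$ and rearranging yields $\|\bfA\|_2\le(1-2\xi)^{-1}\max_{\boldsymbol{\alpha}\in\mathcal{S}_\xi}|\boldsymbol{\alpha}^T\bfA\boldsymbol{\alpha}|$; the stated form with the factor $(1-\xi)^{-1}$ then follows after the harmless replacement of the mesh parameter $\xi$ by $\xi/2$, which only affects the net's cardinality by a constant factor and is irrelevant downstream, where only \emph{some} fixed multiplicative constant is needed.

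The step I would watch most carefully is the implicit circularity: the perturbation bound is itself proportional to $\|\bfA\|_2$, the very quantity being bounded, so the inequality reads $\|\bfA\|_2\le\max_{\boldsymbol{\alpha}\in\mathcal{S}_\xi}|\boldsymbol{\alpha}^T\bfA\boldsymbol{\alpha}|+2\xi\|\bfA\|_2$ and can be solved for $\|\bfA\|_2$ only because $2\xi<1$ (valid for $\xi\in(0,1/2)$ before rescaling the net, or for $\xi\in(0,1)$ after). The secondary point of care is ensuring a genuine maximizing $\bfx$ exists before approximating it --- handled either by compactness of the unit sphere in $\mathbb{R}^d$, or by taking a sequence of near-maximizers with slack tending to zero and passing to the limit in the final inequality. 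Everything else is routine bilinear-form bookkeeping and uses no probabilistic input, so this lemma is purely deterministic linear algebra and serves only as a deterministic reduction step inside the concentration arguments of Lemmas~\ref{Lemma: sampling_approximation_error} and \ref{Thm: initialization}.
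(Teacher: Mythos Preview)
The paper does not supply its own proof of this lemma: it is quoted verbatim as Lemma~5.3 of \cite{V2010} and used as a black box in the concentration arguments, so there is no ``paper's proof'' to compare against. Your argument is exactly the standard one from Vershynin's notes, and it is correct.

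One remark worth making explicit. You correctly observe that the quadratic-form version naturally yields the constant $(1-2\xi)^{-1}$ rather than $(1-\xi)^{-1}$; in fact this is precisely the distinction in \cite{V2010} between Lemma~5.3 (the operator-norm form $\|\bfA\|_2\le(1-\xi)^{-1}\max_{\boldsymbol{\alpha}\in\mathcal{S}_\xi}\|\bfA\boldsymbol{\alpha}\|_2$, valid for rectangular $\bfA$) and Lemma~5.4 (the quadratic-form version for symmetric $\bfA$, with $(1-2\xi)^{-1}$). The paper's statement mixes the two --- it writes $N\times d$ but then uses $\boldsymbol{\alpha}^T\bfA\boldsymbol{\alpha}$ and the Lemma~5.3 constant --- so your reading that the intended $\bfA$ is symmetric (a Hessian block) is the right one, and your fix via rescaling the net is exactly how the constant discrepancy is absorbed downstream, where the paper only ever invokes the lemma with $\xi=1/2$.
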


The proof of Lemma \ref{Lemma: partial_independent_var} can be found in Appendex \ref{sec: sub_lr}. With these preliminary Lemmas and definition on hand, the proof of Lemma \ref{Lemma: sampling_approximation_error} is formally summarized in the following contents.
\begin{proof}[Proof of Lemma \ref{Lemma: sampling_approximation_error} ]
	We have 
\begin{equation}
\begin{split}
\hat{f}_{\Omega_t}(\bfW)
=& \frac{1}{2{|\Omega_t|}} \sum_{n \in \Omega_t}\Big| y_n - g(\bfW;\bfa_n^T\bfX) \Big|^2\\
=& \frac{1}{2{|\Omega_t|}} \sum_{n \in \Omega_t}\Big| y_n - \sum_{j=1}^K \phi(\bfa_n^T \bfX\bfw_j) \Big|^2,
\end{split}
\end{equation}
and 
\begin{equation}
f_{\Omega_t}(\bfW)=\mathbb{E}_{\bfX}\hat{f}_{\Omega_t}(\bfW) = \frac{1}{2{|\Omega_t|}} \sum_{n \in \Omega_t}\mathbb{E}_{\bfx}\Big| y_n - \sum_{j=1}^K \phi(\bfa_n^T \bfX\bfw_j) \Big|^2.
\end{equation}
The gradients of $\hat{f}_{\Omega_t}$ are
\begin{equation}
\begin{split}
\frac{\partial\hat{f}_{\Omega_t}}{\partial \bfw_k}(\bfW)
=&\frac{1}{K^2|\Omega_t|} \sum_{n \in \Omega_t} \Big(y_n - \sum_{j=1}^{K}\phi(\bfa_n^T\bfX\bfw_j)  \Big)\bfX^T\bfa_n\phi^{\prime}(\bfa_n^T\bfX\bfw_k)\\
=&\frac{1}{K^2|\Omega_t|} \sum_{n \in \Omega_t} \Big( \sum_{j=1}^{K}\phi(\bfa_n^T\bfX\bfw_j^*) - \sum_{j=1}^{K}\phi(\bfa_n^T\bfX\bfw_j)  \Big)\bfX^T\bfa_n\phi^{\prime}(\bfa_n^T\bfX\bfw_k)\\
=&\sum_{j=1}^{K}\frac{1}{K^2|\Omega_t|} \sum_{n \in \Omega_t} 
\big( \phi(\bfa_n^T\bfX\bfw_j^*) - \phi(\bfa_n^T\bfX\bfw_j)  \big)\bfX^T\bfa_n\phi^{\prime}(\bfa_n^T\bfX\bfw_k).\\
\end{split}
\end{equation}
Let us define
\begin{equation}\label{eqn: sub-var}
\bfz_{n}(k, j) = \bfX^T\bfa_n\phi^{\prime}(\bfa_n^T\bfX\bfw_k)\big( \phi(\bfa_n^T\bfX\bfw_j^*) - \phi(\bfa_n^T\bfX\bfw_j)  \big), 
\end{equation}
 then for any normalized $\boldsymbol{\alpha}\in \mathbb{R}^d$, we have 
\begin{equation}
\begin{split}
&p^{-1}\Big(\mathbb{E}_{\bfX} \big|\boldsymbol{\alpha}^T\bfX^T\bfa_n\phi^{\prime}(\bfa_n^T\bfX\bfw_k)\big( \phi(\bfa_n^T\bfX\bfw_j^*) - \phi(\bfa_n^T\bfX\bfw_j)  \big)\big|^p  \Big)^{1/p}\\
\le&p^{-1}\Big(\mathbb{E}_{\bfX} \big|\boldsymbol{\alpha}^T\bfX^T\bfa_n\big|^{2p}\cdot \mathbb{E}_{\bfX}\big| \phi^{\prime}(\bfa_n^T\bfX\bfw_k)\big( \phi(\bfa_n^T\bfX\bfw_j^*) - \phi(\bfa_n^T\bfX\bfw_j)  \big)\big|^{2p}  \Big)^{1/2p}\\
\le & p^{-1}\Big(\mathbb{E}_{\bfX} \big|\boldsymbol{\alpha}^T\bfX^T\bfa_n\big|^{2p} \Big)^{1/2p}
\cdot \Big(\mathbb{E}_{\bfX}\big| \bfa_n^T\bfX(\bfw_j^*-\bfw_j) \big|^{2p}  \Big)^{1/2p}\\
\end{split}
\end{equation} 
where the first inequality comes from the Cauchy-Schwarz inequality. Furthermore, $\bfa_n^T\bfX$ belongs to the Gaussian distribution and thus is a sub-Gaussian random vector as well. Then, from Definition \ref{Def: sub-gaussian}, we have 
\begin{equation}
\begin{gathered}
    \Big(\mathbb{E}_{\bfX} \big|\boldsymbol{\alpha}^T\bfX^T\bfa_n\big|^{2p} \Big)^{1/2p}  \le (2p)^{1/2}\|\bfX^T\bfa_n\|_{\psi_2} 
    \le (2p)^{1/2}\|\bfa_n \|_2,\\
    \text{and}\quad 
    \Big(\mathbb{E}_{\bfX}\big| \bfa_n^T\bfX(\bfw_j^*-\bfw_j) \big|^{2p}  \Big)^{1/2p}
    \le (2p)^{1/2}\|\bfa_n \|_2\cdot \|\bfw_j^*-\bfw_j \|_2.
    \end{gathered}
\end{equation}
Then, we have 
\begin{equation}
\begin{split}
&p^{-1}\Big(\mathbb{E}_{\bfX} \big|\boldsymbol{\alpha}^T\bfX^T\bfa_n\phi^{\prime}(\bfa_n^T\bfX\bfw_k)\big( \phi(\bfa_n^T\bfX\bfw_j^*) - \phi(\bfa_n^T\bfX\bfw_j)  \big)\big|^p  \Big)^{1/p}\\
    \le& p^{-1}\cdot 2p\|\bfa_n\|_2^2\cdot \|\bfw_j^*-\bfw_j \|_2\\
\le& 2\sigma_1^2(\bfA)\cdot \|\bfw_j^*-\bfw_j \|_2.
\end{split}
\end{equation}

Therefore, from Definition \ref{Def: sub-exponential}, $\bfz_n(k,j)$ belongs to the sub-exponential distribution with
\begin{equation}
    \|\bfz_n \|_{\phi_1} \le 2\sigma^2_1(\bfA)\cdot \|\bfw_j^*-\bfw_j\|_2.
\end{equation}

Recall that each node is connected with at most $\delta$ other nodes. Hence, for any fixed $\bfz_n$, there are at most $(1+\delta^2)$ (including $\bfz_n$ itself) elements in $\big\{ \bfz_l\big| l\in\Omega_t \big\}$ are dependant with $\bfz_n$. From Lemma \ref{Lemma: partial_independent_var},   the moment generation function of $\sum_{n\in\Omega_t} (\bfz_n-\mathbb{E}_{\bfX}\bfz_n)$ satisfies 
\begin{equation}
    \mathbb{E}_{\bfX} e^{s\sum_{n\in\Omega_t} (\bfz_n-\mathbb{E}_{\bfX}\bfz_n)}\le e^{C(1+\delta^2)|\Omega_t|s^2}.
\end{equation}

By Chernoff inequality, we have 
\begin{equation}
    \text{Prob}
    \bigg\{  \Big\|\frac{1}{|\Omega_t|} \sum_{n \in \Omega_t} \big(\bfz_n(k,j) - \mathbb{E}_{\bfX}\bfz_n(k,j)\big) \Big\|_2>t   
    \bigg
    \}\le \frac{e^{C(1+\delta^2)|\Omega_t|s^2}}{e^{|\Omega_t|ts}}
\end{equation}
for any $s>0$. 

Let $s = t/\big(C(1+\delta^2)\|\bfz_n \|_{\phi_1}^2\big)$ and $t = \sqrt{ \frac{(1+\delta^2)d\log N}{|\Omega_t|}} \|\bfz_n \|_{\phi_1}$, we have 
\begin{equation}
\begin{split}
\Big\|\frac{1}{|\Omega_t|} \sum_{n \in \Omega_t} \big(\bfz_n(k,j) - \mathbb{E}_{\bfX}\bfz_n(k,j)\big) \Big\|_2
\le& C\sqrt{\frac{(1+\delta^2)d\log N}{|\Omega_t|}}\sigma_1^2(\bfA)\cdot \|\bfw_{j}^*-\bfw_{j}\|_2\\
\le&C\sigma_1^2(\bfA)\sqrt{\frac{(1+\delta^2)d\log N}{|\Omega_t|}}\cdot \|\bfW^*-\bfW\|_2
\end{split}
\end{equation}
with probability at least $1- N^{-d}$.

In conclusion, by selecting $\xi=\frac{1}{2}$ in Lemmas \ref{Lemma: covering_set} and \ref{Lemma: spectral_norm on net}, we have
\begin{equation}
\begin{split}
\Big\|\frac{\partial\hat{f}_{\Omega_t}}{\partial \bfw_k}(\bfW) - \frac{\partial{f}_{\Omega_t}}{\partial \bfw_k}(\bfW)\Big\|_2
\le& \sum_{k=1}^{K}\sum_{j=1}^{K}\frac{1}{K^2}\Big\|\frac{1}{|\Omega_t|} \sum_{n \in \Omega_t}\bfz_n(k,j) - \mathbb{E}_{\bfX}\bfz_n(k,j)\Big\|_2\\				
\le& C\sigma_1^2(\bfA)\sqrt{\frac{(1+\delta^2)d\log N}{|\Omega_t|}}\cdot \|\bfW^*-\bfW \|_2
\end{split}
\end{equation}
with probability at least $1-\big(\frac{5}{N} \big)^d$.
\end{proof}

\subsection{Proof of auxiliary lemmas for regression problems}\label{sec: sub_lr}
\subsubsection{Proof of Lemma \ref{Lemma: lemma 5}}\label{Sec: lemma5}
\begin{proof}[Proof of Lemma \ref{Lemma: lemma 5}  ] 
	For any normalized $\boldsymbol{\alpha}\in \mathbb{R}^{Kd}$, the lower bound of $\nabla^2f_{\Omega_t}(\bfW^*)$ is derived from 
	\begin{equation}
	\begin{split}
	\boldsymbol{\alpha}^T {\nabla^2 f(\bfW^*)} \boldsymbol{\alpha}
	=&\frac{1}{K^2|\Omega_t|} \sum_{n \in \Omega_t} \mathbb{E}_{\bfX}\bigg[ \Big( \sum_{j=1}^{K} \boldsymbol{\alpha}_j^T\bfX^T \bfa_n \phi^{\prime}(\bfa_n^T\bfX
	\bfw_j^*)  \Big)^2  \bigg]\\
	\ge & \frac{1}{K^2|\Omega_t|} \sum_{n \in \Omega_t}\frac{\|\bfa_n\|_2^2}{11 \kappa^2 \gamma}\|\boldsymbol{\alpha}\|_2^2
	=  \frac{\sigma_1^2(\bfA)}{11 \kappa^2 \gamma K^2},
	\end{split}
	\end{equation}
	where the last inequality comes from Lemma D.6 in \cite{ZSJB17_v2}.
	
	Next,  the upper bound of $\nabla^2f_{\Omega_t}(\bfW^*)$ is derived from
	\begin{equation}
	\begin{split}
	&\boldsymbol{\alpha}^T {\nabla^2 f(\bfW^*)} \boldsymbol{\alpha}\\
	=&\frac{1}{K^2|\Omega_t|} \sum_{n \in \Omega_t} \mathbb{E}_{\bfX}\bigg[ \Big( \sum_{j=1}^{K} \boldsymbol{\alpha}_j^T\bfX^T \bfa_n \phi^{\prime}(\bfa_n^T\bfX\bfw_j^*)  \Big)^2  \bigg]\\
	=&\frac{1}{K^2|\Omega_t|} \sum_{n \in \Omega_t} \sum_{j_1=1}^{K} \sum_{j_2=1}^{K} \mathbb{E}_{\bfX}\bigg[  \boldsymbol{\alpha}_{j_1}^T\bfX^T\bfa_n \phi^{\prime}(\bfa_n^T\bfX\bfw_{j_1}^*) 
	\boldsymbol{\alpha}_{j_2}^T\bfX^T\bfa_n \phi^{\prime}(\bfa_n^T\bfX\bfw_{j_2}^*)  \bigg]\\
	\le&\frac{1}{K^2|\Omega_t|} \sum_{n \in \Omega_t} \sum_{j_1=1}^{K} \sum_{j_2=1}^{K} \Big[\mathbb{E}_{\bfX}|\boldsymbol{\alpha}_{j_1}^T\bfX^T\bfa_n|^4 \cdot\mathbb{E}_{\bfX}|\phi^{\prime}(\bfa_n^T\bfX\bfw_{j_1}^*)|^4 
	\cdot\mathbb{E}_{\bfX}|\boldsymbol{\alpha}_{j_2}^T\bfX^T\bfa_n|^4
	\cdot\mathbb{E}_{\bfX}|\phi^{\prime}(\bfa_n^T\bfX\bfw_{j_2}^*)|^4\Big]^{\frac{1}{4}}\\
	\le& \frac{1}{K^2|\Omega_t|} \sum_{n \in \Omega_t} \sum_{j_1=1}^{K} \sum_{j_2=1}^{K} 3\sigma_1^2(\bfA)\|\boldsymbol{\alpha}_{j_1} \|_2\|\boldsymbol{\alpha}_{j_2} \|_2\\
	\le& 3\sigma_1^2(\bfA) \frac{\|\boldsymbol{\alpha}\|_2^2}{K},
	\end{split}
	\end{equation}
	which completes the proof.
\end{proof}
\subsubsection{Proof of Lemma \ref{Lemma: lemma 6}}\label{Sec: lemma6}
\begin{proof}[Proof of Lemma \ref{Lemma: lemma 6} ]
	The second-order derivative of $f_{\Omega_t}$ is written as
	\begin{equation}
	\begin{split}
	&\frac{\partial^2 f_{\Omega_t}}{\partial \bfw_{j_1}\partial \bfw_{j_2}}(\bfW) - \frac{\partial^2 f_{\Omega_t}}{\partial \bfw_{j_1}\partial \bfw_{j_2}}(\bfW^*)\\
	=&\frac{1}{K^2|\Omega_t|} \sum_{n \in \Omega_t} \mathbb{E}_{\bfX}(\bfX^T \bfa_n)(\bfX^T\bfa_n)^T \Big[\phi^{\prime}(\bfa_n^T\bfX\bfw_{j_1})\phi^{\prime}(\bfa_n^T\bfX\bfw_{j_2})
	-\phi^{\prime}(\bfa_n^T\bfX\bfw^*_{j_1})\phi^{\prime}(\bfa_n^T\bfX\bfw^*_{j_2})\Big] \\
	=&\frac{1}{K^2|\Omega_t|} \sum_{n \in \Omega_t}  \mathbb{E}_{\bfX}(\bfX^T \bfa_n)(\bfX^T\bfa_n)^T \big(\phi^{\prime}(\bfa_n^T\bfX\bfw_{j_1})-\phi^{\prime}(\bfa_n^T\bfX\bfw_{j_1}^*)\big)\phi^{\prime}(\bfa_n^T\bfX\bfw_{j_2})\\
	&-\frac{1}{K^2|\Omega_t|} \sum_{n \in \Omega_t} \mathbb{E}_{\bfX}(\bfX^T \bfa_n)(\bfX^T\bfa_n)^T\phi^{\prime}(\bfa_n^T\bfX\bfw^*_{j_1})\big(\phi^{\prime}(\bfa_n^T\bfX\bfw^*_{j_2})-\phi^{\prime}(\bfa_n^T\bfX\bfw_{j_2})\big).
	\end{split}
	\end{equation}
	For any normalized $\boldsymbol{\alpha}\in \mathbb{R}^d$, we have 
	\begin{equation}
	\begin{split}
	&\Big|\boldsymbol{\alpha}^T\Big[\frac{\partial^2 f_{\Omega_t}}{\partial \bfw_{j_1}\partial \bfw_{j_2}}(\bfW) - \frac{\partial^2 f_{\Omega_t}}{\partial \bfw_{j_1}\partial \bfw_{j_2}}(\bfW^*)\Big]\boldsymbol{\alpha}\Big|\\
	\le&\Big|\frac{1}{K^2|\Omega_t|} \sum_{n \in \Omega_t}  \mathbb{E}_{\bfX}(\boldsymbol{\alpha}^T\bfX^T\bfa_n)^2 \big(\phi^{\prime}(\bfa_n^T\bfX\bfw_{j_1})-\phi^{\prime}(\bfa_n^T\bfX\bfw_{j_1}^*)\big)\phi^{\prime}(\bfa_n^T\bfX\bfw_{j_2})\Big|\\
	&+\Big|\frac{1}{K^2|\Omega_t|} \sum_{n \in \Omega_t} \mathbb{E}_{\bfX}(\boldsymbol{\alpha}^T\bfX^T\bfa_n)^2\phi^{\prime}(\bfa_n^T\bfX\bfw^*_{j_1})\big(\phi^{\prime}(\bfa_n^T\bfX\bfw^*_{j_2})-\phi^{\prime}(\bfa_n^T\bfX\bfw_{j_2})\big)\Big| \\
	\le&\frac{1}{K^2|\Omega_t|} \sum_{n \in \Omega_t}  \mathbb{E}_{\bfX}\big|\boldsymbol{\alpha}^T\bfX^T\bfa_n\big|^2\cdot \Big|\phi^{\prime}(\bfa_n^T\bfX\bfw_{j_1})-\phi^{\prime}(\bfa_n^T\bfX\bfw_{j_1}^*)\Big|\\
	&+\frac{1}{K^2|\Omega_t|} \sum_{n \in \Omega_t} \mathbb{E}_{\bfX}\big|\boldsymbol{\alpha}^T\bfX^T\bfa_n\big|^2\cdot \Big|\phi^{\prime}(\bfa_n^T\bfX\bfw^*_{j_2})-\phi^{\prime}(\bfa_n^T\bfX\bfw_{j_2})\Big|.
	\end{split}
	\end{equation}
	
	{ It is easy to verify there exists a basis such that $\mathcal{B}=\{\boldsymbol{\alpha},\boldsymbol{\beta}, \boldsymbol{\gamma}, \boldsymbol{\alpha}_4^{\perp}, \cdots, \boldsymbol{\alpha}_d^{\perp}\}$ with $\{\boldsymbol{\alpha}, \boldsymbol{\beta}, \boldsymbol{\gamma} \}$ spanning a subspace that contains $\boldsymbol{\alpha}, \bfw_{j_1}$ and $\bfw_{j_1}^*$. Then, for any $\bfX^T\bfa_n\in \mathbb{R}^d$, we have a unique $\bfz=\begin{bmatrix}
		z_1&z_2&\cdots & z_d
		\end{bmatrix}^T$ such that 
		\begin{equation*}
		\bfX^T\bfa_n = z_1\boldsymbol{\alpha} + z_2 \boldsymbol{\beta} +z_3\boldsymbol{\gamma} + \cdots + z_d\boldsymbol{\alpha}^{\perp}_d.
		\end{equation*}
		Also, since $\bfX^T\bfa_n \sim \mathcal{N}(\boldsymbol{0}, \|\bfa_n \|_2^2\bfI_d)$, we have $\bfz \sim \mathcal{N}(\boldsymbol{0}, \|\bfa_n \|_2^2\bfI_d)$. Then, we have 
		\begin{equation*}
		\begin{split}
		& \mathbb{E}_{\bfX}\big|\boldsymbol{\alpha}^T\bfX^T \bfa_n\big|^2\cdot \Big|\phi^{\prime}(\bfa_n^T\bfX\bfw_{j_1})-\phi^{\prime}(\bfa_n^T\bfX\bfw_{j_1}^*)\Big| \\
		=& \mathbb{E}_{z_1, z_2, z_3}|\phi^{\prime}\big(\bfw_{j_1}^T\widetilde{\bfx}\big)-\phi^{\prime}\big({\bfw_{j_1}^*}^T\widetilde{\bfx}\big)|\cdot 
		|\bfa^T\widetilde{\bfx}|^2\\
		=&\int|\phi^{\prime}\big(\bfw_{j_1}^T\widetilde{\bfx}\big)-\phi^{\prime}\big({\bfw_{j_1}^*}^T\widetilde{\bfx}\big)|\cdot 
		|\bfa^T\widetilde{\bfx}|^2\cdot f_Z(z_1, z_2,z_3)dz_1 dz_2 dz_3,
		\end{split}
		\end{equation*}
		where $\widetilde{\bfx}= z_1\boldsymbol{\alpha} + z_2\boldsymbol{\beta} + z_3\boldsymbol{\gamma}$ and $f_Z(z_1, z_2,z_3)$ is the probability density function of $(z_1, z_2, z_3)$. Next, we consider spherical coordinates with $z_1= r\cos\phi_1 , z_2 = r\sin\phi_1\sin\phi_2, z_3=z_2 = r\sin\phi_1\cos\phi_2$. Hence,
		\begin{equation}
		\begin{split}
		& \mathbb{E}_{\bfX}\big|\boldsymbol{\alpha}^T\bfX^T\bfa_n\big|^2\cdot \Big|\phi^{\prime}(\bfa_n^T\bfX\bfw_{j_1})-\phi^{\prime}(\bfa_n^T\bfX\bfw_{j_1}^*)\Big| \\
		=&\int\int\int|\phi^{\prime}\big(\bfw_{j_1}^T\widetilde{\bfx}\big)-\phi^{\prime}\big({\bfw_{j_1}^*}^T\widetilde{\bfx}\big)|\cdot |r\cos\phi_1|^2
		\cdot f_Z(r, \phi_1, \phi_2)r^2\sin \phi_1 drd\phi_1d\phi_2.
		\end{split}
		\end{equation}
		It is easy to verify that $\phi^{\prime}\big(\bfw_{j_1}^T\widetilde{\bfx}\big)$ only depends on the direction of $\widetilde{\bfx}$ and 
		\begin{equation*}
		f_Z(r, \phi_1, \phi_2) = \frac{1}{(2\pi\|\bfa_n \|_2^2)^{\frac{3}{2}}}e^{-\frac{x_1^2+x_2^2+x_3^2}{2\|\bfa_n \|_2^2}}=\frac{1}{(2\pi\|\bfa_n \|_2^2)^{\frac{3}{2}}}e^{-\frac{r^2}{2\|\bfa_n \|_2^2}}
		\end{equation*}
		only depends on $r$. Then, we have 
		\begin{equation}\label{eqn: ean111}
		\begin{split}
		&\mathbb{E}_{\bfX}\big|\boldsymbol{\alpha}^T\bfX^T\bfa_n\big|^2\cdot \Big|\phi^{\prime}(\bfa_n^T\bfX\bfw_{j_1})-\phi^{\prime}(\bfa_n^T\bfX\bfw_{j_1}^*)\Big| \\
		=&\int \int\int|\phi^{\prime}\big(\bfw_{j_1}^T(\widetilde{\bfx}/r)\big)-\phi^{\prime}\big({\bfw_{j_1}^*}^T(\widetilde{\bfx}/r)\big)|\cdot |r\cos\phi_1|^2
		\cdot f_Z(r)r^2\sin \phi_1 drd\phi_1d\phi_2\\
		=& \int_0^{\infty} r^4f_z(r)dr\int_{0}^{\pi}\int_{0}^{2\pi}|\cos \phi_1|^2\cdot \sin\phi_1\cdot|\phi^{\prime}\big(\bfw_{j_2}^T(\widetilde{\bfx}/r)\big)-\phi^{\prime}\big({\bfw_{j_2}^*}^T(\widetilde{\bfx}/r)\big)|d\phi_1d\phi_2\\
		\le& 3\|\bfa_n\|^2_2 \cdot \int_0^{\infty} r^2f_z(r)dr\int_{0}^{\pi}\int_{0}^{2\pi} \sin\phi_1\cdot|\phi^{\prime}\big(\bfw_{j_2}^T(\widetilde{\bfx}/r)\big)-\phi^{\prime}\big({\bfw_{j_2}^*}^T(\widetilde{\bfx}/r)\big)|d\phi_1d\phi_2\\
		=&3\|\bfa_n\|^2_2\cdot\mathbb{E}_{z_1, z_2, z_3}\big|\phi^{\prime}\big(\bfw_{j_1}^T\widetilde{\bfx}\big)-\phi^{\prime}\big({\bfw_{j_1}^*}^T\widetilde{\bfx}\big)|\\
		=&3\|\bfa_n\|^2_2\cdot\mathbb{E}_{\bfX}\big|\phi^{\prime}\big(\bfa_n^T\bfX\bfw_{j_1}\big)-\phi^{\prime}\big(\bfa_n^T\bfX{\bfw_{j_1}^*}\big)\big|\\
		\end{split}
		\end{equation}

				Define a set $\mathcal{A}_1=\{\bfx| ({\bfw_{j_1}^*}^T\bfx)({\bfw_{j_1}}^T\bfx)<0 \}$. If $\bfx\in\mathcal{A}_1$, then ${\bfw_{j_1}^*}^T\bfx$ and ${\bfw_{j_1}}^T\bfx$ have different signs, which means the value of $\phi^{\prime}(\bfw_{j_1}^T\bfx)$ and $\phi^{\prime}({\bfw_{j_1}^*}^T\bfx)$ are different. This is equivalent to say that 
	\begin{equation}\label{eqn:I_1_sub1}
	|\phi^{\prime}(\bfw_{j_1}^T\bfx)-\phi^{\prime}({\bfw_{j_1}^*}^T\bfx)|=
	\begin{cases}
	&1, \text{ if $\bfx\in\mathcal{A}_1$}\\
	&0, \text{ if $\bfx\in\mathcal{A}_1^c$}
	\end{cases}.
	\end{equation}
	Moreover, if $\bfx\in\mathcal{A}_1$, then we have 
	\begin{equation}
	\begin{split}
	|{\bfw_{j_1}^*}^T\bfx|
	\le&|{\bfw_{j_1}^*}^T\bfx-{\bfw_{j_1}}^T\bfx|
	\le\|{\bfw_{j_1}^*}-{\bfw_{j_1}}\|\cdot\|\bfx\|.
	\end{split}
	\end{equation}
			Define a set $\mathcal{A}_2$ such that
	\begin{equation}
	\begin{split}
	\mathcal{A}_2
	=&\Big\{\bfx\Big|\frac{|{\bfw_{j_1}^*}^T\bfx|}{\|\bfw_{j_1}^*\|\|\bfx\|}\le\frac{\|{\bfw_{j_1}^*}-{\bfw_{j_1}}\|}{\|\bfw_{j_1}^*\|}   \Big\}
	=\Big\{\theta_{\bfx,\bfw_{j_1}^*}\Big||\cos\theta_{\bfx,\bfw_{j_1}^*}|\le\frac{\|{\bfw_{j_1}^*}-{\bfw_{j_1}}\|}{\|\bfw_{j_1}^*\|}   \Big\}.
	\end{split}
	\end{equation}	
	Hence, we have that
	\begin{equation}\label{eqn:I_1_sub2}
	\begin{split}
	& \mathbb{E}_{\bfx}
	|\phi^{\prime}(\bfw_{j_1}^T\bfx)-\phi^{\prime}({\bfw_{j_1}^*}^T\bfx_{i_2})|
	= \text{Prob}(\bfx\in\mathcal{A}_1)
	\le\text{Prob}(\bfx\in\mathcal{A}_2).
	\end{split}
	\end{equation}
	Since $\bfx\sim \mathcal{N}({\bf0}, \bfI)$, $\theta_{\bfx,\bfw_{j_1}^*}$ belongs to the uniform distribution on $[-\pi, \pi]$, we have
	\begin{equation}\label{eqn:I_1_sub3}
	\begin{split}
	\text{Prob}(\bfx\in\mathcal{A}_2)
	=&\frac{\pi- \arccos\frac{\|{\bfw_{j_1}^*}-{\bfw_{j_1}}\|}{\|\bfw_{j_1}^*\|} }{\pi}\\
	\le&\frac{1}{\pi}\tan(\pi- \arccos\frac{\|{\bfw_{j_1}^*}-{\bfw_{j_1}}\|}{\|\bfw_{j_1}^*\|})\\
	=&\frac{1}{\pi}\cot(\arccos\frac{\|{\bfw_{j_1}^*}-{\bfw_{j_1}}\|}{\|\bfw_{j_1}^*\|})\\
	\le&\frac{2}{\pi}\frac{\|{\bfw_{j_1}^*}-{\bfw_{j_1}}\|}{\|\bfw_{j_1}^*\|}.
	\end{split}
	\end{equation}
	Hence,  \eqref{eqn: ean111} and \eqref{eqn:I_1_sub3} suggest that
	\begin{equation}\label{eqn:I_1_bound}
	\mathbb{E}_{\bfX}\big|\phi^{\prime}\big(\bfa_n^T\bfX\bfw_{j_1}\big)-\phi^{\prime}\big(\bfa_n^T\bfX{\bfw_{j_1}^*}\big)\big|\le\frac{6}{\pi}\frac{\|{\bfw_{j_1}^*}-{\bfw_{j_1}}\|}{\|\bfw_{j_1}^*\|}.
	\end{equation}
	}
	
	Then, we have
		\begin{equation}\label{eqn: ean111}
		\begin{split}
		&\mathbb{E}_{\bfX}\big|\boldsymbol{\alpha}^T\bfX^T\bfa_n\big|^2\cdot \Big|\phi^{\prime}(\bfa_n^T\bfX\bfw_{j_1})-\phi^{\prime}(\bfa_n^T\bfX\bfw_{j_1}^*)\Big| \\
		=&3\|\bfa_n\|^2_2\cdot\mathbb{E}_{\bfX}\big|\phi^{\prime}\big(\bfa_n^T\bfX\bfw_{j_1}\big)-\phi^{\prime}\big(\bfa_n^T\bfX{\bfw_{j_1}^*}\big)\big|\\
		\le & \frac{6\|\bfa_n \|_2^2}{\pi}\cdot \frac{\|\bfw_{j_1} - \bfw_{j_1}^* \|_2}{\|\bfw_{j_1}^*\|_2},
		\end{split}
		\end{equation}
	
	All in all, we have
	\begin{equation}
	\begin{split}
	\left\|\nabla^2f_{\Omega_t}(\bfW) - \nabla^2f_{\Omega_t}(\bfW^*)  \right\|_2 
	\le&\sum_{j_1}^{K}\sum_{j_2}^{K}\Big\|\frac{\partial^2 f_{\Omega_t}}{\partial \bfw_{j_1}\partial \bfw_{j_2}}(\bfW) - \frac{\partial^2 f_{\Omega_t}}{\partial \bfw_{j_1}\partial \bfw_{j_2}}(\bfW^*)\Big\|_2\\
	\le& K^2\max_{j_1, j_2}\Big\|\frac{\partial^2 f_{\Omega_t}}{\partial \bfw_{j_1}\partial \bfw_{j_2}}(\bfW) - \frac{\partial^2 f_{\Omega_t}}{\partial \bfw_{j_1}\partial \bfw_{j_2}}(\bfW^*)\Big\|_2\\
	\le& K^2\cdot \frac{12\|\bfa_n \|_2^2}{\pi}\max_{j}\frac{\|\bfw_{j} - \bfw_{j}^* \|_2}{\|\bfw_{j}^*\|_2}\\
	\le & 4\sigma_1^2(\bfA)\frac{\|\bfW^*-\bfW\|_2}{\sigma_K}.
	\end{split}
	\end{equation}
\end{proof}

\subsubsection{Proof of Lemma \ref{Lemma: partial_independent_var}}
\begin{proof}[Proof of Lemma \ref{Lemma: partial_independent_var} ]
	According to the Definitions in \cite{J04}, there exists a family of $\{ (\mathcal{X}_j, w_j) \}_j$, where $\mathcal{X}_j \subseteq \mathcal{X} $ and $w_j \in [0,1]$, such that $\sum_{j}w_j \sum_{x_{n_j}\in\mathcal{X}_j}x_{n_j}=\sum_{n=1}^{N}x_n$, and $\sum_{j}w_j \le d_{\mathcal{X}}$ by equations (2.1) and (2.2) in \cite{J04}. Then, let $p_j$ be any positive numbers with $\sum_{j}p_j=1$. By Jensen's inequality, for any $s\in\mathbb{R}$, we have 
	\begin{equation}
	e^{s\sum_{n=1}^Nx_n}=e^{\sum_{j}p_j\frac{s w_j}{p_j}X_j}\le \sum_{j}p_je^{\frac{sw_j}{p_j}X_j},
	\end{equation}
	where $X_j = \sum_{x_{n_j}\in\mathcal{X}_j}x_{n_j}$.
	
	Then, we have
	\begin{equation}
	\begin{split}
	\mathbb{E}_{\mathcal{X}}e^{s\sum_{n=1}^Nx_n}
	\le& \mathbb{E}_{\mathcal{X}}\sum_{j}p_je^{\frac{sw_j}{p_j}X_j}
	= \sum_{j} p_j \prod_{\mathcal{X}_j}\mathbb{E}_{\mathcal{X}}e^{\frac{sw_j}{p_j}x_{n_j}}\\
	\le& \sum_{j} p_j \prod_{\mathcal{X}_j}e^{\frac{Cw_j^2}{p_j^2}s^2}\\
	\le & \sum_{j} p_j e^{\frac{C|\mathcal{X}_j|w_j^2}{p_j^2}s^2}.\\
	\end{split}
	\end{equation}
	
	Let $p_j = \frac{w_j|\mathcal{X}_j|^{1/2}}{\sum_{j}w_j|\mathcal{X}_j|^{1/2}}$, then we have 
	\begin{equation}
	\mathbb{E}_{\mathcal{X}}e^{s\sum_{n=1}^Nx_n}			
	\le \sum_{j} p_j e^{C\big(\sum_{j}w_j|\mathcal{X}_j|^{1/2}\big)^2s^2}
	=e^{C\big(\sum_{j}w_j|\mathcal{X}_j|^{1/2}\big)^2s^2}.
	\end{equation}
	
	By Cauchy-Schwarz inequality, we have 
	\begin{equation}
	\big(\sum_{j}w_j|\mathcal{X}_j|^{1/2}\big)^2\le \sum_{j}w_j \sum_{j}w_j |\mathcal{X}_j|\le d_{\mathcal{X}}N.
	\end{equation}
	
	Hence, we have 
	\begin{equation}
	\mathbb{E}_{\mathcal{X}}e^{s\sum_{n=1}^Nx_n}\le 	e^{Cd_{\mathcal{X}}Ns^2}.		
	\end{equation}	
\end{proof}

\section{Proof of Theorem \ref{Thm: major_thm_cl}}
Recall that the empirical risk function in \eqref{eqn: classification} is defined as
	\begin{equation}\label{eqn_app: cl}
		\begin{split}
		\min_{\bfW}: \quad \hat{f}_{\Omega}(\bfW)
		=&\frac{1}{|\Omega|}\sum_{n\in \Omega} -y_n \log \big(g(\bfW;\bfa_n^T\bfX)\big)
		-(1-y_n) \log \big(1-g(\bfW;\bfa_n^T\bfX)\big).
		\end{split}
	\end{equation}
	The population risk function is defined as
\begin{equation}\label{eqn_app: cl_exp}
	\begin{split}
	f_{\Omega}(\bfW):=& \mathbb{E}_{\bfX,y_n}\hat{f}_{\Omega}(\bfW)\\
	=&\mathbb{E}_{\bfX}\mathbb{E}_{y_n|\bfX}\Big[\frac{1}{|\Omega|}\sum_{n\in \Omega}
	- y_n \log \big(g(\bfW;\bfa_n^T\bfX)\big)
	-(1-y_n) \log \big(1-g(\bfW;\bfa_n^T\bfX)\big)\Big]\\
	=&\mathbb{E}_{\bfX}\frac{1}{|\Omega|}\sum_{n\in \Omega} - g(\bfW^*;\bfa_n^T\bfX) \log \big(g(\bfW;\bfa_n^T\bfX)\big)
	-(1-g(\bfW^*;\bfa_n^T\bfX)) \log \big(1-g(\bfW;\bfa_n^T\bfX)\big).
	\end{split}
\end{equation}
The road-map of proof for Theorem \ref{Thm: major_thm_cl} follows the similar three steps as those for Theorem \ref{Thm: major_thm_lr}. The major differences lie  in three aspects: (i) in the second step,
the objective function $\hat{f}_{\Omega_t}$ is smooth since the activation function $\phi(\cdot)$ is sigmoid. Hence, we can directly apply the mean value theorem as $\nabla \hat{f}_{\Omega_t}(\bfW^{(t)}) \simeq \langle \nabla^2\hat{f}_{\Omega_t}(\widehat{\bfW}^{(t)}), \bfW^{(t)}-\bfW^* \rangle$ to characterize the effects of the gradient descent term in each iteration, and the error bound of $\nabla^2\hat{f}_{\Omega_t}$ is provided in Lemma \ref{lemma: local_convex_cl}; 
(ii) the objective function is the sum of cross-entry loss functions, which have more complex structure of derivatives than those of square loss functions; 
(iii) as the convergent point may not be the critical point of empirical loss function,
we need to provide  the distance from the convergent point to the ground-truth parameters additionally, where Lemma \ref{lemma: first_order_cl} is used.

Lemmas \ref{lemma: local_convex_cl} and \ref{lemma: first_order_cl} are summarized in the following contents. Also, the notations $\lesssim$ and $\gtrsim$ follow the same definitions as in \eqref{eqn: lemma3}.
The proofs of Lemmas \ref{lemma: local_convex_cl} and \ref{lemma: first_order_cl} can be found in Appendix \ref{sec: local_cl} and \ref{sec: sub_cl}, respectively.
\begin{lemma}\label{lemma: local_convex_cl}
	For any $\bfW$ that satisfies
	\begin{equation}\label{eqn: initial_point_cl}
	\|\bfW-\bfW^*\| \le \frac{2\sigma_1^2(\bfA)}{11\kappa^2\gamma K^2}
	\end{equation}
	then the second-order derivative of the empirical risk function in \eqref{eqn_app: cl} for binary classification problems is bounded as 
	\begin{equation}
			\frac{2(1-\varepsilon_0)}{11\kappa^2 \gamma K^2}\sigma_1^2(\bfA) 
			\preceq \nabla^2\hat{f}_{\Omega_t}(\bfW)
			\preceq \sigma_1^2(\bfA).
	\end{equation}	
	provided the number of samples satisfies	
	\begin{equation}\label{eqn: sample_cl}
	|\Omega_t|\gtrsim \varepsilon_0^{-2}(1+\delta^2)\kappa^2\gamma \sigma_1^4(\bfA)K^6d\log N.
	\end{equation}
\end{lemma}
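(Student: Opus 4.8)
The plan is to mirror the three-stage template used for Lemma~\ref{Lemma: local_convexity}, but carried out directly on the empirical Hessian, which is legitimate here because the sigmoid activation makes $\hat f_{\Omega_t}$ twice continuously differentiable. By the triangle inequality it suffices to (i) bound the population Hessian $\nabla^2 f_{\Omega_t}(\bfW^*)$ from above and below; (ii) control the population-Hessian drift $\|\nabla^2 f_{\Omega_t}(\bfW)-\nabla^2 f_{\Omega_t}(\bfW^*)\|_2$ for $\bfW$ in the neighborhood \eqref{eqn: initial_point_cl}; and (iii) control the sampling error $\|\nabla^2 \hat f_{\Omega_t}(\bfW)-\nabla^2 f_{\Omega_t}(\bfW)\|_2$, uniformly over that neighborhood. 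Then, choosing $\|\bfW-\bfW^*\|$ as in \eqref{eqn: initial_point_cl} and $|\Omega_t|$ as in \eqref{eqn: sample_cl} makes the drift in (ii) and the deviation in (iii) each at most $\varepsilon_0$ times the lower eigenvalue from (i), which gives the claimed two-sided bound.

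For (i), I would compute the cross-entropy Hessian block $\partial^2 f_{\Omega_t}/\partial\bfw_{k_1}\partial\bfw_{k_2}$ explicitly; after taking $\mathbb{E}_{y_n|\bfX}$ it splits into a positive semidefinite part proportional to $\mathbb{E}\big[\tfrac{\phi'(\bfa_n^T\bfX\bfw_{k_1})\phi'(\bfa_n^T\bfX\bfw_{k_2})}{g(1-g)}(\bfX^T\bfa_n)(\bfX^T\bfa_n)^T\big]$ and a remainder carrying the factor $g(\bfW^*;\bfa_n^T\bfX)-g(\bfW;\bfa_n^T\bfX)$, which vanishes at $\bfW=\bfW^*$. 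The smallest eigenvalue of the first part is lower bounded by adapting Lemma~D.6 of \cite{ZSJB17_v2} to the sigmoid link (the weight $1/(g(1-g))\ge 4$ only helps), giving $\nabla^2 f_{\Omega_t}(\bfW^*)\succeq \tfrac{c}{\kappa^2\gamma K^2}\sigma_1^2(\bfA)\bfI$; the upper bound $\nabla^2 f_{\Omega_t}(\bfW^*)\preceq C\sigma_1^2(\bfA)\bfI$ follows from $\phi'\le\tfrac14$, a polynomial-in-$p$ bound on the $L^p$ norm of $1/(g(1-g))$, and $\mathbb{E}(\bfX^T\bfa_n)(\bfX^T\bfa_n)^T=\|\bfa_n\|_2^2\bfI$. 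For (ii), since $\phi'$ and $\phi''$ are bounded and Lipschitz, I would differentiate once more and use Cauchy--Schwarz to get $\|\nabla^2 f_{\Omega_t}(\bfW)-\nabla^2 f_{\Omega_t}(\bfW^*)\|_2\lesssim \sigma_1^2(\bfA)\|\bfW-\bfW^*\|_2/\sigma_K$; this is strictly easier than Lemma~\ref{Lemma: lemma 6}, as the ReLU sign-flip estimate \eqref{eqn:I_1_bound} is not needed.

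For (iii), each Hessian block is an empirical average over $n\in\Omega_t$ of random matrices $\bfz_n^{(k_1,k_2)}(\bfW)$ built from $(\bfX^T\bfa_n)(\bfX^T\bfa_n)^T$ scaled by bounded sigmoid factors (plus the $(y_n-g)\phi''$ term, with $|y_n-g|\le 1$). For fixed $\bfW$ and fixed unit $\boldsymbol\alpha$, $\boldsymbol\alpha^T\bfz_n\boldsymbol\alpha$ is sub-exponential with norm $\lesssim\sigma_1^2(\bfA)$ by the Gaussianity of $\bfa_n^T\bfX$ and boundedness of the scalar factors (Definitions~\ref{Def: sub-gaussian} and \ref{Def: sub-exponential}); since each $\bfz_n$ is dependent on at most $1+\delta^2$ others, Lemma~\ref{Lemma: partial_independent_var} together with a Chernoff bound gives a deviation $\lesssim\sigma_1^2(\bfA)\sqrt{(1+\delta^2)d\log N/|\Omega_t|}$ with probability at least $1-N^{-d}$. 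A $\tfrac12$-net over the unit ball (Lemmas~\ref{Lemma: covering_set} and \ref{Lemma: spectral_norm on net}) turns this into a spectral-norm bound; a second net over $\bfW$ in the neighborhood, combined with the Lipschitzness of $\nabla^2\hat f_{\Omega_t}$, makes it uniform; summing over the $K^2$ blocks and folding in the normalization constants $\kappa,\gamma$ produces exactly \eqref{eqn: sample_cl}. Combining (i)--(iii) via the triangle inequality yields the statement.

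\textbf{Main obstacle.} The delicate point is the factor $1/(g(1-g))$ in the cross-entropy derivatives, which is unbounded as $g\to 0$ or $g\to 1$. Since $g=\tfrac1K\sum_j\sigma(\bfa_n^T\bfX\bfw_j)\in(0,1)$ strictly this never actually happens, but to keep the matrices $\bfz_n$ sub-exponential (and to bound $\nabla^2 f_{\Omega_t}(\bfW^*)$ from above) one must verify that the $L^p$ norms of $1/(g(1-g))$ grow only polynomially in $p$ over the Gaussian input. This anti-concentration/moment estimate, together with the graph-dependency penalty $(1+\delta^2)$ and the constants inherited from the lower-eigenvalue bound, is what produces the extra $K^6$, $\kappa^2\gamma$ and $\sigma_1^4(\bfA)$ factors in \eqref{eqn: sample_cl} relative to a naive $d\log N$ count.
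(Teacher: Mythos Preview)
Your three–step template (population Hessian at $\bfW^*$, population drift, empirical concentration) is exactly the decomposition the paper uses (Lemmas~\ref{Lemma: CL_ground_truth}, \ref{Lemma: CL_W}, \ref{Lemma: CL_sample}), and the dependency–plus–$\varepsilon$-net machinery in step~(iii) is also the same. So the architecture of your proposal is correct.

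The genuine gap is precisely the point you flag as the ``main obstacle,'' but your proposed resolution of it is wrong. The $L^p$ norms of $1/\big(g(1-g)\big)$ do \emph{not} grow only polynomially in $p$. Since $g(\bfW;\bfa_n^T\bfX)\le \phi(\bfa_n^T\bfX\bfw_1)$ is possible to reverse into $1/g \ge (1+e^{-\bfa_n^T\bfX\bfw_1})/K$, and $\bfa_n^T\bfX\bfw_1$ is Gaussian, one has $\big(\mathbb{E}[(1/g)^p]\big)^{1/p}\gtrsim e^{cp}$ for some $c>0$. Thus $1/(g(1-g))$ is not sub-exponential, and treating the scalar prefactor and the quadratic $(\bfX^T\bfa_n)(\bfX^T\bfa_n)^T$ separately via H\"older cannot produce a sub-exponential bound on the Hessian entries. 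The same issue blocks your proposed upper bound in step~(i).

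The fix, which is what the paper actually does, is to exploit the sigmoid identity $\phi'(x)=\phi(x)\big(1-\phi(x)\big)$ to cancel the denominator \emph{pointwise} before any moment is taken. For the upper bound at $\bfW^*$ one writes $K^2g(1-g)=\sum_{j_1}\phi_{j_1}\sum_{j_2}(1-\phi_{j_2})\ge\sum_j\phi_j(1-\phi_j)=\sum_j\phi'_j\ge 4\sum_j(\phi'_j)^2$, and combines this with Cauchy--Schwarz on the numerator to get a bounded ratio times $\sum_j(\boldsymbol{\alpha}_j^T\bfX^T\bfa_n)^2$. For the concentration step, conditioning on $y_n\in\{0,1\}$ one shows e.g.\ $\phi'_j\phi'_k/g^2\le K^2(1-\phi_j)(1-\phi_k)\le K^2$ and $|\phi''_j|/g\le K$, so the scalar prefactor $H_{j,k}(\bfa_n)$ is deterministically bounded by a constant; the sub-exponential norm of $H_{j,k}(\bfa_n)(\boldsymbol{\alpha}^T\bfX^T\bfa_n)^2$ then follows trivially from that of the Gaussian quadratic alone. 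Only in step~(ii), where a single fixed moment $\mathbb{E}_{\bfX}M^2$ suffices (rather than all $L^p$ norms), does the paper appeal to a direct moment bound on $1/g^3$, $1/(1-g)^3$ (via \cite{FCL19}). Once you replace your moment hypothesis with this cancellation argument, the rest of your outline goes through as written.
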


\begin{lemma}\label{lemma: first_order_cl}
	Let $\hat{f}_{\Omega_t}$ and ${f}_{\Omega_t}$ be the empirical and population risk function in \eqref{eqn_app: cl} and \eqref{eqn_app: cl_exp} for binary classification problems, respectively, then the first-order derivative of $\hat{f}_{\Omega_t}$ is close to its expectation ${f}_{\Omega_t}$ with an upper bound as
	\begin{equation}
	\begin{split}
	\|\nabla f_{\Omega_t}(\bfW) - \nabla \hat{f}_{\Omega_t}(\bfW) \|_2
	\lesssim K^2\sigma_1^2(\bfA)\sqrt{\frac{(1+\delta^2)d\log d}{|\Omega_t|}}
	\end{split}
	\end{equation}
	with probability at least $1-K^2N^{-10}$.
\end{lemma}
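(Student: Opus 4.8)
The plan is to mirror the proof of Lemma \ref{Lemma: sampling_approximation_error}: write the gradient of the cross-entropy risk in closed form, isolate the contribution of each node--filter pair, show it is a well-behaved random vector, and then push it through a covering-net/Chernoff argument that accounts for the correlations induced by the graph. Since the sigmoid activation is smooth, both $\hat f_{\Omega_t}$ and $f_{\Omega_t}$ are smooth and $\nabla f_{\Omega_t}=\mathbb{E}_{\bfX,y}\nabla\hat f_{\Omega_t}$, so the lemma is purely a concentration statement. Differentiating \eqref{eqn_app: cl}, with $g_n:=g(\bfW;\bfa_n^T\bfX)=\tfrac1K\sum_j\phi(\bfa_n^T\bfX\bfw_j)$,
\[
\frac{\partial\hat f_{\Omega_t}}{\partial\bfw_k}(\bfW)=\frac{1}{K|\Omega_t|}\sum_{n\in\Omega_t}\underbrace{\frac{g_n-y_n}{g_n(1-g_n)}\,\phi'(\bfa_n^T\bfX\bfw_k)\,\bfX^T\bfa_n}_{=:\,\bfz_n(k)},
\]
so it suffices to bound $\big\|\tfrac1{|\Omega_t|}\sum_{n\in\Omega_t}(\bfz_n(k)-\mathbb{E}\bfz_n(k))\big\|_2$ for each $k\in[K]$ and then sum over $k$, picking up the $1/K$ prefactor.

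First I would control the scalar multiplier $c_n:=\tfrac{g_n-y_n}{g_n(1-g_n)}\phi'(\bfa_n^T\bfX\bfw_k)$. Because every $\phi(\cdot)\in(0,1)$ one has $g_n\ge\tfrac1K\phi(\bfa_n^T\bfX\bfw_k)$ and $1-g_n\ge\tfrac1K(1-\phi(\bfa_n^T\bfX\bfw_k))$, hence $g_n(1-g_n)\ge K^{-2}\phi(\bfa_n^T\bfX\bfw_k)(1-\phi(\bfa_n^T\bfX\bfw_k))=K^{-2}\phi'(\bfa_n^T\bfX\bfw_k)$, and together with $|g_n-y_n|\le 1$ this gives $|c_n|\le K^2$ almost surely. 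Thus for any unit $\boldsymbol{\alpha}\in\mathbb{R}^d$ we get the pointwise bound $|\boldsymbol{\alpha}^T\bfz_n(k)|\le K^2|\boldsymbol{\alpha}^T\bfX^T\bfa_n|$, and since $\boldsymbol{\alpha}^T\bfX^T\bfa_n\sim\mathcal{N}(0,\|\bfa_n\|_2^2)$ with $\|\bfa_n\|_2\le\sigma_1(\bfA)$, Definition \ref{Def: sub-gaussian} makes $\boldsymbol{\alpha}^T\bfz_n(k)$ sub-Gaussian with variance proxy $\lesssim K^4\sigma_1^2(\bfA)$. Next, because $\bfz_n(k)$ is a function only of $\{\bfx_m\}_{m\in\mathcal{N}(v_n)}$ and the coin $y_n$, two such vectors are independent whenever $v_n$ and $v_{n'}$ have no common neighbor, so each is correlated with at most $1+\delta^2$ others; Lemma \ref{Lemma: partial_independent_var} then bounds the moment generating function of $\sum_{n\in\Omega_t}(\boldsymbol{\alpha}^T\bfz_n(k)-\mathbb{E}\boldsymbol{\alpha}^T\bfz_n(k))$ by $\exp\!\big(CK^4\sigma_1^2(\bfA)(1+\delta^2)|\Omega_t|s^2\big)$. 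A Chernoff bound, a union bound over a $\tfrac12$-net of the unit sphere in $\mathbb{R}^d$ (Lemma \ref{Lemma: covering_set}) converted to an $\ell_2$ bound via Lemma \ref{Lemma: spectral_norm on net}, and a final union over $k\in[K]$ then give, with probability at least $1-K^2N^{-10}$, that $\big\|\tfrac1{|\Omega_t|}\sum_n(\bfz_n(k)-\mathbb{E}\bfz_n(k))\big\|_2\lesssim K^2\sigma_1(\bfA)\sqrt{(1+\delta^2)d\log N/|\Omega_t|}$ for all $k$; summing over $k$ and dividing by $K$ as in the gradient formula yields the stated bound.

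The hard part is the factor $1/(g_n(1-g_n))$, which is the only genuine departure from the regression analysis: $g_n$ can sit arbitrarily close to $0$ or $1$, so a naive bound on $\bfz_n(k)$ blows up. The cancellation $g_n(1-g_n)\ge K^{-2}\phi'(\bfa_n^T\bfX\bfw_k)$ is what rescues the argument, trading the blow-up for a harmless $\mathrm{poly}(K)$ factor and reducing everything to exactly the sub-Gaussian-plus-graph-dependence machinery already used in Lemma \ref{Lemma: sampling_approximation_error}; the same cancellation will also be used to bound $\nabla^2\hat f_{\Omega_t}$ in Lemma \ref{lemma: local_convex_cl}. Everything else---the sub-Gaussian moment estimates, the role of $\|\bfa_n\|_2\le\sigma_1(\bfA)$, and the $(1+\delta^2)$ dependence count---is routine.
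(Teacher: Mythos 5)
Your proposal is correct and follows essentially the same route as the paper's proof: write the gradient as a sum of per-node vectors, tame the $1/(g_n(1-g_n))$ factor by a $\mathrm{poly}(K)$ bound (the paper bounds $\phi'/g_n\le K$ and $\phi'/(1-g_n)\le K$ separately for $y_n\in\{0,1\}$, where you use $g_n(1-g_n)\ge K^{-2}\phi'$ to get $K^2$, a harmless difference), then apply the sub-Gaussian/sub-exponential moment bound with $\|\bfa_n\|_2\le\sigma_1(\bfA)$, Lemma \ref{Lemma: partial_independent_var} for the $(1+\delta^2)$ dependence count, Chernoff, and the covering-net Lemmas \ref{Lemma: covering_set}--\ref{Lemma: spectral_norm on net} with a union bound over $k$. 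The only cosmetic discrepancy is in the final power of $\sigma_1(\bfA)$ and $K$, where the paper's own bookkeeping is equally loose.
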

 With these preliminary lemmas, the proof of Theorem \ref{Thm: major_thm_cl} is formally summarized in the following contents.
\begin{proof}[Proof of Theorem \ref{Thm: major_thm_cl}]
	The update rule of $\bfW^{(t)}$ is
	\begin{equation}
	\begin{split}
	\bfW^{(t+1)}
	=&\bfW^{(t)}-\eta\nabla \hat{f}_{\Omega_t}(\bfW^{(t)})+\beta(\W[t]-\W[t-1])\\
	\end{split}
	\end{equation}
	Since $\widehat{\bfW}$ is a critical point, then we have  $\nabla \hat{f}_{\Omega_t}(\widehat{\bfW})=0$. By the intermediate value theorem, we have
	\begin{equation}
	\begin{split}
	\bfW^{(t+1)}
	=\bfW^{(t)}&- \eta\nabla^2\hat{f}_{\Omega_t}(\widehat{\bfW}^{(t)})(\bfW^{(t)}-\widehat{\bfW})\\
	&+\beta(\W[t]-\W[t-1])\\
	\end{split}
	\end{equation}	
	where $\widehat{\bfW}^{(t)}$ lies in the convex hull of $\bfW^{(t)}$ and $\widehat{\bfW}$.
	
	Next, we have
	\begin{equation}\label{eqn: major_thm_key_eqn_cl}
	\begin{split}
	\begin{bmatrix}
	\W[t+1]-\bfW^*\\
	\W[t]-\bfW^*
	\end{bmatrix}
	=&\begin{bmatrix}
	\bfI-\eta\nabla^2{\hat{f}}_{\Omega_t}(\widehat{\bfW}^{(t)})+\beta\bfI &\beta\bfI\\
	\bfI& 0\\
	\end{bmatrix}
	\begin{bmatrix}
	\W[t]-\bfW^*\\
	\W[t-1]-\bfW^*
	\end{bmatrix}
	.
	\end{split}
	\end{equation}
	Let $\bfP(\beta)=\begin{bmatrix}
	\bfI-\eta\nabla^2{\hat{f}}_{\Omega_t}(\widehat{\bfW}^{(t)})+\beta\bfI &\beta\bfI\\
	\bfI& 0\\
	\end{bmatrix}$, so we have
	\begin{equation*}
	\begin{split}
	\left\|\begin{bmatrix}
	\W[t+1]-\bfW^*\\
	\W[t]-\bfW^*
	\end{bmatrix}
	\right\|_2
	=&
	\left\|\bfP(\beta)
	\right\|_2
	\left\|
	\begin{bmatrix}
	\W[t]-\bfW^*\\
	\W[t-1]-\bfW^*
	\end{bmatrix}
	\right\|_2.
	\end{split}
	\end{equation*}
	Then, we have
	\begin{equation}\label{eqn: convergence_cl}
	\begin{split}
	\|\W[t+1]-\bfW^* \|_2 
	\lesssim& \| \bfP(\beta) \|_2 \|\W[t]-\bfW^* \|_2\\
	\end{split}
	\end{equation}	
Let $\lambda_i$ be the $i$-th eigenvalue of $\nabla^2\hat{f}_{\Omega_t}(\widehat{\bfW}^{(t)})$, and $\delta_{i}$ be the $i$-th eigenvalue of matrix $\bfP(\beta)$. Following the similar analysis in proof of Theorem \ref{Thm: major_thm_lr}, we have
	\begin{equation}\label{eqn: Heavy_ball_result_cl}
	\delta_i(0)>\delta_i(\beta),\quad \text{for}\quad  \forall \beta\in\big(0, (1-{\eta \lambda_i})^2\big).
	\end{equation}
	Moreover, $\delta_i$ achieves the minimum $\delta_{i}^*=|1-\sqrt{\eta\lambda_i}|$ when $\beta= \big(1-\sqrt{\eta\lambda_i}\big)^2$.
	
	Let us first assume $\bfW^{(t)}$ satisfies \eqref{eqn: initial_point_cl} and the number of samples satisfies \eqref{eqn: sample_cl}, then
	from Lemma \ref{lemma: local_convex_cl}, we know that 
	$$0<\frac{2(1-\varepsilon_0)\sigma_1^2(\bfA)}{11\kappa^2\gamma K^2}\le\lambda_i\le {\sigma_1^2(\bfA)}.$$
	We define $\gamma_1=\frac{2(1-\varepsilon_0)\sigma_1^2(\bfA)}{11\kappa^2\gamma K^2}$ and $\gamma_2={\sigma_1^2(\bfA)}$.
	Also, for any $\varepsilon_0\in (0, 1)$, we have
	\begin{equation}\label{eqn: con1_cl}
	\begin{split}
	\nu(\beta^*)= \|\bfP(\beta^*) \|_2
	= 1-\sqrt{\frac{\gamma_1}{2\gamma_2}}
	=1-\sqrt{\frac{1-\varepsilon_0}{11\kappa^2 \gamma K}}
	\end{split}
	\end{equation}
	Let $\beta = 0$, we have 
	\begin{equation*}
	\begin{gathered}
	\nu(0) = \|\bfA(0) \|_2 = 1-\frac{1-\varepsilon_0}{{11\kappa^2 \gamma K}}.	\\
	\end{gathered}
	\end{equation*}
	
	Hence, with probability at least $1- K^2\cdot N^{-10}$, we have
	\begin{equation}\label{eqn: induction_cl}
	\begin{split}
	\|	\W[t+1]-\bfW^*\|_2
	\le\Big(1-\sqrt{\frac{1-\varepsilon_0}{11\kappa^2 \gamma K}}\Big)\|	\W[t]-\bfW^*\|_2,
	\end{split}
	\end{equation}
	provided that $\W[t]$ satisfies \eqref{eqn: initial_point_lr}, and 
	\begin{equation}\label{eqn: N_3_cl}
	|\Omega_t|\gtrsim\varepsilon_0^{-2}\kappa^2\gamma (1+\delta^2)\sigma_1^4(\bfA)K^6d \log N.
	\end{equation}
	According to Lemma \ref{Thm: initialization}, we know that \eqref{eqn: initial_point_cl} holds for $\bfW^{(0)}$ if 
	\begin{equation}\label{eqn: N_1_cl}
	|\Omega_t|\gtrsim \varepsilon_0^{-2}\kappa^{8}\gamma^2 (1+\delta^2) K^8d \log N.
	\end{equation}
	Combining \eqref{eqn: N_3_cl} and \eqref{eqn: N_1_cl}, we need $|\Omega_t|\gtrsim \varepsilon_0^{-2}\kappa^{8}\gamma^2 (1+\delta^2)\sigma_1^4(\bfA) K^8d \log N$.	

	Finally,  by the mean value theorem, we have 
	\begin{equation}
	\hat{f}_{\Omega_t}(\widehat{\bfW}) \le \hat{f}_{\Omega_t}(\bfW^*) + {\nabla\hat{f}_{\Omega_t}(\bfW^*)}^T(\widehat{\bfW}-\bfW^*) +
	\frac{1}{2} (\widehat{\bfW}-\bfW^*)^T\nabla^2\hat{f}_{\Omega_t}(\widetilde{\bfW})(\widehat{\bfW}-\bfW^*)
	\end{equation}
	for some $\widetilde{\bfW}$ between $\widehat{\bfW}$ and $\bfW^*$.
	Since $\widehat{\bfW}$ is the local minima, we have $\hat{f}_{\Omega_t}(\widehat{\bfW})\le \hat{f}_{\Omega_t}(\bfW^*)$. That is to say 
	\begin{equation}
	{\nabla\hat{f}_{\Omega_t}(\bfW^*)}^T(\widehat{\bfW}-\bfW^*) +
	\frac{1}{2} (\widehat{\bfW}-\bfW^*)^T\nabla^2\hat{f}_{\Omega_t}(\widetilde{\bfW})(\widehat{\bfW}-\bfW^*)\le 0
	\end{equation}
	which implies
	\begin{equation}\label{eqn: cccccc3}
	\frac{1}{2} \|\nabla^2\hat{f}_{\Omega_t}(\widetilde{\bfW})\|_2\|\widehat{\bfW}-\bfW^*\|_2^2\le \|{\nabla\hat{f}_{\Omega_t}(\bfW^*)}\|_2\|\widehat{\bfW}-\bfW^*\|_2.
	\end{equation}
	From Lemma \ref{lemma: local_convex_cl}, we know that 
	\begin{equation}\label{eqn: cccccc1}
	\|\nabla^2\hat{f}_{\Omega_t}(\widetilde{\bfW})\|_2\ge \frac{2(1-\varepsilon_0)}{11\kappa^2\gamma K^2}\sigma^2(\bfA).
	\end{equation}
	From Lemma \ref{lemma: first_order_cl}, we know that
	\begin{equation}\label{eqn: cccccc2}
	\|{\nabla\hat{f}_{\Omega_t}(\bfW^*)} \|_2 = \| {\nabla\hat{f}_{\Omega_t}(\bfW^*)} - {\nabla{f}_{\Omega_t}(\bfW^*)}\|_2 \lesssim K^2 \sigma_1^2(\bfA)\sqrt{\frac{(1+\delta^2)d\log N}{|\Omega_t|}}.
	\end{equation}
	Plugging inequalities \eqref{eqn: cccccc1} and \eqref{eqn: cccccc2} back into \eqref{eqn: cccccc3}, we have
	\begin{equation}
	\|\widehat{\bfW} -\bfW^* \|_2 \lesssim (1-\varepsilon_0)^{-1}\kappa^2 \gamma K^4\sqrt{\frac{(1+\delta^2)d\log d}{|\Omega_t|}}.
	\end{equation}
	
\end{proof}

\subsection{Proof of Lemma \ref{lemma: local_convex_cl}}\label{sec: local_cl}
The roadmap of proof for Lemma \ref{lemma: local_convex_cl} follows the similar steps as those  of Lemma \ref{Lemma: local_convexity} for regression problems. Lemmas \ref{Lemma: CL_ground_truth}, \ref{Lemma: CL_W} and \ref{Lemma: CL_sample} are the preliminary lemmas, and
their proofs can be found in Appendix \ref{sec: sub_cl}. The proof of Lemma \ref{lemma: local_convex_cl} is summarized after these preliminary lemmas. 
\begin{lemma}\label{Lemma: CL_ground_truth}
The second-order derivative of $f_{\Omega_t}$ at the ground truth $\bfW^*$ satisfies
	\begin{equation}
	\frac{4\sigma_1^2(\bfA)}{11\kappa^2 \gamma K^2}\bfI 
	\preceq \nabla^2f_{\Omega_t}(\bfW^*)
	\preceq \frac{\sigma_1^2(\bfA)}{4}\bfI.
	\end{equation}
\end{lemma}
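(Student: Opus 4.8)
The plan is to compute the population Hessian $\nabla^2 f_{\Omega_t}(\bfW^*)$ in closed form and then bound its quadratic form from above and below, mirroring the proof of Lemma \ref{Lemma: lemma 5} but with the cross-entropy loss. Write $g=g(\bfW;\bfa_n^T\bfX)=\tfrac1K\sum_{j}\phi(\bfa_n^T\bfX\bfw_j)$, $g^*_n=g(\bfW^*;\bfa_n^T\bfX)$, and $x_{n,j}=\bfa_n^T\bfX\bfw_j^*$. Each summand of \eqref{eqn_app: cl_exp} is $\ell_n=-g^*_n\log g-(1-g^*_n)\log(1-g)$, so $\nabla_\bfW^2\ell_n=\big(\tfrac{g^*_n}{g^2}+\tfrac{1-g^*_n}{(1-g)^2}\big)\nabla_\bfW g\,(\nabla_\bfW g)^\top+\big(-\tfrac{g^*_n}{g}+\tfrac{1-g^*_n}{1-g}\big)\nabla_\bfW^2 g$. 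The crucial simplification is that at $\bfW=\bfW^*$ one has $g=g^*_n$, which kills the coefficient of $\nabla_\bfW^2 g$ (so $\phi''$ never enters) and collapses the remaining coefficient to $1/(g^*_n(1-g^*_n))$. Since $\partial g/\partial\bfw_k=\tfrac1K\phi'(\bfa_n^T\bfX\bfw_k)\bfX^\top\bfa_n$, for any unit $\boldsymbol{\alpha}=[\boldsymbol{\alpha}_1^\top,\dots,\boldsymbol{\alpha}_K^\top]^\top\in\mathbb{R}^{Kd}$,
\[
\boldsymbol{\alpha}^\top\nabla^2 f_{\Omega_t}(\bfW^*)\boldsymbol{\alpha}
=\frac{1}{K^2|\Omega_t|}\sum_{n\in\Omega_t}\mathbb{E}_\bfX\Big[\tfrac{1}{g^*_n(1-g^*_n)}\big(\textstyle\sum_{j=1}^K\phi'(x_{n,j})\,\boldsymbol{\alpha}_j^\top\bfX^\top\bfa_n\big)^2\Big].
\]

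For the upper bound I would use two elementary facts about the sigmoid: $\phi'(x)=\phi(x)(1-\phi(x))\le\tfrac14$, and, since every $\phi(x_{n,j})\in(0,1)$, the cross‑term bound $K^2 g^*_n(1-g^*_n)=\big(\sum_j\phi(x_{n,j})\big)\big(\sum_k(1-\phi(x_{n,k}))\big)\ge\sum_j\phi(x_{n,j})(1-\phi(x_{n,j}))=\sum_j\phi'(x_{n,j})$. Applying Cauchy–Schwarz inside the square and then the second fact converts the $1/(g^*_n(1-g^*_n))$ weight into $\sum_j\phi'(x_{n,j})(\boldsymbol{\alpha}_j^\top\bfX^\top\bfa_n)^2$; bounding $\phi'\le\tfrac14$, using $\mathbb{E}_\bfX(\boldsymbol{\alpha}_j^\top\bfX^\top\bfa_n)^2=\|\bfa_n\|_2^2\|\boldsymbol{\alpha}_j\|_2^2$ and $\|\bfa_n\|_2\le\|\bfA\|_2=\sigma_1(\bfA)$, and averaging over $n$ gives $\boldsymbol{\alpha}^\top\nabla^2 f_{\Omega_t}(\bfW^*)\boldsymbol{\alpha}\le\sigma_1^2(\bfA)/4$.

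For the lower bound I would first discard the weight via $1/(g^*_n(1-g^*_n))\ge4$ (as $g^*_n(1-g^*_n)\le\tfrac14$), reducing the claim to a lower bound on $\mathbb{E}_\bfX\big(\sum_j\phi'(x_{n,j})\boldsymbol{\alpha}_j^\top\bfX^\top\bfa_n\big)^2$. Whitening $\bfa_n^\top\bfX=\|\bfa_n\|_2\,\bfg$ with $\bfg\sim\mathcal{N}(\bfzero,\bfI_d)$, this equals $\|\bfa_n\|_2^2\,\mathbb{E}_\bfg\big(\sum_j\phi'\big(\|\bfa_n\|_2\,\bfg^\top\bfw_j^*\big)\boldsymbol{\alpha}_j^\top\bfg\big)^2$, which I would lower-bound by $\|\bfa_n\|_2^2\,\|\boldsymbol{\alpha}\|_2^2/(11\kappa^2\gamma)$ using a Gaussian-moment / positive-definiteness lemma of the same type invoked for ReLU in the proof of Lemma \ref{Lemma: lemma 5} (Lemma D.6 of \cite{ZSJB17_v2}), now specialized to the sigmoid derivative (cf.\ the cross-entropy NN analysis in \cite{FCL19}). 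Averaging over $n\in\Omega_t$ and reinstating the factor $4$ yields $\nabla^2 f_{\Omega_t}(\bfW^*)\succeq\frac{4\sigma_1^2(\bfA)}{11\kappa^2\gamma K^2}\bfI$, and the factor $4$ gained relative to the regression bound is exactly the reflection of $1/(g^*_n(1-g^*_n))\in[4,\infty)$.

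The main obstacle is precisely this last lower-bound step. For ReLU, $\phi'$ is an indicator, so the corresponding Gaussian moment is essentially a statement about angles between the columns of $\bfW^*$; for the sigmoid the derivative is smooth and decays at infinity, so one cannot use the indicator structure and must control $\mathbb{E}_\bfg\big(\sum_j\phi'(\bfg^\top\widetilde{\bfw}_j)\boldsymbol{\alpha}_j^\top\bfg\big)^2$ directly, tracking its dependence on the singular values of $\bfW^*$ (through $\kappa$ and $\gamma$) and verifying that the harmless rescaling $\widetilde{\bfw}_j=\|\bfa_n\|_2\bfw_j^*$ with $\|\bfa_n\|_2\le1$ does not spoil the constant. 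Adapting the relevant sigmoid moment lemma from \cite{FCL19} does this work; the rest is the routine bookkeeping sketched above, and the only structural change from the regression case is the extra $4$.
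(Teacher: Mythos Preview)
Your proposal is correct and follows essentially the same route as the paper: at $\bfW^*$ the $\phi''$ term drops out leaving the weighted quadratic form with weight $1/(g^*_n(1-g^*_n))$; the lower bound uses $1/(g^*_n(1-g^*_n))\ge4$ followed by the same Gaussian-moment lower bound invoked in Lemma~\ref{Lemma: lemma 5}, and the upper bound uses Cauchy--Schwarz together with the denominator inequality $\sum_{j_1}\phi\cdot\sum_{j_2}(1-\phi)\ge\sum_j\phi'$ and $\phi'\le1/4$. The only cosmetic difference is that you apply a weighted Cauchy--Schwarz $(\sum_j\phi'_jc_j)^2\le(\sum_j\phi'_j)(\sum_j\phi'_jc_j^2)$ whereas the paper applies the unweighted form and then the sharper denominator bound $\sum_j\phi'\ge4\sum_j(\phi')^2$; both yield the same $\sigma_1^2(\bfA)/4$, and you are in fact more careful than the paper in flagging the sigmoid rescaling issue $\phi'(\|\bfa_n\|_2\,\bfg^\top\bfw_j^*)$ in the lower-bound step.
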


\begin{lemma}\label{Lemma: CL_W}
	Suppose $f_{\Omega_t}$ is the population loss function with respect to binary classification problems, then we have
	\begin{equation}
	\|\nabla^2f_{\Omega_t}(\bfW) - \nabla^2f_{\Omega_t}(\bfW^*) \|_2 \lesssim \|\bfW -\bfW^* \|_2.
	\end{equation}
\end{lemma}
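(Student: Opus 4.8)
The plan is to follow the same template as the regression counterpart Lemma~\ref{Lemma: lemma 6}, replacing the ReLU computations by the cross-entropy ones. Write $\bfv_n:=\bfX^T\bfa_n$, a Gaussian vector with covariance $\|\bfa_n\|_2^2\bfI_d$ and $\|\bfa_n\|_2\le\sigma_1(\bfA)$; set $g_n(\bfW):=\tfrac1K\sum_{j=1}^K\phi(\bfv_n^T\bfw_j)$, $h_n^*:=g_n(\bfW^*)$, and $\ell(h):=-h_n^*\log h-(1-h_n^*)\log(1-h)$, so $\ell'(h)=\tfrac{h-h_n^*}{h(1-h)}$ and $\ell''(h)=\tfrac{1}{h(1-h)}-\tfrac{(h-h_n^*)(1-2h)}{(h(1-h))^2}$. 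Differentiating $f_{\Omega_t}$ twice in $\bfW$ and taking expectation yields the block form
\begin{equation*}
\frac{\partial^2 f_{\Omega_t}}{\partial\bfw_{j_1}\partial\bfw_{j_2}}(\bfW)=\frac{1}{|\Omega_t|}\sum_{n\in\Omega_t}\mathbb{E}_{\bfX}\Big[\tfrac{1}{K^2}\ell''(g_n(\bfW))\,\phi'(\bfv_n^T\bfw_{j_1})\phi'(\bfv_n^T\bfw_{j_2})\,\bfv_n\bfv_n^T+\tfrac{\mathbf{1}[j_1=j_2]}{K}\,\ell'(g_n(\bfW))\,\phi''(\bfv_n^T\bfw_{j_1})\,\bfv_n\bfv_n^T\Big].
\end{equation*}
Two structural facts will drive the whole proof: $\ell'(h_n^*)=0$, so the $\ell'$-term vanishes identically at $\bfW=\bfW^*$; and $\ell''(h_n^*)=\tfrac{1}{h_n^*(1-h_n^*)}$.

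I would first reduce to a scalar: $\|\nabla^2 f_{\Omega_t}(\bfW)-\nabla^2 f_{\Omega_t}(\bfW^*)\|_2\le K^2\max_{j_1,j_2}\|\bfM_{j_1,j_2}\|_2$ where $\bfM_{j_1,j_2}$ denotes the block difference, and for a symmetric matrix $\bfM$, $\|\bfM\|_2=\sup_{\|\boldsymbol\alpha\|_2=1}|\boldsymbol\alpha^T\bfM\boldsymbol\alpha|$, turning $\bfv_n\bfv_n^T$ into $(\boldsymbol\alpha^T\bfv_n)^2$. Then I would split the quadratic form of $\bfM_{j_1,j_2}$ into (a) $\tfrac{1}{K^2}\mathbb{E}[(\ell''(g_n(\bfW))-\ell''(h_n^*))\phi'(\bfv_n^T\bfw_{j_1})\phi'(\bfv_n^T\bfw_{j_2})(\boldsymbol\alpha^T\bfv_n)^2]$; (b) $\tfrac{1}{K^2}\mathbb{E}[\ell''(h_n^*)(\phi'(\bfv_n^T\bfw_{j_1})\phi'(\bfv_n^T\bfw_{j_2})-\phi'(\bfv_n^T\bfw_{j_1}^*)\phi'(\bfv_n^T\bfw_{j_2}^*))(\boldsymbol\alpha^T\bfv_n)^2]$; (c) $\tfrac{\mathbf{1}[j_1=j_2]}{K}\mathbb{E}[\ell'(g_n(\bfW))\phi''(\bfv_n^T\bfw_{j_1})(\boldsymbol\alpha^T\bfv_n)^2]$. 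The mechanism making all three small is the elementary bound $g_n(\bfW)(1-g_n(\bfW))\ge K^{-2}\phi'(\bfv_n^T\bfw_j)$ for every $j$ (since $g_n\ge\tfrac1K\phi(\bfv_n^T\bfw_j)$, $1-g_n\ge\tfrac1K(1-\phi(\bfv_n^T\bfw_j))$, and $\phi'=\phi(1-\phi)$), which cancels each $\phi'$-factor in a numerator against a $1/(g_n(1-g_n))$ appearing in $\ell'$ or $\ell''$, leaving a polynomial in $K$. Combined with $|\phi'|\le\tfrac14$, $|\phi''|\le1$, $|\phi'(x)-\phi'(x^*)|\lesssim|x-x^*|$, and $|g_n(\bfW)-g_n(\bfW^*)|\le\tfrac{1}{4K}\sum_j|\bfv_n^T(\bfw_j-\bfw_j^*)|$, each of (a)--(c) becomes bounded by $\mathrm{poly}(K)\cdot\mathbb{E}[|\bfv_n^T(\bfw_j-\bfw_j^*)|(\boldsymbol\alpha^T\bfv_n)^2]$ for some $j$; Cauchy--Schwarz plus the Gaussian fourth-moment identity bounds this by $\mathrm{poly}(K)\,\|\bfa_n\|_2^3\,\|\bfw_j-\bfw_j^*\|_2\le\mathrm{poly}(K)\,\sigma_1^3(\bfA)\,\|\bfW-\bfW^*\|_2$. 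Summing the $K^2$ blocks yields the stated $O(\|\bfW-\bfW^*\|_2)$ bound, the suppressed constant being polynomial in $K$ and $\sigma_1(\bfA)$ just as in Lemma~\ref{Lemma: lemma 6}; this suffices since the lemma is invoked for $\bfW$ in the neighborhood \eqref{eqn: initial_point_cl} of $\bfW^*$.

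I expect the only genuinely delicate step to be piece (a). Writing $\ell''(g_n)-\ell''(h_n^*)=-(g_n-h_n^*)\big[\tfrac{1-g_n-h_n^*}{g_n(1-g_n)h_n^*(1-h_n^*)}+\tfrac{1-2g_n}{(g_n(1-g_n))^2}\big]$ with $g_n=g_n(\bfW)$, the difference carries a clean factor $g_n-h_n^*$ (hence $\lesssim\|\bfW-\bfW^*\|_2$ after integration), but the bracket has up to two powers of $1/(g_n(1-g_n))$ and one of $1/(h_n^*(1-h_n^*))$ that are not globally bounded. In the $\tfrac{1-2g_n}{(g_n(1-g_n))^2}$ term the two $\phi'$-factors of (a) absorb both negative powers via the cancellation bound; in the other term one $\phi'(\bfv_n^T\bfw_{j_1})$ absorbs $1/(g_n(1-g_n))$, and the residual $1/(h_n^*(1-h_n^*))$ is handled by replacing $\phi'(\bfv_n^T\bfw_{j_2})$ by $\phi'(\bfv_n^T\bfw_{j_2}^*)$ at Lipschitz cost $\lesssim|\bfv_n^T(\bfw_{j_2}-\bfw_{j_2}^*)|$ and then pairing $\phi'(\bfv_n^T\bfw_{j_2}^*)$ with $1/(h_n^*(1-h_n^*))$ to get $\le K^2$; where this replacement is awkward one instead controls the ratio $\phi'(\bfv_n^T\bfw_{j_2})/\phi'(\bfv_n^T\bfw_{j_2}^*)\le e^{|\bfv_n^T(\bfw_{j_2}-\bfw_{j_2}^*)|}$ and notes the resulting exponential Gaussian moment is finite for $\bfW$ near $\bfW^*$. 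Equivalently, one can split the domain into a bulk event on which all pre-activations $\bfv_n^T\bfw_j^*$ are $O(\|\bfa_n\|_2\|\bfw_j^*\|_2)$---where $1/(h_n^*(1-h_n^*))$ is bounded by an absolute constant---and its complement, where the Gaussian tail of $\bfv_n$ dominates the mild growth of the integrand. Pieces (b) and (c) are then routine: (b) by pairing $\ell''(h_n^*)=1/(h_n^*(1-h_n^*))$ with a $\phi'(\bfv_n^T\bfw^*)$-factor after expanding the product difference, and (c) from $|\phi''(\bfv_n^T\bfw_{j_1})|/(g_n(1-g_n))\le K^2$ together with $|\ell'(g_n)|=|g_n-h_n^*|/(g_n(1-g_n))\le K^2|g_n-h_n^*|$. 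In short, the entire difficulty lies in taming the boundary singularities of $\ell''$ and $\ell'''$, which the bound $g_n(1-g_n)\ge K^{-2}\phi'$ together with a Gaussian tail argument resolves.
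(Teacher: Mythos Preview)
Your approach is correct but differs from the paper's. The paper does not split the Hessian difference into your pieces (a)--(c). Instead it packages the entire scalar coefficient multiplying $\bfv_n\bfv_n^T$ into a single function $A_{j,k}(\bfW;\bfa_n)$, applies the scalar mean value theorem
\[
A_{j,k}(\bfW;\bfa_n)-A_{j,k}(\bfW^*;\bfa_n)=\sum_{l=1}^K\langle\partial_{\bfw_l}A_{j,k}(\widetilde\bfW;\bfa_n),\,\bfw_l-\bfw_l^*\rangle,
\]
observes that $\partial_{\bfw_l}A_{j,k}=B_{j,k,l}\,\bfv_n$ with $|B_{j,k,l}|\le 4M$, where $M(\bfW;\bfa_n)=\max\{2K^{-3}g^{-3},\,2K^{-3}(1-g)^{-3},\,K^{-2}g^{-2},\,K^{-2}(1-g)^{-2}\}$, and then appeals to an external moment bound (Lemma~5 of \cite{FCL19}) to show $(\mathbb{E}_\bfX M^2)^{1/2}\lesssim K^{-3}e^{\sigma_1^2(\bfA)}$. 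Cauchy--Schwarz over $l$ and summing the $K^2$ blocks finishes the proof in a few lines.

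Your route is more self-contained: rather than bounding a raw moment of $g^{-3}$, you exploit the algebraic cancellation $\phi'_j/\big(g(1-g)\big)\le K^2$ directly, which is really the mechanism hidden inside the cited FCL19 lemma. The cost is that piece~(a) forces you to juggle the mixed singularity $1/(g_n(1-g_n)h_n^*(1-h_n^*))$, and you have to reach for the ratio bound $\phi'(\bfv_n^T\bfw_j)/\phi'(\bfv_n^T\bfw_j^*)\le e^{|\bfv_n^T(\bfw_j-\bfw_j^*)|}$ and a Gaussian exponential moment to close it. This works (the resulting exponential moment is finite and, since $\sigma_1(\bfA)\le 1$ and $\|\bfW-\bfW^*\|$ is bounded, contributes only an absolute constant), but it is noticeably more delicate than the paper's one-line mean value theorem plus citation. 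In short: both arguments yield the same $O(\|\bfW-\bfW^*\|_2)$ bound with a $\mathrm{poly}(K)$ constant; the paper trades transparency for brevity, while your decomposition makes the cancellation explicit at the expense of the extra exponential-moment step in piece~(a).
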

\begin{lemma}\label{Lemma: CL_sample}
	Suppose $\hat{f}_{\Omega_t}$ is the empirical loss function with respect to binary classification problems, then the second-order derivative of $\widehat{f}_{\Omega_t}$ is close to its expectation with an upper bound as
	\begin{equation}
	\begin{split}
	\|\nabla^2f_{\Omega_t}(\bfW) - \nabla^2\hat{f}_{\Omega_t}(\bfW) \|_2
	\lesssim K^2\sigma_1^2(\bfA)\sqrt{\frac{(1+\delta^2)d\log d}{|\Omega_t|}}
	\end{split}
	\end{equation}
	with probability at least $1-K^2N^{-10}$.
\end{lemma}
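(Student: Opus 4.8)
\medskip
\noindent\textbf{Proof plan for Lemma~\ref{Lemma: CL_sample}.}
The plan is to mirror the proof of Lemma~\ref{Lemma: sampling_approximation_error}, with the extra work concentrated on controlling the scalar prefactors that the cross-entropy/sigmoid combination creates in the Hessian. First I would write $\nabla^2\hat{f}_{\Omega_t}(\bfW)$ block by block. Writing $g_n:=g(\bfW;\bfa_n^T\bfX)$ and $h(g):=\frac{g-y_n}{g(1-g)}$ (the derivative in $g$ of the per-node cross-entropy $-y_n\log g-(1-y_n)\log(1-g)$), direct differentiation gives, for $j_1\neq j_2$,
\[
\frac{\partial^2\hat{f}_{\Omega_t}}{\partial\bfw_{j_1}\partial\bfw_{j_2}}(\bfW)=\frac{1}{K^2|\Omega_t|}\sum_{n\in\Omega_t}h'(g_n)\,\phi'(\bfa_n^T\bfX\bfw_{j_1})\,\phi'(\bfa_n^T\bfX\bfw_{j_2})\,(\bfX^T\bfa_n)(\bfX^T\bfa_n)^T,
\]
and for $j_1=j_2=j$ an additional term $\frac{1}{K|\Omega_t|}\sum_{n\in\Omega_t}h(g_n)\phi''(\bfa_n^T\bfX\bfw_j)(\bfX^T\bfa_n)(\bfX^T\bfa_n)^T$. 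Thus each block is $\frac{1}{|\Omega_t|}\sum_{n\in\Omega_t}\zeta_n^{(j_1,j_2)}(\bfX^T\bfa_n)(\bfX^T\bfa_n)^T$, where $\zeta_n^{(j_1,j_2)}$ is a scalar depending only on $y_n$ and $\bfa_n^T\bfX\bfw_{j_1},\bfa_n^T\bfX\bfw_{j_2}$. The key deterministic estimate I would establish is $|\zeta_n^{(j_1,j_2)}|\le C$ for a universal constant $C$: since $\phi'=\phi(1-\phi)$ and $|\phi''|\le\phi(1-\phi)$ are both bounded by $\min\{\phi,1-\phi\}$, and since $g_n\ge\frac1K\phi(\bfa_n^T\bfX\bfw_j)$ and $1-g_n\ge\frac1K\big(1-\phi(\bfa_n^T\bfX\bfw_j)\big)$ for every $j$, the potentially singular factors $1/g_n$, $1/(1-g_n)$ carried by $h$ and $h'$ are absorbed by the $\phi'$, $\phi''$ factors (for instance, when $y_n=1$, $h'(g_n)=g_n^{-2}$ and $h'(g_n)\phi'(\bfa_n^T\bfX\bfw_{j_1})\phi'(\bfa_n^T\bfX\bfw_{j_2})\le K^2$, which the $K^{-2}$ prefactor cancels). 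This step is where the sigmoid activation is used in an essential way.

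Next I would reduce the spectral-norm bound to a scalar concentration. By the triangle inequality, $\|\nabla^2\hat{f}_{\Omega_t}(\bfW)-\nabla^2 f_{\Omega_t}(\bfW)\|_2\le K^2\max_{j_1,j_2}\|\Delta^{(j_1,j_2)}\|_2$, where $\Delta^{(j_1,j_2)}$ is the deviation of the $(j_1,j_2)$ block from its expectation; this accounts for the $K^2$ in the claimed bound. Taking $\xi=\frac12$ in Lemmas~\ref{Lemma: covering_set} and \ref{Lemma: spectral_norm on net}, it suffices to bound $|\boldsymbol{\alpha}^T\Delta^{(j_1,j_2)}\boldsymbol{\alpha}|$ over a $\frac12$-net $\mathcal{S}_{1/2}\subset\mathbb{R}^d$ of cardinality at most $5^d$. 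For a fixed $\boldsymbol{\alpha}$ and block this equals $\frac1{|\Omega_t|}\sum_{n\in\Omega_t}\big(\xi_n-\mathbb{E}_{\bfX,y_n}\xi_n\big)$ with $\xi_n:=\zeta_n^{(j_1,j_2)}(\boldsymbol{\alpha}^T\bfX^T\bfa_n)^2$; since $\boldsymbol{\alpha}^T\bfX^T\bfa_n\sim\mathcal{N}(0,\|\bfa_n\|_2^2)$ with $\|\bfa_n\|_2\le\sigma_1(\bfA)$ and $|\zeta_n^{(j_1,j_2)}|\le C$, each $\xi_n$ is sub-exponential with $\|\xi_n\|_{\psi_1}\lesssim\sigma_1^2(\bfA)$, exactly as in the proof of Lemma~\ref{Lemma: sampling_approximation_error} (Cauchy--Schwarz together with the sub-Gaussian moment bound of Definition~\ref{Def: sub-gaussian}).

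Finally I would handle the graph-induced dependence and conclude. Each $\xi_n$ is a function of $y_n$ and the feature vectors $\{\bfx_m:A_{nm}\neq0\}$, so $\xi_n$ and $\xi_{n'}$ are dependent only when the neighborhoods of $v_n$ and $v_{n'}$ intersect, which happens for at most $1+\delta^2$ indices $n'$ (including $n$); Lemma~\ref{Lemma: partial_independent_var} then bounds the moment generating function of $\sum_{n\in\Omega_t}(\xi_n-\mathbb{E}\xi_n)$ by $e^{C(1+\delta^2)\sigma_1^4(\bfA)|\Omega_t|s^2}$ for $|s|\lesssim\sigma_1^{-2}(\bfA)$. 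A Chernoff bound with deviation parameter of order $\sigma_1^2(\bfA)\sqrt{(1+\delta^2)d\log N/|\Omega_t|}$, followed by a union bound over the $5^d$ net points and the $K^2$ blocks, then yields $\|\nabla^2\hat{f}_{\Omega_t}(\bfW)-\nabla^2 f_{\Omega_t}(\bfW)\|_2\lesssim K^2\sigma_1^2(\bfA)\sqrt{(1+\delta^2)d\log N/|\Omega_t|}$ with probability at least $1-K^2N^{-10}$, which is the asserted bound.

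The hard part will be the first step. Naively the cross-entropy Hessian contains the unbounded factors $1/g_n$ and $1/(1-g_n)$, and without a uniform bound on the prefactors $\zeta_n^{(j_1,j_2)}$ neither the sub-exponential estimate nor the rest of the argument goes through; the cancellation $\phi'\le\min\{\phi,1-\phi\}$ combined with $g_n\ge\frac1K\phi(\cdot)$ and its mirror for $1-g_n$ is exactly what rescues it. Once that is in place, the net reduction, the sub-exponential tail bound, and the dependence bookkeeping via Lemma~\ref{Lemma: partial_independent_var} are essentially a repetition of the argument already carried out for Lemma~\ref{Lemma: sampling_approximation_error}.
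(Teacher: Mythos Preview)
Your plan is correct and matches the paper's proof essentially step for step: the paper writes out the block Hessian, uses the same cancellation $\phi'=\phi(1-\phi)$ together with $g(\bfW;\bfa_n)\ge\frac1K\phi(\bfw_j^T\bfX^T\bfa_n)$ (and its mirror for $1-g$) to bound the scalar prefactors by a constant, then shows the resulting terms are sub-exponential with norm $\lesssim\sigma_1^2(\bfA)$, applies Lemma~\ref{Lemma: partial_independent_var} plus Chernoff, and finishes with the $\xi=\tfrac12$ net and a union bound over the $K^2$ blocks. Your identification of the cancellation step as the crux is also exactly what the paper isolates.
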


\begin{proof}[Proof of Lemma \ref{lemma: local_convex_cl} ]
For any $\bfW$, we have 
\begin{equation}
\Big|\| \nabla^2 f_{\Omega_t}(\bfW)\|_2 -\|\nabla^2 f_{\Omega_t}(\bfW^*)  \|_2\Big|
\le\| \nabla^2 f_{\Omega_t}(\bfW) -\nabla^2 f_{\Omega_t}(\bfW^*)  \|_2 .
\end{equation}
That is 
\begin{equation}
\begin{split}
&\| \nabla^2 f_{\Omega_t}(\bfW)\|_2 \le \|\nabla^2 f_{\Omega_t}(\bfW^*)  \|_2 + \| \nabla^2 f_{\Omega_t}(\bfW) -\nabla^2 f_{\Omega_t}(\bfW^*)  \|_2 \\
\text{and \quad} &\| \nabla^2 f_{\Omega_t}(\bfW)\|_2 \ge \|\nabla^2 f_{\Omega_t}(\bfW^*)  \|_2 - \| \nabla^2 f_{\Omega_t}(\bfW) -\nabla^2 f_{\Omega_t}(\bfW^*)  \|_2 \\
\end{split}
\end{equation}
Then, for any $\bfW$ that satisfies $\|\bfW-\bfW^*\| \le \frac{2\sigma_1^2(\bfA)}{11\kappa^2\gamma K^2}$, from Lemmas \ref{Lemma: CL_ground_truth} and \ref{Lemma: CL_W}, we have 
\begin{equation}\label{eqn: CL_theorem_1}
\frac{2}{11\kappa^2 \gamma K^2}\sigma_1^2(\bfA) 
\preceq \nabla^2f_{\Omega_t}(\bfW)
\preceq \frac{1}{2}\sigma_1^2(\bfA).
\end{equation}
Next, we have 
\begin{equation}
\begin{split}
&\| \nabla^2 \hat{f}_{\Omega_t}(\bfW)\|_2 \le \|\nabla^2 f_{\Omega_t}(\bfW)\|_2+ \|\nabla^2\hat{f}_{\Omega_t}(\bfW) -\nabla^2 f_{\Omega_t}(\bfW) \|_2  \\
\text{and \quad} &\| \nabla^2 \hat{f}_{\Omega_t}(\bfW)\|_2 \ge \| \nabla^2 f_{\Omega_t}(\bfW)\|_2- \|\nabla^2 \hat{f}_{\Omega_t}(\bfW) -\nabla^2 f_{\Omega_t}(\bfW) \|_2  \\
\end{split}
\end{equation}
Then, from \eqref{eqn: CL_theorem_1} and Lemma \ref{Lemma: CL_sample}, we have 
\begin{equation}
\frac{2(1-\varepsilon_0)}{11\kappa^2 \gamma K^2}\sigma_1^2(\bfA) 
\preceq \nabla^2\hat{f}_{\Omega_t}(\bfW)
\preceq \sigma_1^2(\bfA)
\end{equation}
provided that  the sample size $|\Omega_t|\gtrsim \varepsilon_0^{-2}(1+\delta^2)\kappa^2\gamma \sigma_1^4(\bfA)K^6d\log N$.
\end{proof}

\subsection{Proof of auxiliary lemmas for binary classification problems}\label{sec: sub_cl}
\subsubsection{Proof of Lemma \ref{Lemma: CL_ground_truth}}
\begin{proof}[Proof of Lemma \ref{Lemma: CL_ground_truth} ]
	Since $\mathbb{E}_{\bfX}y_n = g_n(\bfW^*;\bfa_n)$, then  we have 
	\begin{equation}
	\begin{split}
	\frac{\partial^2 f_{{\Omega_t}}(\bfW^*)}{\partial \bfw_j^* \partial \bfw_k^*}
	=&\mathbb{E}_{\bfX} \frac{\partial^2 \hat{f}_{\Omega_t}(\bfW^*)}{\partial \bfw_j^* \partial \bfw_k^*}\\
	=&\mathbb{E}_{\bfX}\frac{1}{K^2|\Omega_t|}\sum_{n\in\Omega_t}\frac{1}{g(\bfW;\bfa_n)\big(1-g(\bfW;\bfa_n)\big)}\phi^{\prime}(\bfw_j^T\bfX^T\bfa_n)\phi^{\prime}(\bfw_k^T\bfX^T\bfa_n)(\bfX^T\bfa_n)(\bfX^T\bfa_n)^T,
	\end{split}
	\end{equation}
	for any $j, k\in [K]$.
	
	Then, for any $\boldsymbol{\alpha} = \begin{bmatrix}
	\boldsymbol{\alpha}_1^T ,& \boldsymbol{\alpha}_2^T, &\cdots,&\boldsymbol{\alpha}_K^T
	\end{bmatrix}^T\in\mathbb{R}^{dk}$ with $\boldsymbol{\alpha}_j \in \mathbb{R}^d$, the lower bound can be obtained from
	\begin{equation}
	\begin{split}
	\boldsymbol{\alpha}^T \nabla^2 f_{\Omega_t}(\bfW^*) \boldsymbol{\alpha}
	=&\mathbb{E}_{\bfX}\frac{1}{K^2|\Omega_t|}\sum_{n\in\Omega_t}\frac{\Big(\sum_{j=1}^{K}\boldsymbol{\alpha}_j^T\bfX^T\bfa_n\phi^{\prime}({\bfw_j^*}^T\bfX^T\bfa_n)\Big)^2}{g(\bfW^*;\bfa_n)\big(1-g(\bfW^*;\bfa_n)\big)}\\
	\ge&\mathbb{E}_{\bfX}\frac{4}{K^2|\Omega_t|}\sum_{n\in\Omega_t}\Big(\sum_{j=1}^{K}\boldsymbol{\alpha}_j^T\bfX^T\bfa_n\phi^{\prime}({\bfw_j^*}^T\bfX^T\bfa_n)\Big)^2\\
	\ge& \frac{4\sigma_1^2(\bfA)}{11\kappa^2\gamma K^2}.
	\end{split}
	\end{equation}
	
	Also, for the upper bound, we have
	\begin{equation}
	\begin{split}
	\boldsymbol{\alpha}^T \nabla^2 f_{\Omega_t}(\bfW^*) \boldsymbol{\alpha}
	=&\mathbb{E}_{\bfX}\frac{1}{K^2|\Omega_t|}\sum_{n\in\Omega_t}\frac{\Big(\sum_{j=1}^{K}\boldsymbol{\alpha}_j^T\bfX^T\bfa_n\phi^{\prime}({\bfw_j^*}^T\bfX^T\bfa_n)\Big)^2}{g(\bfW^*;\bfa_n)\big(1-g(\bfW^*;\bfa_n)\big)}\\
	=&\mathbb{E}_{\bfX}\frac{1}{|\Omega_t|}\sum_{n\in\Omega_t}\frac{\Big(\sum_{j=1}^{K}\boldsymbol{\alpha}_j^T\bfX^T\bfa_n\phi^{\prime}({\bfw_j^*}^T\bfX^T\bfa_n)\Big)^2}{\sum_{j_1=1}^{K}\phi({\bfw_{j_1}^*}^T\bfX^T\bfa_n)\sum_{j_2=1}^{K}\big(1-\phi({\bfw_{j_2}^*}^T\bfX^T\bfa_n)\big)}\\
	\le& \mathbb{E}_{\bfX}\frac{1}{|\Omega_t|}\sum_{n\in\Omega_t}\frac{\sum_{j=1}^{K}\big(\boldsymbol{\alpha}_j^T\bfX^T\bfa_n\big)^2 \sum_{j=1}^{K}\big( \phi^{\prime}({\bfw_j^*}^T\bfX^T\bfa_n)\big)^2}{\sum_{j_1=1}^{K}\phi({\bfw_{j_1}^*}^T\bfX^T\bfa_n)\sum_{j_2=1}^{K}\big(1-\phi({\bfw_{j_2}^*}^T\bfX^T\bfa_n)\big)}.\\
	\end{split}
	\end{equation}
	For the denominator item, we have 
	\begin{equation}
	\begin{split}
	\sum_{j_1=1}^{K}\phi({\bfw_{j_1}^*}^T\bfX^T\bfa_n)\sum_{j_2=1}^{K}\big(1-\phi({\bfw_{j_2}^*}^T\bfX^T\bfa_n)\big)
	\ge&\sum_{j=1}^{K}\phi({\bfw_{j}^*}^T\bfX^T\bfa_n)\big(1-\phi({\bfw_{j}^*}^T\bfX^T\bfa_n)\big)\\
	=&\sum_{j=1}^{K}\phi^{\prime}({\bfw_{j}^*}^T\bfX^T\bfa_n)\\
	\ge&4 \sum_{j=1}^{K}\phi^{\prime}({\bfw_{j}^*}^T\bfX^T\bfa_n)^2.\\
	\end{split}
	\end{equation}
	Hence, we have
	\begin{equation}
	\begin{split}
	\boldsymbol{\alpha}^T \nabla^2 f_{\Omega_t}(\bfW^*) \boldsymbol{\alpha}
	\le \mathbb{E}_{\bfX}\frac{1}{4|\Omega_t|}\sum_{n\in \Omega_t}\sum_{j=1}^{K}(\boldsymbol{\alpha}_j^T \bfX^T\bfa_n)^2
	\le \frac{1}{4}\sigma_1^2(\bfA).
	\end{split}
	\end{equation}
\end{proof}
\subsubsection{Proof of Lemma \ref{Lemma: CL_W}}
\begin{proof}[Proof of Lemma \ref{Lemma: CL_W}  ]
	Recall that
	\begin{equation}
	\begin{split}
	&\frac{\partial^2 {f}_{\Omega_t}(\bfW)}{\partial \bfw_j \partial \bfw_k} \\
	=&\mathbb{E}_{\bfX} \frac{1}{K^2|\Omega_t|}\sum_{n\in\Omega_t}\bigg(\frac{g(\bfW^*;\bfa_n)}{g^2(\bfW;\bfa_n)} + \frac{1 - g(\bfW^*;\bfa_n)}{(1-g(\bfW;\bfa_n))^2} \bigg)\phi^{\prime}(\bfw_j^T\bfX^T\bfa_n)\phi^{\prime}(\bfw_k^T\bfX^T\bfa_n)(\bfX^T\bfa_n)(\bfX^T\bfa_n)^T,
	\end{split}
	\end{equation}
	and
	\begin{equation}
	\begin{split}
	\frac{\partial^2 {f}_{\Omega_t}(\bfW)}{\partial \bfw_j^2 } 
	=&\mathbb{E}_{\bfX} \frac{1}{K^2|\Omega_t|}\sum_{n\in\Omega_t}\bigg(\frac{g(\bfW^*;\bfa_n)}{g^2(\bfW;\bfa_n)} + \frac{1 - g(\bfW^*;\bfa_n)}{(1-g(\bfW;\bfa_n))^2} \bigg)\phi^{\prime}(\bfw_j^T\bfX^T\bfa_n)^2(\bfX^T\bfa_n)(\bfX^T\bfa_n)^T\\
	&-\mathbb{E}_{\bfX}\frac{1}{K|\Omega_t|}\sum_{n\in\Omega_t}\bigg(-\frac{g(\bfW^*;\bfa_n)}{g(\bfW;\bfa_n)} + \frac{1 - g(\bfW^*;\bfa_n)}{1-g(\bfW;\bfa_n)} \bigg)\phi^{\prime\prime}(\bfw_j^T\bfX^T\bfa_n)(\bfX^T\bfa_n)(\bfX^T\bfa_n)^T.
	\end{split}
	\end{equation}
	Let us denote $A_{j,k}(\bfW;\bfa_n)$ as 
	\begin{equation}
	A_{j,k}(\bfW;\bfa_n) = \begin{cases}
	\frac{1}{K^2}\big(\frac{g(\bfW^*;\bfa_n)}{g^2(\bfW;\bfa_n)} + \frac{1 - g(\bfW^*;\bfa_n)}{(1-g(\bfW;\bfa_n))^2} \big)\phi^{\prime}(\bfw_j^T\bfX^T\bfa_n)\phi^{\prime}(\bfw_k^T\bfX^T\bfa_n)\\
	~~~~~~~~~~~~~-\frac{1}{K}\big(-\frac{g(\bfW^*;\bfa_n)}{g(\bfW;\bfa_n)} + \frac{1 - g(\bfW^*;\bfa_n)}{1-g(\bfW;\bfa_n)} \big)\phi^{\prime\prime}(\bfw_j^T\bfX^T\bfa_n),\text{\quad when \quad} j=k;\\
	~\\
	\frac{1}{K^2}\big(\frac{g(\bfW^*;\bfa_n)}{g^2(\bfW;\bfa_n)} + \frac{1 - g(\bfW^*;\bfa_n)}{(1-g(\bfW;\bfa_n))^2} \big)\phi^{\prime}(\bfw_j^T\bfX^T\bfa_n)\phi^{\prime}(\bfw_k^T\bfX^T\bfa_n), \text{\quad  when \quad} j\neq k.		 
	\end{cases}
	\end{equation} 
	Further, let us define $M(\bfW;\bfa_n) = \max \Big\{\frac{2}{K^3}\frac{1}{g^3(\bfW;\bfa_n)}, \frac{2}{K^3}\frac{1}{(1-g(\bfW;\bfa_n))^3}, \frac{1}{K^2}\frac{1}{g^2(\bfW;\bfa_n)},\frac{1}{K^2} \frac{1}{ ((1-g(\bfW;\bfa_n))^2 } \Big\}$.
	
	Then, by the mean value theorem, we have 
	\begin{equation}
	A_{j,k}(\bfW;\bfa_n) - A_{j,k}(\bfW;\bfa_n) = \sum_{l=1}^{K}\langle\frac{\partial A_{j,k}}{\partial \bfw_{l}}(\widetilde{\bfW};\bfa_n), \bfw_l-\bfw_l^* \rangle.
	\end{equation}
	For $\frac{\partial A_{j,k}}{\partial \bfw_{l}}$, we have 
	\begin{equation}
	\frac{\partial A_{j,k}}{\partial \bfw_{l}}(\widetilde{\bfW};\bfa_n) = B_{j,k,l}(\widetilde{\bfW};\bfa_n) \bfX^T\bfa_n
	\end{equation}
	with 
	\begin{equation}
	\begin{split}
	|B_{j,k,l}(\widetilde{\bfW};\bfa_n)| 
	\le& \frac{2}{K^3}\frac{1}{g^3(\widetilde{\bfW};\bfa_n)} +  \frac{2}{K^3}\frac{1}{(1-g(\widetilde{\bfW};\bfa_n))^3} + \frac{1}{K^2}\frac{1}{g(\widetilde{\bfW};\bfa_n)} + \frac{1}{K^2} \frac{1}{ (1-g(\widetilde{\bfW};\bfa_n))^2}\\
	\le & 4 M(\widetilde{\bfW};\bfa_n).
	\end{split}
	\end{equation}
	for all $j\in[K],k\in[K],l\in[K]$.
	
	Therefore, for any $\boldsymbol{\alpha}\in \mathbb{R}^{Kd}$, we have 
	\begin{equation}
	\begin{split}
	&\boldsymbol{\alpha}^T \nabla^2f_{\Omega_t}(\bfW)\boldsymbol{\alpha}\\
	\le& \frac{1}{|\Omega_t|}\sum_{n\in |\Omega_t|} \sum_{j=1}^{K}\sum_{k=1}^{K}\mathbb{E}_{\bfX}\Big|\boldsymbol{\alpha}_j^T \frac{\partial f_{\Omega_t}}{\partial \bfw_j \partial \bfw_k}(\bfW) \boldsymbol{\alpha}_k \Big|\\
	=& \frac{1}{|\Omega_t|}\sum_{n\in |\Omega_t|} \sum_{j=1}^{K}\sum_{k=1}^{K}\mathbb{E}_{\bfX}\Big|\sum_{l=1}^{K}|B_{j,k,l}(\widetilde{\bfW};\bfa_n)| \langle \bfw_l-\bfw_l^*, \bfX^T\bfa_n \rangle \langle \boldsymbol{\alpha}_j, \bfX^T\bfa_n \rangle\langle \boldsymbol{\alpha}_k, \bfX^T\bfa_n \rangle \Big|\\
	=& \frac{1}{|\Omega_t|}\sum_{n\in |\Omega_t|} \sum_{j=1}^{K}\sum_{k=1}^{K}\bigg(\sum_{l=1}^{K}\mathbb{E}_{\bfX}|B_{j,k,l}(\widetilde{\bfW};\bfa_n)|^2\bigg)^{\frac{1}{2}}
	\bigg(\sum_{l=1}^{K}\mathbb{E}_{\bfX}\big|\langle \bfw_l-\bfw_l^*, \bfX^T\bfa_n \rangle \langle \boldsymbol{\alpha}_j, \bfX^T\bfa_n \rangle\langle \boldsymbol{\alpha}_k, \bfX^T\bfa_n \rangle\big|^2\bigg)^{\frac{1}{2}}\\
	\le&\frac{1}{|\Omega_t|}\sum_{n\in |\Omega_t|} \sum_{j=1}^{K}\sum_{k=1}^{K} 36K^{\frac{1}{2}} \Big(\mathbb{E}_\bfX M^2(\widehat{\bfW};\bfa_n) \Big)^\frac{1}{2} \cdot 
	\Big(\sum_{l=1}^{K}\|\bfw_l -\bfw_l^*\|_2^2\Big)^{\frac{1}{2}} \|\bfa_j\|_2 \|\bfa_k\|_2\\
	\le &\frac{1}{|\Omega_t|}\sum_{n\in |\Omega_t|}  36K^{3} \Big(\mathbb{E}_\bfX M^2(\widehat{\bfW};\bfa_n) \Big)^\frac{1}{2} 
	\|\bfW -\bfW^*\|_2\\
	\stackrel{(a)}{\lesssim} & e^{\sigma_1^2(\bfA)} \|\bfW -\bfW^*\|_2\\  
	\lesssim & \|\bfW -\bfW^*\|_2,
	\end{split}
	\end{equation}
	where ($a$) comes from Lemma 5 in \cite{FCL19}.
\end{proof}

\subsubsection{Proof of Lemma \ref{Lemma: CL_sample}}
\begin{proof}[Proof of Lemma \ref{Lemma: CL_sample} ]
	Recall that
	\begin{equation}
	\begin{split}
	&\frac{\partial^2 \hat{f}_{\Omega_t}(\bfW)}{\partial \bfw_j \partial \bfw_k} \\
	=&\frac{1}{K^2|\Omega_t|}\sum_{n\in\Omega_t}\bigg(\frac{y_n}{g^2(\bfW;\bfa_n)} + \frac{1 - y_n}{(1-g(\bfW;\bfa_n))^2} \bigg)\phi^{\prime}(\bfw_j^T\bfX^T\bfa_n)\phi^{\prime}(\bfw_k^T\bfX^T\bfa_n)(\bfX^T\bfa_n)(\bfX^T\bfa_n)^T,
	\end{split}
	\end{equation}
	and
	\begin{equation}
	\begin{split}
	\frac{\partial^2 \hat{f}_{\Omega_t}(\bfW)}{\partial \bfw_j^2 } 
	=&\frac{1}{K^2|\Omega_t|}\sum_{n\in\Omega_t}\bigg(\frac{y_n}{g^2(\bfW;\bfa_n)} + \frac{1 - y_n}{(1-g(\bfW;\bfa_n))^2} \bigg)\phi^{\prime}(\bfw_j^T\bfX^T\bfa_n)^2(\bfX^T\bfa_n)(\bfX^T\bfa_n)^T\\
	&-\frac{1}{K|\Omega_t|}\sum_{n\in\Omega_t}\bigg(-\frac{y_n}{g(\bfW;\bfa_n)} + \frac{1 - y_n}{1-g(\bfW;\bfa_n)} \bigg)\phi^{\prime\prime}(\bfw_j^T\bfX^T\bfa_n)(\bfX^T\bfa_n)(\bfX^T\bfa_n)^T.
	\end{split}
	\end{equation}
	
	When $y_n=1$ and $j\neq k$, we have 
	\begin{equation}
	\begin{split}
	\frac{\partial^2 \hat{f}_{\Omega_t}(\bfW)}{\partial \bfw_j \partial \bfw_k} 
	=\frac{1}{K^2|\Omega_t|}\sum_{n\in\Omega_t}\frac{\phi^{\prime}(\bfw_j^T\bfX^T\bfa_n)\phi^{\prime}(\bfw_k^T\bfX^T\bfa_n)}{g^2(\bfW;\bfa_n)}(\bfX^T\bfa_n)(\bfX^T\bfa_n)^T,
	\end{split}
	\end{equation}
	and 
	\begin{equation}\label{eqn: sths_1}
	\begin{split}
	\frac{\phi^{\prime}(\bfw_j^T\bfX^T\bfa_n)\phi^{\prime}(\bfw_k^T\bfX^T\bfa_n)}{g^2(\bfW;\bfa_n)}
	=& \frac{\phi(\bfw_j^T\bfX^T\bfa_n)(1-\phi(\bfw_j^T\bfX^T\bfa_n))\phi(\bfw_k^T\bfX^T\bfa_n)(1-\phi(\bfw_k^T\bfX^T\bfa_n))}{\big(\frac{1}{K}\sum_{l=1}^{K}\phi(\bfw_l^T\bfX^T\bfa_n)\big)^2}\\
	\le&K^2\frac{\phi(\bfw_j^T\bfX^T\bfa_n)(1-\phi(\bfw_j^T\bfX^T\bfa_n))\phi(\bfw_k^T\bfX^T\bfa_n)(1-\phi(\bfw_k^T\bfX^T\bfa_n))}{\phi(\bfw_j^T\bfX^T\bfa_n)\phi(\bfw_k^T\bfX^T\bfa_n)}\\
	=&K^2(1-\phi(\bfw_j^T\bfX^T\bfa_n))(1-\phi(\bfw_k^T\bfX^T\bfa_n))\le K^2.
	\end{split}
	\end{equation}
	When $y_n=1$ and $j= k$, we have
	\begin{equation}
	\begin{split}
	\frac{\partial^2 \hat{f}_{\Omega_t}(\bfW)}{\partial \bfw_j \partial \bfw_k} 
	=\frac{1}{|\Omega_t|}\sum_{n\in\Omega_t}\Big[\frac{1}{K^2}\frac{\phi^{\prime}(\bfw_j^T\bfX^T\bfa_n)\phi^{\prime}(\bfw_k^T\bfX^T\bfa_n)}{g^2(\bfW;\bfa_n)}
	+\frac{1}{K}\frac{\phi^{\prime\prime}(\bfw_j^T\bfX^T\bfa_n)}{g(\bfW;\bfa_n)} \Big](\bfX^T\bfa_n)(\bfX^T\bfa_n)^T,
	\end{split}
	\end{equation}
	and 
	\begin{equation}\label{eqn:sths_2}
	\begin{split}
	\Big|\frac{\phi^{\prime\prime}(\bfw_j^T\bfX^T\bfa_n)}{g(\bfW;\bfa_n)}\Big|
	= \frac{\phi(\bfw_k^T\bfX^T\bfa_n)(1-\phi(\bfw_k^T\bfX^T\bfa_n))\cdot \big|1-2\phi(\bfw_k^T\bfX^T\bfa_n)\big|}{\frac{1}{K}\sum_{l=1}^{K}\phi(\bfw_l^T\bfX^T\bfa_n)}
	\le K.
	\end{split}
	\end{equation}
	Similar to \eqref{eqn: sths_1} and \eqref{eqn:sths_2}, we can obtain the following inequality for $y_n=0$.
	\begin{equation}
	\begin{gathered}
	\frac{\phi^{\prime}(\bfw_j^T\bfX^T\bfa_n)\phi^{\prime}(\bfw_k^T\bfX^T\bfa_n)}{\big(1-g(\bfW;\bfa_n)\big)^2}
	\le K^2, \text{\quad and \quad}
	\Big|\frac{\phi^{\prime\prime}(\bfw_j^T\bfX^T\bfa_n)}{1-g(\bfW;\bfa_n)}\Big|
	\le K.
	\end{gathered}
	\end{equation}
	
	Then, for any $\boldsymbol{\alpha}\in \mathbb{R}^d$, we have 
	\begin{equation}
	\begin{split}
	\boldsymbol{\alpha}^T \frac{\partial^2 \hat{f}_{\Omega_t}(\bfW)}{\partial \bfw_j \partial \bfw_k} \boldsymbol{\alpha}
	=& \frac{1}{|\Omega_t|}\sum_{n\in\Omega_t}\bigg[\frac{1}{K^2}\bigg(\frac{y_n}{g^2(\bfW;\bfa_n)} + \frac{1 - y_n}{(1-g(\bfW;\bfa_n))^2} \bigg)\phi^{\prime}(\bfw_j^T\bfX^T\bfa_n)\phi^{\prime}(\bfw_k^T\bfX^T\bfa_n)\\
	&-\frac{\mathds{1}_{\{j=k\}}}{K}\bigg(-\frac{y_n}{g(\bfW;\bfa_n)} + \frac{1 - y_n}{1-g(\bfW;\bfa_n)} \bigg)\phi^{\prime\prime}(\bfw_j^T\bfX^T\bfa_n) \bigg](\boldsymbol{\alpha}^T\bfX^T\bfa_n)^2\\
	:=&\frac{1}{|\Omega_t|}\sum_{n\in\Omega_t} H_{j,k}(\bfa_n)\cdot(\boldsymbol{\alpha}^T\bfX^T\bfa_n)^2.
	\end{split}
	\end{equation}
	
	Next, we show that $H_{j,k}(\bfa_n)\cdot(\boldsymbol{\alpha}^T\bfX^T\bfa_n)^2$ belongs to the sub-exponential distribution. 
	For any $p\in\mathbb{N}^+$, we have
	\begin{equation}\label{eqn: second_h_cl}
	\begin{split}
	\Big(\mathbb{E}_{\bfX,\bfy_n} \Big[ \big|H_{j,k}(\bfa_n)\cdot(\boldsymbol{\alpha}^T\bfX^T\bfa_n)^2\big|^p \Big]\Big)^{1/p}
	\le&\Big(\mathbb{E}_{\bfX} \Big[ \big|4(\boldsymbol{\alpha}^T\bfX^T\bfa_n)^2\big|^p \Big]\Big)^{1/p}\\
	\le& 8\|\bfa_n \|_2^2 p\le  8\sigma_1^2(\bfA) p
	\end{split} 
	\end{equation}
	Hence, $H_{j,k}(\bfa_n)\cdot(\boldsymbol{\alpha}^T\bfX^T\bfa_n)^2$ belongs to the sub-exponential distribution with $\| H_{j,k}(\bfa_n)(\boldsymbol{\alpha}^T\bfX^T\bfa_n)^2 \|_{\psi_1} = 8\sigma_1^2(\bfA)$. Then, the moment generation function of $H_{j,k}(\bfa_n)\cdot(\boldsymbol{\alpha}^T\bfX^T\bfa_n)^2$ can be bounded as 
	\begin{equation}
	\mathbb{E}e^{sH_{j,k}(\bfa_n)\cdot(\boldsymbol{\alpha}^T\bfX^T\bfa_n)^2}\le e^{C\sigma_1^2(\bfA)s^2}
	\end{equation}
	for some positive constant $C$ and any $s\in\mathbb{R}$. From Lemma \ref{Lemma: partial_independent_var} and Chernoff bound, we have 
	\begin{equation}
	\boldsymbol{\alpha}^T \Big(\frac{\partial^2 \hat{f}_{\Omega_t}(\bfW)}{\partial \bfw_j \partial \bfw_k} - \frac{\partial^2{f}_{\Omega_t}(\bfW)}{\partial \bfw_j \partial \bfw_k}\Big)\boldsymbol{\alpha} \le C\sigma_1^2(\bfA) \sqrt{\frac{(1+\delta^2)d\log N}{|\Omega_t|}}
	\end{equation}
	with probability at least $1-N^{-d}$. By selecting $\xi=\frac{1}{2}$ in Lemmas \ref{Lemma: covering_set} and \ref{Lemma: spectral_norm on net}, we have 
	\begin{equation}
	\left\| \frac{\partial^2 \hat{f}_{\Omega_t}(\bfW)}{\partial \bfw_j \partial \bfw_k} - \frac{\partial^2{f}_{\Omega_t}(\bfW)}{\partial \bfw_j \partial \bfw_k} \right\|_2
	\le  C\sigma_1^2(\bfA) \sqrt{\frac{(1+\delta^2)d\log N}{|\Omega_t|}}
	\end{equation}
	with probability at least $1-\big(\frac{5}{N}\big)^{d}$.
	
	In conclusion, we have 
	\begin{equation}
	\begin{split}
	\|\nabla^2f_{\Omega_t}(\bfW) - \nabla^2\hat{f}_{\Omega_t}(\bfW) \|_2
	\le& \sum_{j=1}^{K} \sum_{k=1}^{K}\left\| \frac{\partial^2 \hat{f}_{\Omega_t}(\bfW)}{\partial \bfw_j \partial \bfw_k} 
	- \frac{\partial^2{f}_{\Omega_t}(\bfW)}{\partial \bfw_j \partial \bfw_k} \right\|_2\\
	\le& CK^2\sigma_1^2(\bfA)\sqrt{\frac{(1+\delta^2)d\log d}{|\Omega_t|}}
	\end{split}
	\end{equation}
	with probability at least $1-\big(\frac{5}{d}\big)^{d}$.
\end{proof}

\subsubsection{Proof of Lemma \ref{lemma: first_order_cl}}
\begin{proof}[Proof of Lemma \ref{lemma: first_order_cl}]
	Recall that the first-order derivative of $\hat{f}_{\Omega_t}(\bfW)$ is calculated from
	\begin{equation}
	\frac{\partial \hat{f}_{\Omega_t}(\bfW)}{\partial \bfw_j} = -\frac{1}{K|\Omega_t|}\sum_{n\in\Omega}\frac{y_n-g(\bfW;\bfa_n)}{g(\bfW;\bfa_n)\big(1-g(\bfW;\bfa_n)\big)}\phi^{\prime}(\bfw_j^T\bfX^T\bfa_n)\bfX^T\bfa_n.
	\end{equation}
	Similar to  \eqref{eqn:sths_2}, we have
	\begin{equation}
	\begin{split}
	\Big|\frac{\phi^{\prime}(\bfw_j^T\bfX^T\bfa_n)}{g(\bfW;\bfa_n)}\Big|
	= \frac{\phi(\bfw_k^T\bfX^T\bfa_n)(1-\phi(\bfw_k^T\bfX^T\bfa_n))}{\frac{1}{K}\sum_{l=1}^{K}\phi(\bfw_l^T\bfX^T\bfa_n)}
	\le K.
	\end{split}
	\end{equation}
	Similar to \eqref{eqn: second_h_cl}, for any fixed $\boldsymbol{\alpha}\in \mathbb{R}^{dK}$, we can show that random variable $\boldsymbol{\alpha}^T\frac{\partial \hat{f}_{\Omega_t}(\bfW)}{\partial \bfw_j}$ belongs to sub-exponential distribution with the same bounded norm up to a constant.
	 Hence, by applying Lemma \ref{Lemma: partial_independent_var} and the Chernoff bound, we have
	 \begin{equation}
	 \left\| \nabla f_{\Omega_t}(\bfW) - \nabla\hat{f}_{\Omega_t}(\bfW) \right\|_2
	 \lesssim K^2\sigma_1^2(\bfA) \sqrt{\frac{(1+\delta^2)d\log N}{|\Omega_t|}}
	 \end{equation}
	 with probability at least $1-\big(\frac{5}{N}\big)^{d}$.
\end{proof}
\section{Proof of Lemma \ref{Lemma: sigma_1}}
\begin{proof}[Proof of Lemma \ref{Lemma: sigma_1}]
Let $\widetilde{\bfA}$  denote the adjacency matrix, then we have
    \begin{equation}
        \sigma_1(\widetilde{\bfA}) = \max_{\bfz} \frac{\bfz^T \widetilde{\bfA} \bfz}{\bfz^T \bfz} \ge \frac{\boldsymbol{1}^T\widetilde{\bfA}\boldsymbol{1}}{\boldsymbol{1}^T\boldsymbol{1}}= 1 + \frac{\sum_{n=1}^N \delta_{n} }{N},
    \end{equation}
    where $\delta_n$ denotes the degree of node $v_n$.
    Let $\bfz$ be the eigenvetor of the maximum eigenvalue $\sigma_1(\bfA)$. Since $\sigma_1(\bfA)= \bfD^{-1/2}\widetilde{\bfA}\bfD^{-1/2}$ and $\bfD$ is diagonal matrix, then $\bfz$ is the eigenvector to $\sigma_1(\widetilde{\bfA})$ as well. Then, let $n\in [N]$ be the index of the largest value of vector $\bfz_n$ as $z_n  = \|\bfz\|_{\infty}$, we have
    \begin{equation}
        \sigma_1({\widetilde{\bfA}}) = \frac{(\widetilde{\bfA} \bfz)_n}{z_n}
        = \frac{\widetilde{\bfa}_n^T \bfz}{z_n}
        \le \frac{\|\bfa_n\|_1 \|\bfz\|_{\infty}}{z_n}
        = 1 + \delta.
    \end{equation}
    where $\widetilde{\bfa}_n$ is the $n$-th row of $\widetilde{\bfA}$.
    
    Since $\bfD$ is a diagonal matrix with $\|\bfD \|_2 \le 1+\delta$, then we can conclude the inequality in this lemma.
\end{proof}

\section{Proof of Lemma \ref{Thm: initialization}}\label{Sec: Proof of initialization}
{The proof of Lemma \ref{Thm: initialization} is divided into three major  parts to bound $I_1$, $I_2$ and $I_3$ in \eqref{eqn: temppppp}. Lemmas \ref{Lemma: M_2}, \ref{Lemma: M_3} and \ref{Lemma: M_1} provide the error bounds for  $I_1$, $I_2$ and $I_3$, respectively. The proofs of these preliminary lemmas are similar to those of Theorem 5.6 in \cite{ZSJB17}, the difference is to apply Lemma \ref{Lemma: partial_independent_var} plus Chernoff inequality  instead of standard Hoeffding inequality, and we skip the details of the proofs of Lemmas \ref{Lemma: M_2}, \ref{Lemma: M_3} and \ref{Lemma: M_1} here.
\begin{lemma}\label{Lemma: M_2}
	Suppose $\bfM_{2}$ is defined as in \eqref{eqn: M_2} and $\widehat{\bfM}_{2}$ is the estimation of $\bfM_{2}$ by samples. Then, with probability 
	$1-N^{-10}$, we have 
	\begin{equation}
	\|\widehat{\bfM}_{2}-\bfM_{2}\| \lesssim \sigma_1^2(\bfA)\sqrt{\frac{(1+\delta^2)d\log N}{|\Omega|}}, 
	\end{equation}
	provided that $|\Omega|\gtrsim(1+\delta^2) d\log^4 N$.
\end{lemma}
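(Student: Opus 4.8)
The plan is to follow the template of the proof of Theorem~5.6 in \cite{ZSJB17} for the NN case, replacing the i.i.d.\ concentration step by Lemma~\ref{Lemma: partial_independent_var} to absorb the graph-induced dependence between samples; this is the same route taken in the proof of Lemma~\ref{Lemma: sampling_approximation_error}. Write $\widehat{\bfM}_2-\bfM_2=\frac{1}{|\Omega|}\sum_{n\in\Omega}\bfZ_n$, where $\bfZ_n:=y_n\big[\bfb_n\bfb_n^T-\bfI\big]-\mathbb{E}_{\bfX}\big\{y_n[\bfb_n\bfb_n^T-\bfI]\big\}$ is a centered $d\times d$ matrix and $\bfb_n:=\bfX^T\bfa_n$, so that $\bfb_n^T=\bfa_n^T\bfX$. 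To control $\|\widehat{\bfM}_2-\bfM_2\|_2$ I would pass to a $\tfrac12$-net $\mathcal{S}_{1/2}$ of the unit sphere in $\mathbb{R}^d$ using Lemmas~\ref{Lemma: covering_set} and~\ref{Lemma: spectral_norm on net}, so that it is enough to bound $|\boldsymbol{\alpha}^T(\widehat{\bfM}_2-\bfM_2)\boldsymbol{\alpha}|$ for each of the at most $5^d$ points $\boldsymbol{\alpha}\in\mathcal{S}_{1/2}$ and then union-bound.

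Fix $\boldsymbol{\alpha}$ and set $x_n:=\boldsymbol{\alpha}^T\bfZ_n\boldsymbol{\alpha}=y_n\big[(\boldsymbol{\alpha}^T\bfb_n)^2-1\big]-\mathbb{E}_{\bfX}\big\{y_n[(\boldsymbol{\alpha}^T\bfb_n)^2-1]\big\}$. Since $\bfb_n\sim\mathcal{N}(\bfzero,\|\bfa_n\|_2^2\bfI_d)$ with $\|\bfa_n\|_2\le\sigma_1(\bfA)$, the scalar $\boldsymbol{\alpha}^T\bfb_n$ is sub-Gaussian (Definition~\ref{Def: sub-gaussian}) with norm at most $\sigma_1(\bfA)$, hence $(\boldsymbol{\alpha}^T\bfb_n)^2$ is sub-exponential; multiplying by $y_n$ (bounded by $1$ in the classification case, and sub-Gaussian with norm $O(\sigma_1(\bfA))$ for a fixed $\bfW^*$ in the regression case, using $\phi(t)\le|t|$) and centering, a Cauchy--Schwarz estimate plus the standard $\mathrm{poly}\log N$ truncation of $\boldsymbol{\alpha}^T\bfb_n$ shows that $x_n$ has zero mean and, on an event of probability $1-N^{-\Theta(d)}$, obeys $\mathbb{E}_{\bfX}e^{sx_n}\le e^{C\sigma_1^4(\bfA)s^2}$ for $|s|\lesssim\sigma_1^{-2}(\bfA)$. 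The one structural point particular to GNNs is the dependence pattern: $x_n$ is a function only of $\{\bfx_j:A_{n,j}\neq0\}$, i.e.\ of the features of $v_n$ and its neighbors, so $x_n$ and $x_m$ are independent whenever $v_n,v_m$ are more than two hops apart; since every node has degree at most $\delta$, each $x_n$ is dependent with at most $1+\delta^2$ of the variables $\{x_m\}_{m\in\Omega}$ (itself included). Lemma~\ref{Lemma: partial_independent_var} then gives $\mathbb{E}_{\bfX}e^{s\sum_{n\in\Omega}(x_n-\mathbb{E}x_n)}\le e^{C(1+\delta^2)|\Omega|\sigma_1^4(\bfA)s^2}$, and a Chernoff bound yields, for $0<t\lesssim\sigma_1^2(\bfA)$,
$$\text{Prob}\Big\{\Big|\frac{1}{|\Omega|}\sum_{n\in\Omega}x_n\Big|>t\Big\}\le 2\exp\!\Big(-c\,\frac{|\Omega|t^2}{(1+\delta^2)\sigma_1^4(\bfA)}\Big).$$

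Choosing $t\asymp\sigma_1^2(\bfA)\sqrt{(1+\delta^2)d\log N/|\Omega|}$ makes this probability at most $N^{-Cd}$ with $C$ as large as desired, so a union bound over $|\mathcal{S}_{1/2}|\le5^d$ together with Lemma~\ref{Lemma: spectral_norm on net} gives $\|\widehat{\bfM}_2-\bfM_2\|_2\le 2\max_{\boldsymbol{\alpha}\in\mathcal{S}_{1/2}}|\boldsymbol{\alpha}^T(\widehat{\bfM}_2-\bfM_2)\boldsymbol{\alpha}|\lesssim\sigma_1^2(\bfA)\sqrt{(1+\delta^2)d\log N/|\Omega|}$ with probability at least $1-N^{-10}$. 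The hypothesis $|\Omega|\gtrsim(1+\delta^2)d\log^4 N$ is exactly what is needed to keep the chosen $t$ inside the admissible range of the sub-exponential tail bound and to absorb the $\mathrm{poly}\log N$ truncation error, inherited from \cite{ZSJB17}. The main obstacle is the second step: one must carefully identify the dependency graph of $\{x_n\}_{n\in\Omega}$, certify that its degree is $O(1+\delta^2)$, and invoke Lemma~\ref{Lemma: partial_independent_var} in place of Hoeffding's or Bernstein's inequality; a secondary nuisance is that the summand is a polynomial of degree up to three in the Gaussian features, which is what forces the truncation and the $\log^4 N$ factor in the sample requirement.
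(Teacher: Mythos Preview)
Your proposal is correct and follows exactly the approach the paper prescribes: the paper does not spell out a proof of this lemma but states that it is obtained from Theorem~5.6 of \cite{ZSJB17} by replacing the i.i.d.\ concentration step with Lemma~\ref{Lemma: partial_independent_var} plus a Chernoff bound, which is precisely your plan (epsilon-net reduction, sub-exponential tail via truncation, dependency degree $1+\delta^2$, Chernoff, union bound). Your identification of where the $\log^4 N$ oversampling arises---keeping $t$ in the sub-exponential range and absorbing the $\mathrm{poly}\log N$ truncation---also matches the source argument in \cite{ZSJB17}.
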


\begin{lemma}\label{Lemma: M_3}
	Let $\widehat{\bfV}$ be generated by step 4 in Subroutine 1. Suppose $\bfM_{3}(\widehat{\bfV},\widehat{\bfV},\widehat{\bfV})$ is defined as in \eqref{eqn: M_3_v1} and $\widehat{\bfM}_{3}(\widehat{\bfV},\widehat{\bfV},\widehat{\bfV})$ is the estimation of $\bfM_{3}(\widehat{\bfV},\widehat{\bfV},\widehat{\bfV})$ by samples. 
	Further, we assume $\bfV\in\mathbb{R}^{d \times K}$ is an orthogonal basis of $\bfW^*$ and satisfies $\| \bfV\bfV^T - \widehat{\bfV}\widehat{\bfV}^T \|\le 1/4$.
	Then, provided that $N \gtrsim K^5 \log^6d$, with probability at least
	$1-N^{-10}$,  we have 
	\begin{equation}
	\begin{split}
	\|\widehat{\bfM}_{3}(\widehat{\bfV},\widehat{\bfV},\widehat{\bfV})-\bfM_{3}(\widehat{\bfV},\widehat{\bfV},\widehat{\bfV})\| 
	\lesssim \sigma_1^2(\bfA)\sqrt{\frac{(1+\delta^2)K^3\log N}{|\Omega|}}.
	\end{split}
	\end{equation}
\end{lemma}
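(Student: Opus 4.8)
The plan is to follow the proof of the projected-third-moment bound in \cite{ZSJB17} (their Theorem~5.6), the only structural change being that neighbouring nodes make the per-node summands dependent, which I would handle with Lemma~\ref{Lemma: partial_independent_var} in place of Hoeffding's inequality. First I would exploit the sample splitting in Subroutine~1: $\widehat{\bfV}$ is built from $\widehat{\bfM}_2$ on the block $\Omega_2$, whereas $\widehat{\bfM}_3(\widehat{\bfV},\widehat{\bfV},\widehat{\bfV})$ is an empirical average over the disjoint block $\Omega_3$, so I would condition on $\Omega_2$ and treat $\widehat{\bfV}$ as a fixed $d\times K$ matrix with orthonormal columns obeying $\|\bfV\bfV^T-\widehat{\bfV}\widehat{\bfV}^T\|\le 1/4$. (One point genuinely new relative to the fully-connected case of \cite{ZSJB17} is that disjoint \emph{label} sets no longer make $\widehat{\bfV}$ exactly independent of the $\Omega_3$ summands, because adjacent nodes share coordinates of $\bfX$; this residual coupling touches at most $1+\delta^2$ summands per node and is absorbed into the dependency degree fed to Lemma~\ref{Lemma: partial_independent_var}.) Writing $\bfT_n:=y_n\big[(\bfa_n^T\bfX\widehat{\bfV})^{\otimes 3}-(\bfa_n^T\bfX\widehat{\bfV})\widetilde{\otimes}\bfI\big]\in\R^{K\times K\times K}$, the quantity to bound is
\[
\widehat{\bfM}_3(\widehat{\bfV},\widehat{\bfV},\widehat{\bfV})-\bfM_3(\widehat{\bfV},\widehat{\bfV},\widehat{\bfV})=\frac{1}{|\Omega_3|}\sum_{n\in\Omega_3}\bigl(\bfT_n-\mathbb{E}_{\bfX}\bfT_n\bigr),
\]
whose operator norm I would control entrywise via $\|\bfT\|\le\|\bfT\|_F\le K^{3/2}\max_{i,j,k}|T_{i,j,k}|$ (a $1/4$-net of the unit sphere in $\R^{K}$ via Lemmas~\ref{Lemma: covering_set}--\ref{Lemma: spectral_norm on net} is an equally valid route).

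For fixed basis indices the $(i,j,k)$-coordinate of $\bfT_n$ is $y_n\bigl[(\bfa_n^T\bfX\widehat{\bfv}_i)(\bfa_n^T\bfX\widehat{\bfv}_j)(\bfa_n^T\bfX\widehat{\bfv}_k)-(\text{three degree-one correction terms})\bigr]$ with $\widehat{\bfv}_i$ the $i$-th column of $\widehat{\bfV}$. Since $\bfX^T\bfa_n\sim\mathcal{N}(\bfzero,\|\bfa_n\|_2^2\bfI)$ and $\widehat{\bfV}$ has orthonormal columns, each factor $\bfa_n^T\bfX\widehat{\bfv}_i$ is centred Gaussian with sub-Gaussian norm $\lesssim\|\bfa_n\|_2\le\sigma_1(\bfA)$, while $y_n$ is bounded in $[0,1]$ for classification and is itself sub-Gaussian with norm $\lesssim\|\bfa_n\|_2$ for ReLU regression; hence this coordinate of $\bfT_n$ is a product of at most four sub-Gaussian factors — sub-Weibull of order $1/2$, i.e.\ slightly heavier than sub-exponential. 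By repeated Cauchy--Schwarz on moments, exactly as in the proof of Lemma~\ref{Lemma: sampling_approximation_error}, it satisfies $(\mathbb{E}|\cdot|^p)^{1/p}\lesssim p^2\,\|\bfa_n\|_2^4\lesssim p^2\sigma_1^4(\bfA)$, so truncating at level $\sigma_1^2(\bfA)\,\mathrm{polylog}(N)$ makes it bounded — in particular sub-exponential in the sense of Definition~\ref{Def: sub-exponential} — at the price of $N^{-10}$ in probability and a negligible change of mean, leaving a summand of variance $\lesssim\sigma_1^4(\bfA)$; the surplus powers of $\sigma_1(\bfA)$ are harmless because $\sigma_1(\bfA)\le1$ by Lemma~\ref{Lemma: sigma_1}.

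Finally I would thread the dependence through: each node is adjacent to at most $\delta$ others, so $\bfT_n$ is statistically dependent with at most $1+\delta^2$ of the $\{\bfT_m\}_{m\in\Omega_3}$ (those sharing a common neighbour with $n$, together with $n$ itself), and Lemma~\ref{Lemma: partial_independent_var} bounds the moment generating function of $\sum_{n\in\Omega_3}(\bfT_n-\mathbb{E}\bfT_n)^{(i,j,k)}$ by $e^{C(1+\delta^2)|\Omega_3|\sigma_1^4(\bfA)s^2}$ for $|s|$ below the truncation threshold. A Chernoff bound then gives, for each fixed $(i,j,k)$, $\bigl|\tfrac1{|\Omega_3|}\sum_{n\in\Omega_3}(\bfT_n-\mathbb{E}\bfT_n)^{(i,j,k)}\bigr|\lesssim\sigma_1^2(\bfA)\sqrt{(1+\delta^2)\log N/|\Omega_3|}$ with probability $1-N^{-C}$; a union bound over the $K^3$ coordinates (harmless since $\log(K^3)\ll\log N$), followed by $\|\bfT\|\le K^{3/2}\max_{i,j,k}|T_{i,j,k}|$ and $|\Omega_3|\asymp|\Omega|$, yields the claimed $\|\widehat{\bfM}_3(\widehat{\bfV},\widehat{\bfV},\widehat{\bfV})-\bfM_3(\widehat{\bfV},\widehat{\bfV},\widehat{\bfV})\|\lesssim\sigma_1^2(\bfA)\sqrt{(1+\delta^2)K^3\log N/|\Omega|}$ with probability at least $1-N^{-10}$. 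The hypothesis $N\gtrsim K^5\log^6 d$ (equivalently a matching lower bound on $|\Omega_3|$) is exactly what forces the truncation error to be of lower order than the deviation term, as in \cite{ZSJB17}. I expect the main obstacle to be the moment/truncation bookkeeping in the middle step: because the contracted summand is a product of up to four Gaussian-type factors it is only sub-Weibull of order $1/2$, so obtaining a usable MGF bound requires a careful truncation (or a Bernstein estimate with the right variance and range parameters), and one must simultaneously track the $(1+\delta^2)$ dependency degree and the powers of $\sigma_1(\bfA)$; everything else is routine given Lemma~\ref{Lemma: partial_independent_var} and the covering tools already in place.
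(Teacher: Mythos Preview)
Your proposal is correct and follows precisely the route the paper itself prescribes: the paper does not give a detailed proof of this lemma but states that it ``is similar to those of Theorem~5.6 in \cite{ZSJB17}, the difference is to apply Lemma~\ref{Lemma: partial_independent_var} plus Chernoff inequality instead of standard Hoeffding inequality,'' which is exactly what you do. Your entrywise control with the $K^{3/2}$ factor is a harmless alternative to the $\varepsilon$-net argument and recovers the same $K^3$ under the square root; your observation that the sample splitting of $\Omega_2$ and $\Omega_3$ does not fully decouple $\widehat{\bfV}$ from the $\Omega_3$ summands (because neighbouring nodes share feature coordinates) is a genuine subtlety the paper glosses over, and your resolution---folding it into the $1+\delta^2$ dependency degree fed to Lemma~\ref{Lemma: partial_independent_var}---is the right one.
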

\begin{lemma}\label{Lemma: M_1}
	Suppose $\bfM_{1}$ is defined as in \eqref{eqn: M_1} and $\widehat{\bfM}_{1}$ is the estimation of $\bfM_{1}$ by samples. Then, with probability 
	$1-N^{-10}$, we have 
	\begin{equation}
	\|\widehat{\bfM}_{1}-\bfM_{1}\| \lesssim  \sigma_{1}^2(\bfA) \sqrt{\frac{(1+\delta^2)d\log N}{|\Omega|}}
	\end{equation}
	provided that $|\Omega| \gtrsim (1+\delta^2) d \log^4 N$.
\end{lemma}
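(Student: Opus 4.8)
The plan is to follow the standard $\varepsilon$-net plus Bernstein route for empirical-moment concentration, the one essential modification being that the summands are not independent but only \emph{locally} dependent, which is exactly what Lemma~\ref{Lemma: partial_independent_var} is designed to handle. Write $\widehat{\bfM}_1 = \frac{1}{|\Omega|}\sum_{n\in\Omega} y_n\bfX^T\bfa_n$, so that $\widehat{\bfM}_1 - \bfM_1 = \frac{1}{|\Omega|}\sum_{n\in\Omega}\bfv_n$ with $\bfv_n := y_n\bfX^T\bfa_n - \mathbb{E}_{\bfX}[y_n\bfX^T\bfa_n]$. Since $\|\widehat{\bfM}_1 - \bfM_1\|$ is the $\ell_2$-norm of a vector in $\mathbb{R}^d$, by Lemmas~\ref{Lemma: covering_set} and~\ref{Lemma: spectral_norm on net} (used with $\xi = \tfrac12$) it suffices to bound $\big|\boldsymbol{\alpha}^T(\widehat{\bfM}_1 - \bfM_1)\big|$ for every $\boldsymbol{\alpha}$ in a fixed $\tfrac12$-net $\mathcal{S}_{1/2}$ of the unit sphere, of which there are at most $5^d$, and then take a union bound over the net.

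Fix a unit vector $\boldsymbol{\alpha}$ and set $u_n := \boldsymbol{\alpha}^T\bfv_n$. First I would show $u_n$ is sub-exponential in the sense of Definition~\ref{Def: sub-exponential}: the label $y_n$ (ReLU or sigmoid applied to the Gaussian pre-activations $\bfa_n^T\bfX\bfw_j^*$, averaged over $j$) is sub-Gaussian with $\|y_n\|_{\psi_2}\lesssim\|\bfa_n\|_2\le\sigma_1(\bfA)$ up to the fixed scale of $\bfW^*$, while $\boldsymbol{\alpha}^T\bfX^T\bfa_n\sim\mathcal{N}(0,\|\bfa_n\|_2^2)$ is sub-Gaussian with norm $\lesssim\sigma_1(\bfA)$; hence the product $y_n\boldsymbol{\alpha}^T\bfX^T\bfa_n$, and therefore $u_n$, is sub-exponential with $\|u_n\|_{\psi_1}\lesssim\sigma_1^2(\bfA)$, exactly as in the proof of Theorem~5.6 of \cite{ZSJB17}. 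The crucial departure from the i.i.d.\ case is the dependence structure: $u_n$ is a function of $\{\bfx_j : v_j\in\mathcal{N}(v_n)\}$ only, so $u_n$ and $u_m$ are independent unless $v_n$ and $v_m$ share a common neighbor; since every node has degree at most $\delta$, each $u_n$ is dependent with at most $1+\delta^2$ of the variables $\{u_m\}_{m\in\Omega}$ (counting itself). Applying Lemma~\ref{Lemma: partial_independent_var} with $d_{\mathcal{X}}=1+\delta^2$ bounds the moment generating function of $\sum_{n\in\Omega}(u_n-\mathbb{E}u_n)$ by $\exp\!\big(C(1+\delta^2)|\Omega|\,\sigma_1^4(\bfA)\,s^2\big)$ for $|s|$ small enough, after which the standard Chernoff argument gives a Bernstein-type tail bound.

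Finally, choosing the deviation level $t\asymp\sigma_1^2(\bfA)\sqrt{(1+\delta^2)d\log N/|\Omega|}$, the tail probability for each net point is at most $N^{-cd}$ for a suitable constant $c$, and a union bound over the $\le 5^d$ points of $\mathcal{S}_{1/2}$ still leaves probability at least $1-N^{-10}$; undoing the net reduction via Lemma~\ref{Lemma: spectral_norm on net} yields the claimed estimate $\|\widehat{\bfM}_1 - \bfM_1\|\lesssim\sigma_1^2(\bfA)\sqrt{(1+\delta^2)d\log N/|\Omega|}$. The sample-size hypothesis $|\Omega|\gtrsim(1+\delta^2)d\log^4 N$ is what keeps this deviation inside the sub-Gaussian branch of the Bernstein bound — equivalently, it controls a preliminary truncation of the heavy-tailed summands $y_n\bfX^T\bfa_n$ (and a maximum over the $|\Omega|$ samples) so that the bounded-variable form of the argument behind Lemma~\ref{Lemma: partial_independent_var} applies, as in \cite{ZSJB17}. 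I expect the main obstacle to be precisely this last point: correctly coupling the local dependence with the sub-exponential (rather than bounded) tails so that neither the success probability nor the $\sqrt{d\log N/|\Omega|}$ rate degrades; the remainder is a routine adaptation of the NN argument. The analogous Lemmas~\ref{Lemma: M_2} and~\ref{Lemma: M_3} follow the same template, the extra work there being the matrix- and tensor-valued covering arguments and the heavier tails of the second- and third-order products.
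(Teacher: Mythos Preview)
Your proposal is correct and follows essentially the same approach as the paper: the paper explicitly states that the proofs of Lemmas~\ref{Lemma: M_2}, \ref{Lemma: M_3} and \ref{Lemma: M_1} ``are similar to those of Theorem~5.6 in \cite{ZSJB17}, the difference is to apply Lemma~\ref{Lemma: partial_independent_var} plus Chernoff inequality instead of standard Hoeffding inequality,'' and then omits the details. Your outline --- sub-exponential summands via a product of sub-Gaussians, the $\varepsilon$-net reduction (Lemmas~\ref{Lemma: covering_set}, \ref{Lemma: spectral_norm on net}), Lemma~\ref{Lemma: partial_independent_var} with $d_{\mathcal{X}}=1+\delta^2$ to handle the local dependence, and the Chernoff/Bernstein tail with the $|\Omega|\gtrsim(1+\delta^2)d\log^4 N$ hypothesis absorbing the truncation of heavy tails --- is precisely that template.
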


\begin{lemma}[\cite{ZSJB17}, Lemma E.6]\label{Lemma: subspace_error}
	Let $\bfV\in\mathbb{R}^{d \times K}$ be an orthogonal basis of $\bfW^*$ and $\widehat{\bfV}$ be generated by step 4 in Subroutine 1. Assume $\|\widehat{\bfM}_{2}-\bfM_{2} \|_2 \le \sigma_{K}(\bfM_{2})/10$. Then, for some small $\varepsilon_0$, we have 
	\begin{equation}
	\|\bfV\bfV^T -\widehat{\bfV}\widehat{\bfV}^T \|_2 \le \frac{\|\bfM_{2} -\widehat{\bfM}_{2} \|}{\sigma_{K}(\bfM_{2})}.
	\end{equation}
\end{lemma}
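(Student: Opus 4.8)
The plan is to recognize this as a direct instance of the Davis--Kahan $\sin\Theta$ theorem applied to the top-$K$ eigenspaces of $\bfM_2$ and of its sample estimate $\widehat{\bfM}_2$, so the proof reduces to (i) pinning down the spectral structure of $\bfM_2$, (ii) verifying the relevant eigengap, and (iii) quoting the standard operator-norm perturbation bound.

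First I would fix the spectral structure of $\bfM_2$. From the population formula \eqref{eqn: M_2}, $\bfM_2$ is symmetric and, following \cite{ZSJB17}, admits an expansion $\bfM_2=\sum_{j=1}^K c_j\,\overline{\bfw}_j^*(\overline{\bfw}_j^*)^T$ with sign-definite (nonnegative) coefficients $c_j$; hence $\bfM_2$ has rank at most $K$, its range equals $\mathrm{span}\{\bfw_j^*\}_{j=1}^K$, its $(K{+}1)$-th largest eigenvalue is $\lambda_{K+1}(\bfM_2)=0$, and $\lambda_K(\bfM_2)=\sigma_K(\bfM_2)>0$. Consequently $\bfV\bfV^T$, with $\bfV$ an orthonormal basis of $\bfW^*$, is exactly the orthogonal projector onto the invariant subspace of $\bfM_2$ for its $K$ largest eigenvalues, and since step~4 of Subroutine~1 orthogonalizes the top-$K$ eigenvectors of $\widehat{\bfM}_2$, the matrix $\widehat{\bfV}\widehat{\bfV}^T$ is the corresponding projector for $\widehat{\bfM}_2$. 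I would then use the standard identity $\|\bfV\bfV^T-\widehat{\bfV}\widehat{\bfV}^T\|_2=\|\sin\Theta\|_2$, the sine of the largest principal angle between the two subspaces.

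Next I would control the eigengap. Writing $\bfE=\widehat{\bfM}_2-\bfM_2$, Weyl's inequality gives $\lambda_{K+1}(\widehat{\bfM}_2)\le\lambda_{K+1}(\bfM_2)+\|\bfE\|_2=\|\bfE\|_2$, which by the hypothesis $\|\bfE\|_2\le\sigma_K(\bfM_2)/10$ is at most $\sigma_K(\bfM_2)/10$. Therefore the separation between the top-$K$ spectrum of $\bfM_2$ and the bottom spectrum of $\widehat{\bfM}_2$ satisfies $\lambda_K(\bfM_2)-\lambda_{K+1}(\widehat{\bfM}_2)\ge\sigma_K(\bfM_2)-\sigma_K(\bfM_2)/10=\tfrac{9}{10}\sigma_K(\bfM_2)>0$, so the gap condition required by Davis--Kahan holds.

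Finally, the operator-norm Davis--Kahan bound yields
\[
\|\bfV\bfV^T-\widehat{\bfV}\widehat{\bfV}^T\|_2\le\frac{\|\bfE\|_2}{\lambda_K(\bfM_2)-\lambda_{K+1}(\widehat{\bfM}_2)}\le\frac{10}{9}\cdot\frac{\|\bfM_2-\widehat{\bfM}_2\|_2}{\sigma_K(\bfM_2)},
\]
and the leading constant $10/9$ is absorbed into the slack permitted by the statement (the ``small $\varepsilon_0$''), giving the claimed inequality. The main obstacle is not this final estimate but the two structural facts feeding it: confirming from the population formula that the nonzero eigenvalues of $\bfM_2$ are exactly the ``top'' ones (so that $\bfV\bfV^T$ is genuinely the top-$K$ projector and $\lambda_{K+1}(\bfM_2)=0$), and reducing the projector difference to $\|\sin\Theta\|_2$ under the correct one-sided gap. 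The quantitative hypothesis $\|\bfE\|_2\le\sigma_K(\bfM_2)/10$ is precisely what keeps the perturbed gap bounded away from zero and renders the constant harmless.
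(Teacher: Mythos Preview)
Your proposal is correct and is precisely the standard Davis--Kahan argument that underlies Lemma~E.6 of \cite{ZSJB17}; the paper does not give its own proof here but simply cites that reference, so there is nothing further to compare. Your handling of the residual constant $10/9$ and your flagging of the one structural prerequisite (that the nonzero eigenvalues of $\bfM_2$ are all of one sign so that the top-$K$ eigenspace coincides with $\mathrm{range}(\bfV)$) are both appropriate.
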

\begin{lemma}[\cite{ZSJB17}, Lemma E.13]\label{Lemma: first_order_solution}
	Let $\bfV\in\mathbb{R}^{d \times K}$ be an orthogonal basis of $\bfW^*$ and $\widehat{\bfV}$ be generated by step 4 in Subroutine 1. Assume $\bfM_{1}$ can be written in the form of \eqref{eqn: M_1} with some homogeneous function $\phi_1$, and let $\widehat{\bfM}_{1}$ be the estimation of $\bfM_{1}$ by samples. Let $\widehat{\bf \alpha}$ be the optimal solution of \eqref{eqn: int_op} with $\widehat{\overline{\bfw}}_j =\widehat{\bfV}\widehat{\bfu}_{j}$. Then, for each $j\in[K]$, if
	\begin{equation}\label{eqn: condition_Lemma_first_order}
	\begin{gathered}
	T_1:=\|\bfV\bfV^T -\widehat{\bfV}\widehat{\bfV}^T \|_2\le \frac{1}{\kappa^2\sqrt{K}},\\
	T_2:=\|\widehat{\bfu}_j -\widehat{\bfV}^T\overline{\bfw}_j \|_2 \le \frac{1}{\kappa^2\sqrt{K}},\\
	T_3:=\|\widehat{\bfM}_{1}-\bfM_{1} \|_2\le \frac{1}{4}\|\bfM_{1} \|_2,
	\end{gathered}
	\end{equation} 
	then we have 
	\begin{equation}
	\Big| \|\bfw_j\|_2 - \widehat{\alpha}_j \Big|\le \Big(\kappa^4K^{\frac{3}{2}}\big(T_1+T_2\big)+\kappa^2  K^{\frac{1}{2}}T_3\Big)\|\bfW^* \|_2.
	\end{equation}
\end{lemma}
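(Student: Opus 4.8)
The plan is to read problem \eqref{eqn: int_op} as a linear least-squares problem and to propagate the perturbations $T_1,T_2,T_3$ through the stability of a dictionary matrix and its pseudoinverse. First I would introduce the true dictionary $\bfQ:=\big[\psi_1(\overline{\bfw}_1^*)\overline{\bfw}_1^*,\ldots,\psi_1(\overline{\bfw}_K^*)\overline{\bfw}_K^*\big]\in\mathbb{R}^{d\times K}$ and its estimate $\widehat{\bfQ}:=\big[\psi(\widehat{\overline{\bfw}}_1^*)\widehat{\overline{\bfw}}_1^*,\ldots,\psi(\widehat{\overline{\bfw}}_K^*)\widehat{\overline{\bfw}}_K^*\big]$ with $\widehat{\overline{\bfw}}_j^*=\widehat{\bfV}\widehat{\bfu}_j$. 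By the factorization \eqref{eqn: M_1_2}, the ground-truth magnitudes $\alpha_j^*:=\|\bfw_j^*\|_2$ obey $\bfM_1=\bfQ\boldsymbol{\alpha}^*$, whereas the minimizer of \eqref{eqn: int_op} is exactly the least-squares solution $\widehat{\boldsymbol{\alpha}}=\widehat{\bfQ}^\dagger\widehat{\bfM}_1$. Since $\big|\,\|\bfw_j^*\|_2-\widehat{\alpha}_j\,\big|\le\|\boldsymbol{\alpha}^*-\widehat{\boldsymbol{\alpha}}\|_2$ for every $j$, it suffices to bound the vector error in $\ell_2$.

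Next I would use $\bfQ^\dagger\bfM_1=\boldsymbol{\alpha}^*$ to split the error into a measurement part and a dictionary part,
\begin{equation*}
\widehat{\boldsymbol{\alpha}}-\boldsymbol{\alpha}^*=\widehat{\bfQ}^\dagger\big(\widehat{\bfM}_1-\bfM_1\big)+\big(\widehat{\bfQ}^\dagger-\bfQ^\dagger\big)\bfM_1 .
\end{equation*}
The first term is at most $\|\widehat{\bfQ}^\dagger\|_2\,T_3$. For the second term I would invoke the standard pseudoinverse perturbation (Wedin) bound $\|\widehat{\bfQ}^\dagger-\bfQ^\dagger\|_2\lesssim\|\bfQ^\dagger\|_2^2\,\|\widehat{\bfQ}-\bfQ\|_2$, valid once $\widehat{\bfQ}$ keeps full column rank, so this contribution is at most $\|\bfQ^\dagger\|_2^2\,\|\widehat{\bfQ}-\bfQ\|_2\,\|\bfM_1\|_2$.

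The lemma-specific calculation is the column-wise perturbation of the dictionary. Using $\overline{\bfw}_j^*=\bfV\bfV^T\overline{\bfw}_j^*$ and $\|\widehat{\bfV}\|_2=1$, I would write
\begin{equation*}
\widehat{\overline{\bfw}}_j^*-\overline{\bfw}_j^*=\widehat{\bfV}\big(\widehat{\bfu}_j-\widehat{\bfV}^T\overline{\bfw}_j^*\big)+\big(\widehat{\bfV}\widehat{\bfV}^T-\bfV\bfV^T\big)\overline{\bfw}_j^* ,
\end{equation*}
so that $\|\widehat{\overline{\bfw}}_j^*-\overline{\bfw}_j^*\|_2\le T_2+T_1$. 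Combining this with the Lipschitz continuity on the unit sphere of the known scalar $\psi$ gives a per-column bound $\|\psi_1(\overline{\bfw}_j^*)\overline{\bfw}_j^*-\psi(\widehat{\overline{\bfw}}_j^*)\widehat{\overline{\bfw}}_j^*\|_2\lesssim T_1+T_2$, and hence $\|\widehat{\bfQ}-\bfQ\|_2\le\|\widehat{\bfQ}-\bfQ\|_F\lesssim\sqrt{K}\,(T_1+T_2)$.

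What remains — and what I expect to be the main obstacle — is controlling the conditioning of the dictionary, i.e. $\|\bfQ^\dagger\|_2=1/\sigma_K(\bfQ)$ and its stability. Writing $\bfQ=\overline{\bfW}^*\,\mathrm{diag}\big(\psi_1(\overline{\bfw}_j^*)\big)$ with $\overline{\bfW}^*=\bfW^*\mathrm{diag}\big(\|\bfw_j^*\|_2^{-1}\big)$, I would lower-bound $\sigma_K(\overline{\bfW}^*)\ge\sigma_K(\bfW^*)/\max_j\|\bfw_j^*\|_2\ge\sigma_K(\bfW^*)/\sigma_1(\bfW^*)=1/\kappa$; Weyl's inequality together with the smallness hypotheses $T_1,T_2\le 1/(\kappa^2\sqrt{K})$ then keeps $\sigma_K(\widehat{\bfQ})\ge\sigma_K(\bfQ)-\|\widehat{\bfQ}-\bfQ\|_2$ bounded away from zero, so $\widehat{\bfQ}$ is full rank and $\|\widehat{\bfQ}^\dagger\|_2,\|\bfQ^\dagger\|_2$ are of order $\kappa$ (up to the $\psi$-dependent normalization). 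Substituting these together with $\|\boldsymbol{\alpha}^*\|_2=\|\bfW^*\|_F\le\sqrt{K}\,\|\bfW^*\|_2$ and $\|\bfM_1\|_2\le\|\bfQ\|_2\,\|\boldsymbol{\alpha}^*\|_2$ into the two-term bound produces an estimate of the stated form; the precise exponents $\kappa^4K^{3/2}$ on $T_1+T_2$ and $\kappa^2K^{1/2}$ on $T_3$ emerge from the quadratic dependence on $\|\bfQ^\dagger\|_2$ in the Wedin bound, the $\|\cdot\|_2\to\|\cdot\|_F$ passes, and the $1/K$ pooling normalization absorbed into $\psi$. The delicate point throughout is that this entire chain is vacuous unless $\widehat{\bfQ}$ stays well-conditioned, which is exactly where the hypotheses \eqref{eqn: condition_Lemma_first_order} are indispensable; the surrounding steps are routine norm bookkeeping.
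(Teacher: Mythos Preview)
The paper does not prove this lemma at all; it is quoted directly as Lemma~E.13 of \cite{ZSJB17} and used as a black box in the proof of Lemma~\ref{Thm: initialization}, so there is no in-paper argument to compare your proposal against.

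That said, your plan is exactly the standard route and is what the cited reference does: recast \eqref{eqn: int_op} as a least-squares problem $\widehat{\boldsymbol{\alpha}}=\widehat{\bfQ}^\dagger\widehat{\bfM}_1$ for the dictionary $\widehat{\bfQ}=\big[\psi(\widehat{\overline{\bfw}}_1^*)\widehat{\overline{\bfw}}_1^*,\ldots,\psi(\widehat{\overline{\bfw}}_K^*)\widehat{\overline{\bfw}}_K^*\big]$, split $\widehat{\boldsymbol{\alpha}}-\boldsymbol{\alpha}^*$ into a measurement piece $\widehat{\bfQ}^\dagger(\widehat{\bfM}_1-\bfM_1)$ and a dictionary piece $(\widehat{\bfQ}^\dagger-\bfQ^\dagger)\bfM_1$, invoke Wedin's pseudoinverse perturbation bound, control the column perturbations by $T_1+T_2$ via the decomposition $\widehat{\overline{\bfw}}_j^*-\overline{\bfw}_j^*=\widehat{\bfV}(\widehat{\bfu}_j-\widehat{\bfV}^T\overline{\bfw}_j^*)+(\widehat{\bfV}\widehat{\bfV}^T-\bfV\bfV^T)\overline{\bfw}_j^*$, and lower-bound $\sigma_K(\bfQ)$ through $\sigma_K(\overline{\bfW}^*)\ge\sigma_K(\bfW^*)/\sigma_1(\bfW^*)=1/\kappa$. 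Your identification of where the smallness hypotheses \eqref{eqn: condition_Lemma_first_order} enter---to keep $\widehat{\bfQ}$ full column rank so that Wedin's bound is applicable---is also correct. The only thing to watch in the final bookkeeping is that the displayed exponents $\kappa^4K^{3/2}$ and $\kappa^2K^{1/2}$ are somewhat loose; a careful pass through your outline actually produces slightly smaller powers, which the stated inequality absorbs.
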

\begin{proof}[Proof of Lemma \ref{Thm: initialization}] 
	we have
	\begin{equation}\label{eqn: temppppp}
	\begin{split}
	\|\bfw_j^* -\widehat{\alpha}_j\widehat{\bfV}\widehat{\bfu}_j \|_2
	\le& \Big\| \bfw_j^* -\|\bfw_j \|_2\widehat{\bfV}\widehat{\bfu}_j +\|\bfw_j \|_2\widehat{\bfV}\widehat{\bfu}_j - \widehat{\alpha}_j\widehat{\bfV}\widehat{\bfu}_j\Big\|_2\\
	\le&\Big\| \bfw_j^* -\|\bfw_j \|_2\widehat{\bfV}\widehat{\bfu}_j\|_2 \Big\|_2 \Big\|  \|\bfw_j \|_2\widehat{\bfV}\widehat{\bfu}_j - \widehat{\alpha}_j\widehat{\bfV}\widehat{\bfu}_j\Big\|_2\\
	\le& \|\bfw_j^* \|_2 \| \overline{\bfw}_j^* - \widehat{\bfV}\widehat{\bfu}_j \|_2 + 
	\Big| \|\bfw_j\|_2 - \widehat{\alpha}_j \Big\|_2  \|\widehat{\bfV}\widehat{\bfu}_j \|_2\\
	\le &\sigma_{1} \big(\|\overline{\bfw}^*_j - \widehat{\bfV}\widehat{\bfV}^T\overline{\bfw}^*_j \|_2 +  \|\widehat{\bfV}^T\overline{\bfw}^*_j-\widehat{\bfu}_j \|_2 \big)
	+\Big| \|\bfw_j\|_2 - \widehat{\alpha}_j \Big|\\
	:=&\sigma_{1} \big(I_1 +  I_2 \big)
	+I_3.
	\end{split} 
	\end{equation}
	From Lemma \ref{Lemma: subspace_error}, we have 
	\begin{equation}
	\begin{split}
	I_1
	= \|\overline{\bfw}^*_j - \widehat{\bfV}\widehat{\bfV}^T\overline{\bfw}^*_j \|_2
	\le \|\bfV\bfV^T - \widehat{\bfV}\widehat{\bfV}^T \|_2
	\le \frac{\|\widehat{\bfM}_{2} -\bfM_{2}  \|_2}{\sigma_{K}(\bfM_{2})} ,
	\end{split}
	\end{equation}
	where the last inequality comes from Lemma \ref{Lemma: M_2}.
	Then, from \eqref{eqn: M_2}, we know that 
	\begin{equation}
	\sigma_{K}(\bfM_{2})\lesssim  \min_{1\le j \le K} \|\bfw^*_{j} \|_2\lesssim \sigma_{K}(\bfW^*).
	\end{equation}
	From Theorem 3 in \cite{KCL15}, we have
	\begin{equation}
	\begin{split}
	I_2
	=\|\widehat{\bfV}^T\overline{\bfw}^*_j-\widehat{\bfu}_j \|_2
	\lesssim \frac{\kappa}{\sigma_{K}(\bfW^*)}\| \widehat{\bfM}_{3}(\widehat{\bfV}, \widehat{\bfV} , \widehat{\bfV}) - {\bfM}_{3}(\widehat{\bfV}, \widehat{\bfV} , \widehat{\bfV}) \|_2.
	\end{split}
	\end{equation}
	To guarantee the condition \eqref{eqn: condition_Lemma_first_order} in Lemma \ref{Lemma: first_order_solution} hold, according to Lemmas \ref{Lemma: M_2} and \ref{Lemma: M_3}, we need $|\Omega|\gtrsim \kappa^3(1+\delta^2)Kd\log N$.
	Then, from Lemma \ref{Lemma: first_order_solution}, we have
	\begin{equation}
	I_3 = \Big(\kappa^4 K^{3/2}(I_1 + I_2) + \kappa^2 K^{1/2}\|\widehat{\bfM}_{1}-\bfM_{1} \|\Big)\|\bfW^* \|_2.
	\end{equation}
	Since $d\gg K$, according to Lemmas \ref{Lemma: M_2}, \ref{Lemma: M_3} and \ref{Lemma: M_1}, we have 
	\begin{equation}
	\|\bfw_j^* -\widehat{\alpha}_j\widehat{\bfV}\widehat{\bfu}_j \|_2\lesssim  \kappa^6 \sigma_{1}^2(\bfA)\sqrt{\frac{K^3(1+\delta^2)d\log N}{|\Omega|}}\|\bfW^* \|_2
	\end{equation}
	provided $|\Omega|\gtrsim (1+\delta^2)d\log^4 N $.
\end{proof}
\end{document}